%% file: neurips2026.tex
\PassOptionsToPackage{numbers,sort&compress}{natbib}
\documentclass{article}
\usepackage[preprint]{neurips_2026}

\usepackage[utf8]{inputenc}
\usepackage[T1]{fontenc}
\usepackage{microtype}
\usepackage{graphicx}
\usepackage{booktabs}
\usepackage{array}
\usepackage{tabularx}
\usepackage{float}
\usepackage{amsmath}
\usepackage{amssymb}
\usepackage{amsthm}
\usepackage{xcolor}
\usepackage{tikz}
\usetikzlibrary{arrows.meta,positioning,fit,calc,backgrounds}
\usepackage{hyperref}
\usepackage[capitalize,noabbrev]{cleveref}

\definecolor{darkblue}{rgb}{0,0,0.5}
\hypersetup{
  colorlinks=true,
  citecolor=darkblue,
  linkcolor=darkblue,
  urlcolor=darkblue
}

\makeatletter
\renewcommand{\@noticestring}{}
\makeatother

\theoremstyle{plain}
\newtheorem{theorem}{Theorem}
\newtheorem{proposition}[theorem]{Proposition}
\crefname{theorem}{Theorem}{Theorems}
\crefname{proposition}{Proposition}{Propositions}
\theoremstyle{definition}
\newtheorem{definition}[theorem]{Definition}
\newtheorem{assumption}[theorem]{Assumption}
\crefname{assumption}{Assumption}{Assumptions}
\theoremstyle{remark}

\newcommand{\method}{ProxyGuard}
\newcommand{\appref}[1]{Appendix~\ref{#1}}

\title{ProxyGuard: Direct Reliability Inference for\\
Randomized Data Release Mechanisms with Shared Targets}

\author{Dipesh Tharu Mahato\textsuperscript{*} \\
New York University \\[-1pt]
{\small\texttt{dm6259@nyu.edu}}
\And
Pramod Dhungana \\
Queens College \\[-1pt]
{\small\texttt{pramodkumar.dhungana97@qmail.cuny.edu}}}

\begin{document}

\maketitle
\vspace{-0.8em}
\footnotetext{\textsuperscript{*}Corresponding author:
\texttt{dm6259@nyu.edu}. Code and experiment scripts are available at
\url{https://github.com/dipeshbabu/ProxyGuard}.}

\begin{abstract}
Researchers often choose a proxy dataset from many releases, transformations,
or seeds. Search can make an invalid release appear adequate, while one
adequate release does not establish that its generator is reliable. \method{}
controls both errors using prespecified bounded risks and a sealed target set.
Named-release mode corrects for multiplicity and certifies specific releases.
Direct shared-target mode evaluates independent mechanism draws on a common
target, lower-bounds their favorable-score rate, and subtracts a bound on
favorable scores contributed by invalid releases. Conditional on the target,
release scores are independent, yielding a finite-sample mechanism-reliability
guarantee without independent target batches or assumptions on release-level
$p$-value dependence. We show that the mean-only penalty is sharp and derive
a smooth-score certificate with additive target concentration. In a registered three-requirement
study, direct mode raises power from 5.6\% to 64.2\% at reliability 0.95, while
named mode remains stronger under high-signal evidence. Prospective audits span
full-pipeline Rice--TVAE, which retrains on every draw, and a non-tabular text
mechanism.
\end{abstract}

\input{sections/introduction}
\input{sections/related_work}
\input{sections/setup}
\input{sections/method}
\input{sections/experiments}
\input{sections/limitations}
\input{sections/conclusion}

\clearpage
\bibliographystyle{unsrtnat}
\bibliography{bibli}

\clearpage
\appendix
\input{sections/appendix}

\end{document}

%% file: sections/introduction.tex
\section{Introduction}
\label{sec:introduction}

A research group may be unable to share the table used for model development.
It may release synthetic data, an older cohort, or a public stand-in instead.
The receiving group then needs to know whether a workflow developed on that
proxy still supports a specified decision on the target population. Standard
synthetic-data evaluations provide useful measurements, but they do not by
themselves justify this claim \citep{alaa2022faithful,lautrup2025syntheval}.

Two problems arise. An evaluator may screen many generators, privacy settings,
transformations, or seeds and retain the release with the most favorable
observed results. A point threshold applied after that search can accept an
invalid proxy by chance. Even when the selected release is adequate, it may be
an atypically favorable draw from an unreliable
generator. Evidence about one table does not say how often the frozen
mechanism will succeed again.

The downstream specification matters as well. Similar AUC values do not imply
similar calibration, decision cost, or inferential behavior
\citep{vanbreugel2023realerrors,decruyenaere2024inferential}. We therefore
define proxy validity through losses and limits that researchers fix before
opening the target data. A relative requirement compares proxy- and
source-trained procedures on the same target records. An absolute requirement
limits the proxy procedure's target risk. An audit may require either form or
both.

\method{} separates two inferential tasks. Named-release mode tests every
registered requirement on a sealed target set and applies Holm correction
across releases \citep{holm1979simple,angelopoulos2021ltt}. It can identify
particular releases, but an unresolved release adds no positive evidence to a
mechanism claim. Direct shared-target mode answers only the mechanism
question. It scores independent mechanism draws on the same target records
and estimates how often they score favorably. Because a finite target can
make an invalid release appear favorable, direct mode subtracts a false-pass
allowance. The remainder lower-bounds the probability that a fresh release is
truly valid.

\paragraph{Contributions.}
Our contributions are threefold. \textbf{First,} we formulate mechanism-level
reliability when independent randomized releases share one random target. We
derive a simultaneous finite-sample lower bound that separates mechanism-draw
uncertainty from target-induced false-pass contamination, without independent
target batches or assumptions on release-level $p$-value dependence.
\textbf{Second,} we characterize expectation-only contamination control: the
mean false-pass bound permits no uniformly smaller target-side correction. We
then add bounded-sensitivity structure through registered ramp scores and
derive an additive target-concentration certificate. \textbf{Third,} we
characterize empirically when direct mechanism inference and named-release
certification use evidence differently. Registered simulations identify
complementary moderate- and high-signal regimes, while prospective audits show
applicability to full-pipeline neural retraining and a non-tabular mechanism.
Named-release inference, stratification, planning, and provenance support
these contributions rather than add separate novelties. Unlike fixed-configuration risk control,
partial-conjunction counting, and fixed-error reliability demonstration, the
direct certificate controls a target-random false-pass contribution shared
across release evaluations.

At reliability 0.95, direct mode has 64.2\% power versus 5.6\% for named in
the moderate-evidence setting; high-signal evidence reverses the ranking.
Prospective audits cover full-pipeline neural retraining, including Rice--TVAE
\citep{cinar2019rice,xu2019modeling}, and a frozen 20 Newsgroups text mechanism
\citep{lang1995newsweeder}.

\Cref{fig:protocol} summarizes the named-release and direct finite-sample
workflows. Development ends before the sealed target audit begins.

\begin{figure}[H]
\centering
\resizebox{\textwidth}{!}{\input{figs/proxyguard_protocol.tikz}}
\caption{\method{} workflow. Development ends before the target audit opens.
Named-release mode identifies particular releases; direct shared-target mode
uses favorable-score frequency after correcting for invalid false passes.}
\label{fig:protocol}
\end{figure}

%% file: figs/proxyguard_protocol.tikz
\begin{tikzpicture}[
  font=\sffamily\normalsize,
  card/.style={
    rectangle,
    rounded corners=6pt,
    draw,
    line width=0.8pt,
    align=center,
    inner sep=5pt
  },
  claim/.style={
    card,
    draw=blue!62!black,
    fill=blue!7,
    text width=5.75cm,
    minimum height=1.60cm
  },
  pipeline/.style={
    card,
    draw=green!57!black,
    fill=green!11,
    text width=5.75cm,
    minimum height=0.62cm,
    inner sep=3pt,
    font=\sffamily\footnotesize
  },
  sealed/.style={
    card,
    dashed,
    draw=black!46,
    fill=white,
    text width=5.75cm,
    minimum height=0.48cm,
    inner sep=2.5pt,
    font=\sffamily\footnotesize
  },
  audit/.style={
    card,
    draw=orange!72!black,
    fill=orange!11,
    text width=5.75cm,
    minimum height=2.10cm,
    font=\sffamily\footnotesize
  },
  decision/.style={
    card,
    draw=violet!62!black,
    fill=violet!8,
    text width=5.75cm,
    minimum height=2.10cm,
    inner sep=4pt,
    font=\sffamily\footnotesize
  },
  outcome/.style={
    rectangle,
    rounded corners=9pt,
    draw,
    line width=0.8pt,
    align=center,
    text width=1.55cm,
    minimum height=0.44cm,
    inner sep=1.5pt,
    font=\sffamily\footnotesize
  },
  stepnum/.style={
    circle,
    text=white,
    minimum size=0.52cm,
    inner sep=0pt,
    font=\sffamily\bfseries\scriptsize
  },
  steptitle/.style={
    font=\sffamily\bfseries\small,
    anchor=west,
    text=black!78
  },
  arrow/.style={
    -{Stealth[length=2.6mm]},
    line width=0.9pt,
    draw=black!60,
    line cap=round,
    line join=round
  },
  innerarrow/.style={
    -{Stealth[length=2.0mm]},
    semithick,
    draw=black!52,
    line cap=round,
    line join=round
  },
  bus/.style={
    semithick,
    draw=black!48,
    line cap=round,
    line join=round
  }
]

\node[claim] (claimscope) at (-3.70,1.43) {
  \textbf{Register the audit}\\[7pt]
  requirements and limits\\[3pt]
  mechanism and reliability target\\[3pt]
  mode and error allocation
};

\node[pipeline] (sourcefit) at (3.70,2.25) {
  \textbf{Source procedure}\qquad fit and freeze
};
\node[pipeline] (proxyfit) at (3.70,1.43) {
  \textbf{Release procedures}\qquad draw, fit, and freeze
};
\node[sealed] (target) at (3.70,0.61) {
  \textbf{Target remains sealed}\qquad reserve audit records
};

\coordinate (devtop)    at (0.45,2.25);
\coordinate (devmiddle) at (0.45,1.43);
\coordinate (devbottom) at (0.45,0.61);
\draw[bus] (claimscope.east) -- (devmiddle);
\draw[bus] (devtop) -- (devbottom);
\draw[innerarrow] (devtop) -- (sourcefit.west);
\draw[innerarrow] (devmiddle) -- (proxyfit.west);
\draw[innerarrow] (devbottom) -- (target.west);

\node[audit] (paired) at (3.70,-1.75) {
  \textbf{Evaluate every release on one target}\\[7pt]
  relative transfer \quad $\vert$ \quad absolute risk\\[6pt]
  construct the shared release-by-target loss array
};

\node[decision] (family) at (-3.70,-1.75) {
  \textbf{Named releases}\\[-1pt]
  {\scriptsize intersection--union test (IUT)} $\rightarrow$ Holm $\rightarrow$ binomial\\[5pt]
  \textbf{Direct reliability}\\[-1pt]
  score count $-$ false-pass allowance\\[5pt]
  mode and error split fixed in the registry
};

\node[outcome,draw=green!55!black,fill=green!13]
  (validated) at (-5.55,-3.32) {Validated};
\node[outcome,draw=black!46,fill=black!4]
  (unresolved) at (-3.70,-3.32) {Unresolved};
\node[outcome,draw=red!58!black,fill=red!7]
  (violation) at (-1.85,-3.32) {Violation};

\coordinate (busleft)  at (-5.55,-2.98);
\coordinate (busmid)   at (-3.70,-2.98);
\coordinate (busright) at (-1.85,-2.98);
\coordinate (busdrop)  at (-3.70,-2.98);
\draw[bus] (family.south) -- (busdrop);
\draw[bus] (busleft) -- (busright);
\draw[innerarrow] (busleft) -- (validated.north);
\draw[innerarrow] (busmid) -- (unresolved.north);
\draw[innerarrow] (busright) -- (violation.north);

\coordinate (auditbusTop)    at (6.95,2.25);
\coordinate (auditbusMiddle) at (6.95,1.43);
\coordinate (auditbusTarget) at (6.95,0.61);
\coordinate (auditbusEntry)  at (6.95,-1.75);
\draw[bus] (sourcefit.east) -- (auditbusTop);
\draw[bus] (proxyfit.east) -- (auditbusMiddle);
\draw[bus] (target.east) -- (auditbusTarget);
\draw[bus] (auditbusTop) -- (auditbusEntry);
\draw[arrow] (auditbusEntry) -- (paired.east);
\draw[arrow] (paired.west) -- (family.east);

\node[stepnum,fill=blue!65!black] (step1) at (-6.13,3.03) {1};
\node[steptitle] at ($(step1.east)+(0.12cm,0)$) {REGISTER AUDIT};

\node[stepnum,fill=green!50!black] (step2) at (1.27,3.03) {2};
\node[steptitle] at ($(step2.east)+(0.12cm,0)$) {FREEZE AND SEAL};

\node[stepnum,fill=orange!75!black] (step3) at (1.27,-0.10) {3};
\node[steptitle] at ($(step3.east)+(0.12cm,0)$) {AUDIT ONCE};

\node[stepnum,fill=violet!65!black] (step4) at (-6.13,-0.10) {4};
\node[steptitle] at ($(step4.east)+(0.12cm,0)$) {DECIDE};

\end{tikzpicture}

%% file: sections/related_work.tex
\section{Related work}
\label{sec:related_work}

Synthetic-data evaluations separate fidelity, downstream utility, and
empirical privacy \citep{alaa2022faithful,lautrup2025syntheval}. These
measurements can reveal failures, but they do not control the error incurred
after searching over candidate releases. Learn-then-Test converts risk
requirements into simultaneous tests for predictive configurations
\citep{angelopoulos2021ltt}. \method{} applies this logic to the learning
procedure induced by a proxy, then adds a reliability question over future
draws from its release mechanism.

Classical partial-conjunction tests can lower-bound how many releases are
valid without identifying them \citep{benjamini2008screening}. Tail-Simes
needs independence or positive regression dependence on a subset (PRDS),
while tail-Fisher needs independence; conditional
variants gain power under further assumptions \citep{liang2025conditional}.

These methods do not by themselves justify reusing one target sample across
all release tests. Our direct result instead conditions on that shared target,
infers the mean release score, and controls the invalid-release contribution
through a separate target-contamination event. Unlike imperfect-inspection
models, the false-pass contribution here is target-random and shared across
release evaluations rather than a fixed error rate. The named-release mode
retains classical binomial reliability inference \citep{clopper1934confidence}.
The methods differ in their estimands and dependence assumptions. Tail-Simes uses independence or
PRDS to lower-bound a count; imperfect-inspection models fix the error model;
hierarchical generalized linear mixed models posit a random-effects
distribution \citep{breslow1993approximate}. Named \method{} identifies releases
on a shared random target, whereas direct \method{} gives up their identities
for a finite-sample reliability bound.
\appref{app:related_work} reviews connections to reliability demonstration,
imperfect inspection, and generator evaluation.

%% file: sections/setup.tex
\section{What it means for a proxy to support a claim}
\label{sec:setup}

\subsection{Source, proxy, and target}
\label{sec:source_proxy_target}

Let $\mathcal{Q}=\{Q_1,\ldots,Q_M\}$ be a finite set of candidate proxies.
Candidate $Q$ has a declared source table $P_Q$ and target distribution
$T_Q$. Candidates may share both, as when several synthesizers replace one
table, or may belong to different tasks in one correction family. A proxy
may be synthetic, transformed, drawn from another site, or taken from
another time period.

A learning procedure $S$ covers model fitting, calibration, and decision
threshold selection. It produces $f_{P_Q}=S(P_Q)$ and $f_Q=S(Q)$. The
procedure may select from a fixed model library using development data. It
must finish before researchers inspect target audit outcomes.

\begin{assumption}[Independent target audit]
\label{ass:audit}
Researchers register the audit design before target access and draw an i.i.d.\
sample from $T_Q$. The design keeps the complete audit sample independent of
proxy construction, model fitting, calibration, threshold selection, and the
choice of requirements. Researchers may share audit records across releases
and mechanisms.
\end{assumption}

A holdout used to tune a proxy is development data and cannot also justify the
final claim. We introduce fixed-size stratified designs after the core direct
result in \cref{sec:direct_shared_target}.

\subsection{Registered requirements}
\label{sec:registered_requirements}

Let $\mathcal{J}$ index the claims in the audit. A relative-transfer
requirement compares proxy and source procedures on the same target record:
\begin{equation}
\label{eq:paired_regret}
D_{i,Q,j}
=
\ell_j(f_Q,Z_{i,Q})-\ell_j(f_{P_Q},Z_{i,Q}),
\qquad
\Delta_{Q,j}=\mathbb{E}_{T_Q}[D_{i,Q,j}].
\end{equation}
This pairing often removes variation that would appear in two test samples.
Relative transfer alone does not establish that the proxy procedure is
adequate. An absolute-risk requirement instead uses
\[
A_{i,Q,j}=\ell_j(f_Q,Z_{i,Q}),
\qquad
R_{Q,j}=\mathbb E_{T_Q}[A_{i,Q,j}].
\]
An application may register either form or both. Write $X_{i,Q,j}$ for the
chosen per-record quantity, $\mu_{Q,j}$ for its mean, and $\tau_j$ for its
limit. Thus $(X,\mu,\tau)$ is $(D,\Delta,\epsilon)$ for relative transfer
and $(A,R,r)$ for absolute adequacy. The registry must give known bounds
$a_j\leq X_{i,Q,j}\leq b_j$. A loss in $[0,1]$ gives relative-regret bounds
$[-1,1]$ and absolute-risk bounds $[0,1]$.

\begin{definition}[Claim-valid proxy]
\label{def:valid}
Candidate $Q$ is valid for the registered specification when
\begin{equation}
\label{eq:validity}
\mu_{Q,j}<\tau_j
\quad\text{for every }j\in\mathcal{J}.
\end{equation}
\end{definition}

\paragraph{What \method{} does not claim.}
\method{} certifies only the registered losses, limits, and target
distribution. A relative requirement does not establish absolute adequacy,
and an absolute requirement does not establish transfer. Neither result
establishes distributional similarity, causal transport, performance in
unregistered subgroups, or inferential validity for arbitrary downstream
analyses. Researchers must register any additional claim separately and use a
statistic suited to that claim.

\subsection{From one release to a release mechanism}
\label{sec:release_mechanism}

We use \emph{proxy} for the substitute dataset's role in the scientific claim
and \emph{release} for one realized draw from a randomized proxy mechanism.
Let $G_\theta$ denote a fixed release mechanism, including its generator,
privacy setting, preprocessing, and training procedure. For an independent release
$Q\sim G_\theta$, write
\[
A(Q)=
\mathbf{1}\!\left\{
\mu_{Q,j}<\tau_j
\text{ for every }j\in\mathcal J
\right\},
\qquad
\eta_\theta=\Pr_{Q\sim G_\theta}\{A(Q)=1\}.
\]
The claim conditions on the realized source table, registered target,
development split, model library, and frozen source procedure. Reliability
averages only over randomness registered inside $G_\theta$:
\begin{center}
\scriptsize
\setlength{\tabcolsep}{4pt}
\begin{tabularx}{0.96\textwidth}{@{}lX@{}}
\toprule
Registered mechanism choice & Meaning of the certificate \\
\midrule
dataset sampling only & conditional on one fitted generator \\
generator fitting and sampling & generator-training and sampling reliability \\
fitting, sampling, and proxy learning & full-pipeline reliability \\
fixed generator or learner seed & conditional on that development choice \\
hand-selected favorable seeds & does not support the mechanism claim \\
\bottomrule
\end{tabularx}
\end{center}
Every source of randomness to which the future-release claim should generalize
must be redrawn independently inside each mechanism draw; anything fixed is
conditioned upon. Independent draws therefore have independent $A(Q)$ even
when they share one target audit.

\begin{definition}[Reliable release mechanism]
\label{def:mechanism}
For a registered reliability target $\eta_0=1-\rho$, mechanism $G_\theta$ is
reliable when
\[
\eta_\theta>\eta_0.
\]
\end{definition}

This claim is stronger than validating one table, but does not cover a
mechanism retuned after the target audit.

Our audits use Brier loss, clipped normalized log loss, and normalized 5:1
false-negative cost. These lie in $[0,1]$, so paired regrets lie in $[-1,1]$;
the cost ratio is a registered sensitivity choice. AUC remains diagnostic,
because ranking transport needs a U-statistic bound. Subgroup claims require
separate registration and small groups often remain unresolved.

%% file: sections/method.tex
\section{From target losses to reliability}
\label{sec:method}

The two operating modes use different observed summaries of the same latent
release validity. \Cref{tab:inference_objects} fixes the terminology used
throughout this section. In particular, a favorable direct score is not a
release certificate.

\begin{table}[H]
\centering
\scriptsize
\setlength{\tabcolsep}{4pt}
\begin{tabularx}{\textwidth}{@{}lXcX@{}}
\toprule
Object & Meaning & Observed? & Role \\
\midrule
$A(Q)$ & true validity of release $Q$ & no & defines reliability \\
$p_Q$ & evidence against release invalidity & yes & named-release mode \\
$S_\theta(Q,Z)$ & favorable score on the shared target & yes & direct shared-target mode \\
$V_\theta^{\mathrm{Holm}}$ & individually recognized releases & yes & named mechanism bound \\
$\eta_\theta$ & probability that a fresh release is valid & no & mechanism estimand \\
\bottomrule
\end{tabularx}
\caption{Latent and observed objects. ``Recognized,'' ``scores favorably,''
and ``truly valid'' refer to different events.}
\label{tab:inference_objects}
\end{table}

\subsection{Decision language}
\label{sec:decision_language}

At release level, \emph{validated} means that the named-release procedure
rejects the bad-release null. \emph{Violation detected} means that a
separately corrected lower-bound test places at least one risk above its
limit. At mechanism level, the corresponding lower or upper reliability
bound must cross $\eta_0$. We call every other result \emph{unresolved};
failure to validate is not evidence of a violation.

For interpretation, we also report simultaneous requirement bounds, but they
do not replace these corrected decisions. This vocabulary separates three
questions that point estimates alone can blur: whether the data support
validity, whether they support a violation, or whether they support neither.

\subsection{Testing a named release}
\label{sec:named_release}

For candidate release $Q$ and requirement $j$, the bad-release null is
\begin{equation}
\label{eq:component_null}
H_{0,Q,j}:\mu_{Q,j}\geq\tau_j.
\end{equation}
Let $\bar X_{Q,j}$ and $\widehat V_{Q,j}$ be the sample mean and
Bessel-corrected variance on $n=n_Q$ target records. For $n\geq2$, the
default upper bound is
\begin{equation}
\label{eq:eb_bound}
U_{Q,j}(\delta)
=
\bar X_{Q,j}
+
\sqrt{\frac{2\widehat V_{Q,j}\log(2/\delta)}{n}}
+
\frac{7(b_j-a_j)\log(2/\delta)}{3(n-1)}.
\end{equation}
This empirical Bernstein bound follows \citet{maurer2009empirical}. Its
inversion gives
\begin{equation}
\label{eq:component_p}
p_{Q,j}=\inf\{\delta\in(0,1):U_{Q,j}(\delta)<\tau_j\},
\end{equation}
with $p_{Q,j}=1$ if the set is empty. We also provide a Hoeffding version.
\noindent\textbf{Confidence-bound inversion.}
If nonincreasing $U(\delta)$ satisfies
$\Pr_\mu\{U(\delta)<\mu\}\leq\delta$, let
$p=\inf\{\delta:U(\delta)<\tau\}$, with $p=1$ if the set is empty.
For $t<1$ and $\varepsilon\in(0,1-t)$, $p\leq t$ implies
$U(t+\varepsilon)<\tau\leq\mu$, so
$\Pr_\mu(p\leq t)\leq t+\varepsilon$.
Letting $\varepsilon\downarrow0$ proves super-uniformity; $t=1$ is immediate.
A release is invalid when at least one requirement fails. Validation therefore
requires evidence against every component null. The intersection--union test
uses
\begin{equation}
\label{eq:iut_p}
p_Q=\max_{j\in\mathcal J}p_{Q,j}.
\end{equation}
Thus a favorable Brier result cannot compensate for an unresolved cost claim.
We then apply Holm's procedure to the registered release family.

\begin{theorem}[False proxy-validation control]
\label{thm:fwer}
Suppose the target sample is independent of all development and release
choices, researchers fix the candidate releases and requirements before the
audit opens, and each $p_{Q,j}$ is super-uniform under
\cref{eq:component_null}. If $\widehat{\mathcal Q}$ is the set that Holm
rejects at level $\alpha$, then
\[
\Pr\!\left(
\exists Q\in\widehat{\mathcal Q},
\exists j\in\mathcal J:
\mu_{Q,j}\geq\tau_j
\right)
\leq\alpha.
\]
The result permits arbitrary dependence among requirements and releases.
\end{theorem}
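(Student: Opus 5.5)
The argument has three layers. First we show that each release-level $p$-value is valid under its own composite null. Then we apply the standard Holm argument with respect to the set of invalid releases. Finally we observe that neither layer needs any assumption about the dependence among the $p$-values.

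\textbf{Step 1 (super-uniformity of $p_Q$).} Define the set of invalid releases
\[
\mathcal I_0=\{Q:\exists j\in\mathcal J,\ \mu_{Q,j}\geq\tau_j\}.
\]
This set is deterministic given the development data and release choices. By \cref{ass:audit} and the hypothesis that candidates and requirements are fixed before the audit, the target sample is independent of these choices. We therefore work conditionally on them, which makes $\mathcal I_0$ and each $\mu_{Q,j}$ fixed.

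Take $Q\in\mathcal I_0$ and pick any $j^\star$ with $\mu_{Q,j^\star}\geq\tau_{j^\star}$. Since $p_Q=\max_j p_{Q,j}\geq p_{Q,j^\star}$, we get
\[
\Pr(p_Q\leq t)\leq\Pr(p_{Q,j^\star}\leq t)\leq t
\]
for all $t\in[0,1]$. The last inequality is the assumed super-uniformity of $p_{Q,j^\star}$ under \cref{eq:component_null}. For the default choice, the confidence-bound inversion paragraph supplies it via the empirical Bernstein bound \cref{eq:eb_bound}. This is the intersection--union step, and it uses only the marginal law of a single component.

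\textbf{Step 2 (Holm).} Let $m_0=|\mathcal I_0|$ and $M$ the number of releases; if $m_0=0$ the event is empty. Otherwise, consider the first time Holm rejects a member of $\mathcal I_0$. At that step at most $M-m_0$ hypotheses have already been rejected, so the threshold in force is at most $\alpha/m_0$. Hence that rejected $p$-value satisfies $p_Q\leq\alpha/m_0$.

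The event ``some $Q\in\widehat{\mathcal Q}\cap\mathcal I_0$'' is therefore contained in $\bigcup_{Q\in\mathcal I_0}\{p_Q\leq\alpha/m_0\}$. A union bound with Step 1 gives probability at most $m_0\cdot\alpha/m_0=\alpha$. Because the union bound needs no joint structure, the result holds under arbitrary dependence among releases, including the dependence induced by a shared target sample. It also holds across requirements, because Step 1 used only one component.

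\textbf{Step 3 (unconditioning).} The conditional bound $\alpha$ holds for every realization of the development and release choices. Taking expectations yields the unconditional statement. Finally, the event in the theorem is exactly $\widehat{\mathcal Q}\cap\mathcal I_0\neq\emptyset$.

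The main obstacle is the conditioning in Step 1. Super-uniformity is a statement about the target sample with $\mu_{Q,j}$ held fixed. We must therefore justify that the releases, the fitted procedures $f_Q,f_{P_Q}$, and the requirements are measurable with respect to information independent of the target sample. Then the target records remain i.i.d.\ from $T_Q$ after conditioning, and the bound on $X_{i,Q,j}$ lies in the registered interval $[a_j,b_j]$. This is precisely what \cref{ass:audit} and the pre-registration hypothesis provide. Once that is secured, the rest is the textbook Holm argument.
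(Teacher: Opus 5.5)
Your proposal is correct and follows the paper's proof: the intersection--union step bounds $\Pr(p_Q\le t)$ by the law of a single true component null, and Holm's procedure then gives familywise control over the release-level nulls under arbitrary dependence. The only differences are presentational. You prove Holm's bound inline (the first true-null rejection faces a threshold of at most $\alpha/m_0$, followed by a union bound) where the paper simply cites Holm's result, and you make explicit the conditioning on development and release choices that the paper leaves implicit.
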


The proof is in \appref{app:proofs}. Source and proxy procedures may select
different models, calibrators, or thresholds on development data. The theorem
conditions on those completed choices and an untouched target sample.

\subsection{Named-release mechanism inference}
\label{sec:named_mechanism}

Let $\Theta=\{\theta_1,\ldots,\theta_H\}$ be the registered mechanisms. For
mechanism $\theta$, draw $R_\theta$ independent releases without target
feedback and define
\begin{equation}
\label{eq:release_pvalue}
p_{\theta,r}=\max_{j\in\mathcal J}p_{Q_{\theta,r},j}.
\end{equation}
Before target access, choose local levels
$\alpha_{\mathrm R,\theta}$ with
$\sum_\theta\alpha_{\mathrm R,\theta}\leq\alpha_{\mathrm R}$.
Holm testing within each mechanism identifies
\begin{equation}
\label{eq:holm_count}
V_\theta^{\mathrm{Holm}}
=
\sum_{r=1}^{R_\theta}
\mathbf 1\{\text{Holm rejects }p_{\theta,r}\text{ at }
\alpha_{\mathrm R,\theta}\}
\end{equation}
named releases. Equal allocation $\alpha_{\mathrm R}/H$ is the default.

For the bad-mechanism null
$H^{\mathrm{mech}}_{0,\theta}:\eta_\theta\leq\eta_0$, use
\begin{equation}
\label{eq:mechanism_p}
p_\theta^{\mathrm{mech}}
=
\Pr\{B_\theta\geq V_\theta^{\mathrm{Holm}}\},
\qquad
B_\theta\sim\operatorname{Binomial}(R_\theta,\eta_0),
\end{equation}
followed by Holm correction across mechanisms.

\begin{theorem}[False mechanism-validation control]
\label{thm:mechanism}
Suppose the target sample is independent of all development and release
choices; researchers fix the mechanisms, releases, requirements, and
$\eta_0$ before target access; the component $p$-values are valid; and
releases within each mechanism are independent draws. If the local release
levels sum to at most $\alpha_{\mathrm R}$ and the mechanism family uses Holm
at $\alpha_{\mathrm M}$, then
\[
\Pr\!\left(
\exists\text{ validated }\theta:\eta_\theta\leq\eta_0
\right)
\leq\alpha_{\mathrm R}+\alpha_{\mathrm M}.
\]
The audit may share target records across releases and mechanisms.
\end{theorem}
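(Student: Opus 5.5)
We will prove \cref{thm:mechanism} with a union bound over two bad events: $E_{\mathrm R}$, on which some named release is recognized although it is invalid, and $E_{\mathrm M}$, on which the mechanism-level Holm step errs on a ``clean'' count. First define, for each $\theta$, the true valid count $N_\theta=\sum_{r=1}^{R_\theta}A(Q_{\theta,r})$. Let $E_{\mathrm R}$ be the event that for some $\theta$, Holm at level $\alpha_{\mathrm R,\theta}$ rejects some $p_{\theta,r}$ with $A(Q_{\theta,r})=0$. Within mechanism $\theta$ the releases and requirements are fixed before the audit, the target is independent of them, and the component $p$-values are super-uniform. So \cref{thm:fwer}, applied to the release family of $\theta$ at level $\alpha_{\mathrm R,\theta}$, gives a probability of at most $\alpha_{\mathrm R,\theta}$ for that mechanism. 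One point needs care: the releases are random draws, so we apply \cref{thm:fwer} conditionally on the realized releases, using independence of the target from them, and then average. A union bound over $\theta$ then gives $\Pr(E_{\mathrm R})\le\sum_\theta\alpha_{\mathrm R,\theta}\le\alpha_{\mathrm R}$. This step requires no assumption about dependence across mechanisms, even though they share target records.

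On the complement $E_{\mathrm R}^c$, every Holm-recognized release is truly valid, so $V_\theta^{\mathrm{Holm}}\le N_\theta$ for all $\theta$. Next, define the oracle $p$-values $\tilde p_\theta=\Pr\{B_\theta\ge N_\theta\}$, computed with $B_\theta\sim\operatorname{Binomial}(R_\theta,\eta_0)$ independent of everything. The map $k\mapsto\Pr\{B\ge k\}$ is nonincreasing, so on $E_{\mathrm R}^c$ we have $p_\theta^{\mathrm{mech}}\ge\tilde p_\theta$ for every $\theta$. The validity indicators $A(Q_{\theta,r})$ depend only on the release draws, not on the target. They are i.i.d.\ Bernoulli$(\eta_\theta)$ because draws within a mechanism are independent. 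Hence $N_\theta\sim\operatorname{Binomial}(R_\theta,\eta_\theta)$. Under $H^{\mathrm{mech}}_{0,\theta}$ we have $\eta_\theta\le\eta_0$, so $N_\theta$ is stochastically dominated by $B_\theta$. The standard upper-tail $p$-value argument then shows $\tilde p_\theta$ is super-uniform: $\Pr(\tilde p_\theta\le t)\le\Pr_{\eta_0}(\tilde p\le t)\le t$, where the first inequality is monotonicity in $\eta$ and the second is the discrete tail-$p$-value property.

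Let $E_{\mathrm M}$ be the event that Holm at level $\alpha_{\mathrm M}$ applied to $(\tilde p_\theta)$ rejects some true null. Holm controls FWER under arbitrary dependence given super-uniform null $p$-values, so $\Pr(E_{\mathrm M})\le\alpha_{\mathrm M}$. This is what allows the $N_\theta$ to be dependent across mechanisms. Holm's procedure is also monotone: raising $p$-values coordinatewise can only shrink the rejection set. So on $E_{\mathrm R}^c$, any mechanism validated with $\eta_\theta\le\eta_0$ would also be rejected by the oracle procedure, which means $E_{\mathrm M}$ occurs. Therefore the event in the statement is contained in $E_{\mathrm R}\cup E_{\mathrm M}$, and its probability is at most $\alpha_{\mathrm R}+\alpha_{\mathrm M}$.

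The main obstacle is the coupling step. We must justify that the shared target affects only the observed counts $V_\theta^{\mathrm{Holm}}$, never the latent $N_\theta$, and that Holm monotonicity transfers rejections from $p^{\mathrm{mech}}$ to $\tilde p$. I would state the monotonicity of Holm in its $p$-values as a short lemma. That lemma holds because the step-down thresholds are fixed and the sorted order statistics increase coordinatewise.
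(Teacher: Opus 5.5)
Your proposal is correct and takes the same route as the paper's proof. You bound the inner release-level Holm error by $\sum_\theta\alpha_{\mathrm R,\theta}\le\alpha_{\mathrm R}$, use $V_\theta^{\mathrm{Holm}}\le N_\theta$ off that event together with monotonicity of the binomial tail and of Holm to place any false mechanism validation inside the rejection event of an oracle Holm test on the latent counts, and finish with a union bound; your extra details (conditioning on the realized releases, and the stochastic-dominance argument for super-uniformity of the oracle $p$-values) make explicit steps the paper only asserts.
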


The recognized count need not be binomial. Outside the inner error event,
$V_\theta^{\mathrm{Holm}}$ is no larger than the latent number of truly valid
releases, which makes \cref{eq:mechanism_p} conservative. Use named-release
mode when the audit must identify particular releases.

\subsection{Direct shared-target mechanism inference}
\label{sec:direct_shared_target}

Use direct shared-target mode when the audit needs only the mechanism claim.
Its logic has three steps. First, score every independent mechanism draw on
the same target.
Second, lower-bound the conditional probability of a favorable score. Third,
subtract the score contribution that invalid releases can make because the
target sample is finite.

Fix slacks $s_j>0$ before target access and define
\begin{equation}
\label{eq:conditional_score}
S_{\theta,r}
=
\prod_{j\in\mathcal J}
\mathbf 1\!\left\{
\overline X_{\theta,r,j}\leq\tau_j-s_j
\right\},
\qquad
\overline X_{\theta,r,j}=\frac1n\sum_{i=1}^nX_{i,\theta,r,j}.
\end{equation}
Every release uses the same target records $Z_{1:n}$. Conditional on those
records, the scores are independent Bernoulli observations because the
releases are independent draws from $G_\theta$.

The direct correction needs a ceiling $\kappa_\theta$ on the worst-case
probability that an invalid release crosses every stricter empirical cutoff.
Suppose $X_j\in[a_j,b_j]$ and
$a_j<\tau_j-s_j<\tau_j<b_j$. Let $d_{\mathrm B}$ be Bernoulli KL divergence.
If a release is invalid, at least one true mean is at or above its limit.
Hoeffding's bounded-loss Chernoff inequality \citep{hoeffding1963probability}
then supplies the explicit ceiling
\begin{equation}
\label{eq:invalid_score_ceiling}
\sup_{Q:A(Q)=0}\mathbb E_Z[S_\theta(Q,Z)]
\leq
\kappa_\theta
:=
\max_j
\exp\!\left[
-n\,d_{\mathrm B}\!\left(
\frac{\tau_j-s_j-a_j}{b_j-a_j}
\middle\Vert
\frac{\tau_j-a_j}{b_j-a_j}
\right)
\right].
\end{equation}

\paragraph{Interpretation.}
Direct mode observes favorable scores, not valid releases. A favorable score
can come from a valid release or from an invalid release that passes by chance
on the finite target. The following decomposition separates these two sources.
The method lower-bounds total favorable-score mass and subtracts a
high-probability upper bound on invalid false-pass mass.

Formally,
\begin{equation}
\label{eq:score_decomposition}
\underbrace{\pi_\theta(Z)}_{\text{all favorable scores}}
=
\mathbb E_Q[A(Q)S_\theta(Q,Z)\mid Z]
+
\underbrace{\beta_\theta(Z)}_{\text{invalid false-pass mass}}
\leq
\eta_\theta+\beta_\theta(Z),
\end{equation}
where
$\beta_\theta(Z)=\mathbb E_Q[(1-A(Q))S_\theta(Q,Z)\mid Z]$.
Equation~\eqref{eq:invalid_score_ceiling} implies
$\mathbb E_Z[\beta_\theta(Z)]\leq\kappa_\theta$, so Markov's inequality gives
$\beta_\theta(Z)\leq\kappa_\theta/\alpha_{\mathrm Z,\theta}$ outside a
target-side event of probability $\alpha_{\mathrm Z,\theta}$.

Let $\underline\pi_\theta^{\mathrm{CP}}$ be the exact one-sided binomial
lower bound on $\pi_\theta(Z)$ at local level
$\alpha_{\mathrm Q,\theta}$. Rearranging
\cref{eq:score_decomposition} yields
\begin{equation}
\label{eq:direct_reliability_bound}
L_\theta^{\mathrm{dir}}
=
\left[
\underline\pi_\theta^{\mathrm{CP}}
-
\frac{\kappa_\theta}{\alpha_{\mathrm Z,\theta}}
\right]_+.
\end{equation}

Thus, \cref{eq:direct_reliability_bound} is a score-rate lower bound minus a
target-side false-pass allowance.

The certificate separates two uncertainty sources. The level
$\alpha_{\mathrm Q,\theta}$ controls finite mechanism-draw uncertainty through
the binomial lower bound, while $\alpha_{\mathrm Z,\theta}$ controls
false-pass contamination from the shared finite target through the subtraction
term. The theorem allocates these budgets separately and combines their
failure probabilities with a union bound.

\begin{theorem}[Direct reliability inference with a shared target]
\label{thm:conditional_shared_target}
Suppose Assumption~\ref{ass:audit} holds and
$Q_{\theta,1},\ldots,Q_{\theta,R_\theta}\overset{\mathrm{iid}}{\sim}G_\theta$
within each mechanism. Assume also that the registered losses have known bounds. If
$\sum_\theta\alpha_{\mathrm Q,\theta}\leq\alpha_{\mathrm Q}$ and
$\sum_\theta\alpha_{\mathrm Z,\theta}\leq\alpha_{\mathrm Z}$, then
\[
\Pr\!\left(
\eta_\theta\geq L_\theta^{\mathrm{dir}}
\text{ for every }\theta
\right)
\geq1-\alpha_{\mathrm Q}-\alpha_{\mathrm Z}.
\]
The design may share the same target records across all releases and
mechanisms. The result does not assume independence or PRDS among
release-level $p$-values.
\end{theorem}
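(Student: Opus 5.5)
The plan is a union bound over two families of failure events, one on the mechanism-draw side and one on the target side, after which everything follows deterministically from \cref{eq:score_decomposition}. Fix $\theta$. For $n$ target records $Z=Z_{1:n}$, define $\pi_\theta(Z)=\Pr_Q\{S_\theta(Q,Z)=1\mid Z\}$ and $\beta_\theta(Z)$ as in \cref{eq:score_decomposition}. Let $E_{\mathrm Q,\theta}=\{\pi_\theta(Z)<\underline\pi_\theta^{\mathrm{CP}}\}$ and $E_{\mathrm Z,\theta}=\{\beta_\theta(Z)>\kappa_\theta/\alpha_{\mathrm Z,\theta}\}$. Outside both events, \cref{eq:score_decomposition} gives
\[
\eta_\theta\geq\pi_\theta(Z)-\beta_\theta(Z)\geq\underline\pi_\theta^{\mathrm{CP}}-\kappa_\theta/\alpha_{\mathrm Z,\theta}.
\]
Since $\eta_\theta\geq0$, this yields $\eta_\theta\geq L_\theta^{\mathrm{dir}}$. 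Summing failure probabilities over $\theta$ and using the budget constraints then gives the bound $1-\alpha_{\mathrm Q}-\alpha_{\mathrm Z}$.

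\textbf{Step 1: the decomposition.} Because $S\in\{0,1\}$ and $A\in\{0,1\}$, we have $\pi_\theta(Z)=\mathbb E_Q[AS\mid Z]+\beta_\theta(Z)$. Assumption~\ref{ass:audit} makes $Z$ independent of the mechanism draws, so conditioning on $Z$ leaves the law of $Q\sim G_\theta$ unchanged. Hence $\mathbb E_Q[AS\mid Z]\leq\mathbb E_Q[A]=\eta_\theta$. This independence is also what makes $S_{\theta,1},\dots,S_{\theta,R_\theta}$ conditionally i.i.d.\ Bernoulli$(\pi_\theta(Z))$ given $Z$: each score is a fixed measurable function of an independent release $Q_{\theta,r}$ and the common $Z$.

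\textbf{Step 2: the binomial event.} Conditional on $Z$, exact Clopper--Pearson coverage gives $\Pr(E_{\mathrm Q,\theta}\mid Z)\leq\alpha_{\mathrm Q,\theta}$ for every realization of $Z$. Integrating over $Z$ gives the unconditional bound. This is the step where no assumption on dependence among release-level $p$-values is needed: the shared target is absorbed entirely by the conditioning.

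\textbf{Step 3: the target event.} This is the main obstacle. We need $\mathbb E_Z[\beta_\theta(Z)]\leq\kappa_\theta$. By Fubini and the independence of $Q$ and $Z$,
\[
\mathbb E_Z[\beta_\theta(Z)]=\mathbb E_Q\bigl[(1-A(Q))\,\mathbb E_Z[S_\theta(Q,Z)]\bigr].
\]
So it suffices to verify \cref{eq:invalid_score_ceiling} for each fixed invalid $Q$. For such a $Q$, pick a requirement $j$ with $\mu_{Q,j}\geq\tau_j$. Then $\{S=1\}\subseteq\{\overline X_j\leq\tau_j-s_j\}$, and $X_{1,j},\dots,X_{n,j}$ are i.i.d.\ given $Q$ and bounded in $[a_j,b_j]$. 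After rescaling to $[0,1]$, Hoeffding's Chernoff bound gives a lower-tail probability at most $\exp[-n\,d_{\mathrm B}(x\Vert m)]$, where $x$ is the rescaled cutoff and $m\geq$ the rescaled $\tau_j$ is the rescaled true mean. Since $d_{\mathrm B}(x\Vert m)$ is nondecreasing in $m$ for $m\geq x$, the worst case is $m$ equal to the rescaled $\tau_j$. Taking the maximum over $j$ gives $\kappa_\theta$. The measurability of $Q\mapsto\mathbb E_Z[S_\theta(Q,Z)]$ needs a brief remark. Markov's inequality then gives $\Pr(E_{\mathrm Z,\theta})\leq\alpha_{\mathrm Z,\theta}$.

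\textbf{Step 4: union bound.} Apply the union bound over all $\theta$ and both event families. The target events $E_{\mathrm Z,\theta}$ for different $\theta$ may all depend on the same $Z$. They are nevertheless handled by the union bound, which needs no independence, so sharing target records across releases and mechanisms is permitted.
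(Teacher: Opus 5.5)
Your proposal is correct and follows the paper's proof essentially step for step: the Chernoff ceiling on invalid releases, then Fubini plus Markov for the target-side event, conditional Clopper--Pearson coverage for the mechanism-draw event, the decomposition of $\pi_\theta(Z)$, and a union bound over mechanisms and both error families. Your extra details are small clarifications the paper leaves implicit: monotonicity of $d_{\mathrm B}$ in the mean reduces to the boundary case $\mu_{Q,j}=\tau_j$, and $\eta_\theta\geq 0$ handles the positive part.
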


The proof is in \appref{app:proofs}.

\paragraph{Why the contamination penalty is necessary.}
The Markov step may appear conservative, but no smaller dependence on
$\alpha_{\mathrm Z,\theta}$ follows from the mean constraint alone.

\begin{proposition}[Optimality of expectation-only contamination correction]
\label{prop:markov_sharpness}
Let $\mathcal B_\kappa$ contain every random variable $B\in[0,1]$ with
$\mathbb E B\leq\kappa$. For $c\in(0,1]$,
\[
\sup_{B\in\mathcal B_\kappa}\Pr(B\geq c)
=\min\{1,\kappa/c\}.
\]
Consequently, a uniform $(1-\alpha)$ ceiling based only on
$\mathbb E B\leq\kappa$ cannot be smaller than
$\min\{1,\kappa/\alpha\}$, apart from the convention at equality.
\end{proposition}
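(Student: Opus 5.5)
The plan has two halves: an upper bound from Markov's inequality, and a matching lower bound from an explicit two-point extremal distribution. The ``consequently'' clause then follows by reading the equality at level $\alpha$.

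\textbf{Upper bound.} Fix $c\in(0,1]$ and any $B\in\mathcal B_\kappa$. Since $B\geq0$, Markov's inequality gives
\[
\Pr(B\geq c)\leq \frac{\mathbb E B}{c}\leq\frac{\kappa}{c}.
\]
Trivially $\Pr(B\geq c)\leq1$, so the supremum is at most $\min\{1,\kappa/c\}$.

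\textbf{Lower bound.} I will exhibit a maximizer. The natural extremal law puts mass only at $0$ and at the threshold $c$, so no mean is wasted above $c$.
\begin{itemize}
\item If $\kappa\geq c$, take $B\equiv c$. Then $B\in[0,1]$, $\mathbb E B=c\leq\kappa$, and $\Pr(B\geq c)=1$.
\item If $\kappa<c$, take $B=c$ with probability $p=\kappa/c\in[0,1)$ and $B=0$ otherwise. Then $B\in[0,1]$ because $c\leq1$, $\mathbb E B=\kappa$, and $\Pr(B\geq c)=\kappa/c$.
\end{itemize}
In both cases the supremum is attained and equals $\min\{1,\kappa/c\}$. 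This requires $\kappa\geq0$, which is implicit since $B\geq0$; if $\kappa<0$ the class is empty, and I will note that case as excluded.

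\textbf{Consequence for ceilings.} Let $u$ be any deterministic ceiling that depends only on $(\kappa,\alpha)$ and satisfies $\Pr(B\leq u)\geq1-\alpha$ for every $B\in\mathcal B_\kappa$. Suppose, for contradiction, that $u<\min\{1,\kappa/\alpha\}$. I split on the value of $u$.
\begin{itemize}
\item If $u<\kappa$, use the constant $B\equiv\kappa$, which lies in $\mathcal B_\kappa$ when $\kappa\leq1$. Then $\Pr(B>u)=1>\alpha$, a contradiction.
\item Otherwise $u\in[\kappa,\kappa/\alpha)$. Choose $c\in(u,\min\{1,\kappa/\alpha\})$, which is possible since $u<\min\{1,\kappa/\alpha\}$. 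By the construction above there is $B\in\mathcal B_\kappa$ with $\Pr(B\geq c)=\min\{1,\kappa/c\}>\alpha$. Since $c>u$, this gives $\Pr(B>u)>\alpha$, again violating the ceiling property.
\end{itemize}
Hence $u\geq\min\{1,\kappa/\alpha\}$.

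The phrase ``apart from the convention at equality'' covers the boundary case $c=\kappa/\alpha$ exactly. There the extremal $B$ has $\Pr(B\geq c)=\alpha$, so whether a ceiling equal to $\kappa/\alpha$ is admissible depends on using strict versus weak inequality.

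The only mildly delicate step is this boundary and strictness bookkeeping in the consequence. The supremum identity itself is routine.
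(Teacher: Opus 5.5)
Your proof is correct and matches the paper's argument: Markov's inequality gives the upper bound, the constant law (for $c\le\kappa$) and the two-point law on $\{0,c\}$ (for $c>\kappa$) attain it, and a threshold chosen just above any putative smaller ceiling yields the consequence. One small patch is needed in your first case: when $\kappa>1$ the constant $B\equiv\kappa$ is not $[0,1]$-valued, so use $B\equiv 1$ instead (or observe that your second-case choice $c\in(\max\{u,0\},\min\{1,\kappa/\alpha\})$ already works for every $u$ once $\kappa>0$, leaving only the trivial case $\kappa=0$).
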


For $c\leq\kappa$, the constant variable $B=c$ attains probability one. For
$c>\kappa$, placing mass $\kappa/c$ at $c$ and the remaining mass at zero
attains equality. The proof is in \appref{app:proofs}. Direct mode is
informative only when the registered score and target size make its
contamination allowance small.

\paragraph{Smooth structured certificate.}
A registered ramp score supplies structure absent from the mean-only result.
With invalid-release mean ceiling $\kappa_h$, per-record target sensitivity
$\rho_h$, and conditional-mean lower bound $\underline\pi_h$,
Theorem~\ref{thm:smooth_target_concentration} gives
\begin{equation}
\label{eq:smooth_direct_main}
\boxed{
L^{\mathrm{smooth}}
=
\left[
\underline\pi_h-
\min\!\left\{1,\kappa_h+
\rho_h\sqrt{\frac n2\log\frac1{\alpha_{\mathrm Z}}}
\right\}
\right]_+ .}
\end{equation}
Bounded sensitivity thus replaces the multiplicative Markov allowance by an
additive radius; \appref{app:continuous_scores} gives the construction and proof,
and \appref{app:smooth_target_confirmation} gives its confirmation.

\paragraph{Registered stratified targets.}
For fixed stratum sizes and registered weights, Proposition~\ref{prop:stratified_target}
validates both modes, including without-replacement sampling. A balanced
reserve targets an equal-class mixture unless registered weights restore
another prevalence; details are in \appref{app:stratified_targets}.

\subsection{Evidence of unreliability}
\label{sec:unreliability}

The direct certificate in \cref{sec:direct_shared_target} is a one-sided lower
reliability bound. To establish that a mechanism is unreliable, we instead
use corrected named-release violation tests. The opposite-direction null is
$H^{\mathrm{viol}}_{0,Q,j}:\mu_{Q,j}\leq\tau_j$. Reflecting the empirical
Bernstein bound gives
\[
L_{Q,j}(\delta)
=
\bar X_{Q,j}
-
\sqrt{\frac{2\widehat V_{Q,j}\log(2/\delta)}{n}}
-
\frac{7(b_j-a_j)\log(2/\delta)}{3(n-1)},
\qquad
q_{Q,j}=\inf\{\delta:L_{Q,j}(\delta)>\tau_j\}.
\]
Apply Holm at $\beta_{\mathrm R}$ to the registered
release--requirement family. Let $W_\theta$ count releases with at least one
detected violation. Outside the inner error event,
$R_\theta-W_\theta$ upper-bounds the latent number of valid releases. The
mechanism value is
\[
p_\theta^{\mathrm{viol}}
=
\Pr\{B_\theta\leq R_\theta-W_\theta\},
\qquad
B_\theta\sim\operatorname{Binomial}(R_\theta,\eta_0),
\]
followed by Holm at $\beta_{\mathrm M}$.

\begin{theorem}[False mechanism-violation control]
\label{thm:mechanism_violation}
Suppose the target and release conditions of \cref{thm:mechanism} hold and
each $q_{Q,j}$ is super-uniform under its violation null. Then
\[
\Pr\!\left(
\exists\text{ mechanism declared unreliable }\theta:
\eta_\theta\geq\eta_0
\right)
\leq\beta_{\mathrm R}+\beta_{\mathrm M}.
\]
\end{theorem}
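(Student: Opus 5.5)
The argument mirrors the proof of \cref{thm:mechanism}, with the inequalities reversed and $W_\theta$ in place of $V_\theta^{\mathrm{Holm}}$. We introduce two error events and show that a false declaration of unreliability implies one of them.

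\textbf{Inner event.} Let $E_{\mathrm R}$ be the event that, for some mechanism $\theta$, the within-mechanism Holm procedure at $\beta_{\mathrm R}$ rejects a violation null $H^{\mathrm{viol}}_{0,Q,j}$ that is true, that is, with $\mu_{Q,j}\le\tau_j$. Each $q_{Q,j}$ is super-uniform under its null. Holm controls familywise error under arbitrary dependence, so a shared target is allowed. Allocating local levels with $\sum_\theta\beta_{\mathrm R,\theta}\le\beta_{\mathrm R}$ and taking a union bound across mechanisms gives $\Pr(E_{\mathrm R})\le\beta_{\mathrm R}$. This is the same step as in \cref{thm:fwer}.

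On $E_{\mathrm R}^c$, every release counted in $W_\theta$ has some requirement with $\mu_{Q,j}>\tau_j$. Such a release has $A(Q)=0$. Hence the latent valid count $N_\theta=\sum_r A(Q_{\theta,r})$ satisfies $N_\theta\le R_\theta-W_\theta$.

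\textbf{Outer event.} The key point is that the $A(Q_{\theta,r})$ depend only on the release draws and not on the target. The releases are iid from $G_\theta$, so $N_\theta\sim\operatorname{Binomial}(R_\theta,\eta_\theta)$ exactly, whatever the target sample is.

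Define the oracle $p$-value $\tilde p_\theta=F_{R_\theta,\eta_0}(N_\theta)$, where $F$ is the binomial CDF. Suppose $\eta_\theta\ge\eta_0$. Then $N_\theta$ stochastically dominates $\operatorname{Binomial}(R_\theta,\eta_0)$. Hence $\Pr(\tilde p_\theta\le t)\le\Pr(F_{R_\theta,\eta_0}(B)\le t)\le t$ with $B\sim\operatorname{Binomial}(R_\theta,\eta_0)$. The last inequality is the standard super-uniformity of a discrete CDF evaluated at its own variable. So $\tilde p_\theta$ is super-uniform under the violation null $\eta_\theta\ge\eta_0$.

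Holm on the oracle values $\tilde p_\theta$ at $\beta_{\mathrm M}$ therefore falsely rejects some $\theta$ with $\eta_\theta\ge\eta_0$ with probability at most $\beta_{\mathrm M}$. Call this event $E_{\mathrm M}$. Holm needs no independence across mechanisms, so it does not matter that the mechanisms share a target.

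\textbf{Monotonicity.} On $E_{\mathrm R}^c$, $N_\theta\le R_\theta-W_\theta$ gives $\tilde p_\theta\le p^{\mathrm{viol}}_\theta$ for every $\theta$. Holm is monotone: decreasing $p$-values can only enlarge the rejection set. So any mechanism that Holm declares unreliable using the observed $p^{\mathrm{viol}}_\theta$ is also rejected using the oracle $\tilde p_\theta$.

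A false unreliability declaration therefore lies in $E_{\mathrm R}\cup E_{\mathrm M}$. A union bound gives probability at most $\beta_{\mathrm R}+\beta_{\mathrm M}$.

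The step needing the most care is the coupling. The inner error event depends on the shared target, while the binomial law of $N_\theta$ comes from release randomness alone. The argument must not condition on $E_{\mathrm R}^c$, since that could distort the binomial distribution. I would avoid this by bounding the unconditional probabilities $\Pr(E_{\mathrm R})$ and $\Pr(E_{\mathrm M})$ separately, which is exactly why the two budgets add.
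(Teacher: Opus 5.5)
Your proof is correct and follows the paper's argument. Inner Holm control gives $R_\theta-W_\theta\geq N_\theta$ outside an event of probability at most $\beta_{\mathrm R}$; the lower-tail binomial value computed from the latent valid count is super-uniform when $\eta_\theta\geq\eta_0$; monotonicity of the binomial cdf and of Holm, together with a union bound, then gives $\beta_{\mathrm R}+\beta_{\mathrm M}$. Your explicit event-containment step, which bounds the oracle Holm error unconditionally rather than ``when the inner event holds'' as the paper words it, is a slightly cleaner rendering of the same step, and your per-mechanism split of $\beta_{\mathrm R}$ is harmless, since the paper's single Holm at $\beta_{\mathrm R}$ over the release--requirement family gives the same inner bound.
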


Lower and upper reliability bounds use separate directional budgets. If each
direction has total error 0.05, displaying them together gives at least 90\%
joint coverage by a union bound, not a conventional 95\% two-sided interval.

\subsection{Planning and registered mode choice}
\label{sec:planning_mode}

Before target access, researchers fix each family budget and its local
allocations. \Cref{tab:error_budgets} in \appref{app:continuous_scores} maps
the budgets to their error events.

\noindent\fbox{\begin{minipage}{0.96\linewidth}\footnotesize
\textbf{Mode-separation consequence.}
Named identification pays a $\log R$ target-sample term; direct target-side
terms do not grow with $R$ (Theorem~\ref{thm:mode_separation}). Use named mode
for release identities or strong per-release evidence, and direct mode for
aggregate mechanism evidence. Direct is infeasible if contamination exhausts
$1-\eta_0$; expensive draws favor named mode or an unresolved decision.
\end{minipage}}

Researchers choose the mode before target access. Proposition~\ref{prop:hybrid}
permits a preregistered hybrid with split budgets; an uncorrected maximum would
use target outcomes to choose the mode and invalidate the guarantee.
The registry likewise fixes hard versus smooth scoring and all $h_j$;
post-target maximization requires a preregistered simultaneous allocation, or
the stated coverage need not hold.

Both modes report one-sided, not two-sided, reliability bounds.
\appref{app:additional_procedures} covers planning, candidate streams, and
exposure claims; \appref{app:sample_size} gives a sufficient audit-size bound.

%% file: sections/experiments.tex
\section{Experiments}
\label{sec:experiments}

We froze each registry before its confirmatory run. Appendices
\ref{app:mechanism_experiments}, \ref{app:claim_status}, and
\ref{app:reproducibility} record the study map, claim status, hashes, and
commands. Unless stated otherwise, total error is 0.05. We select learners and
thresholds on development data and hold them fixed for the target audit.

\subsection{False validation after search}
\label{sec:false_validation}

\noindent\textbf{Candidate search.}
Across 5{,}000 trials in the 20-candidate, three-requirement study, point
thresholds validate an invalid candidate every time; separate tests give
44.0\% familywise false validation, and Holm gives 1.94\%. Selecting the
smallest-$p$ candidate and reusing its target gives 43.46\%, compared with
3.08\% on a sealed target (\appref{app:release_results}).

\noindent\textbf{Mechanism calibration.}
At $R=50$, audit size 500, and $\eta_0=0.8$, the observed passing fraction
falsely validates 99.70\% of boundary mechanisms. An outer binomial test gives
5.9\%; named inference gives 0.56\% and 99.57\% power at reliability 0.98.
\Cref{fig:mechanism_control} and the rare-group study appear in Appendices
\ref{app:mechanism_experiments} and \ref{app:heldout_regrets}.

\subsection{Direct shared-target versus named-release mode}
\label{sec:mode_comparison}

The primary comparison uses three correlated bounded requirements, continuous
margins, and one target fluctuation shared by every release. Moderate evidence
makes Holm rarely certify an individual release, so direct mode can benefit;
the high-signal design lowers variance and makes named evidence decisive.

We tuned both modes on the same $R=n=1{,}000$ pilot with registered grids and
the same mean-power objective, then froze them before a new-seed confirmation.
\appref{app:mechanism_experiments} gives the grids and deterministic tie rule.

\begin{table}[H]
\centering
\small
\setlength{\tabcolsep}{6pt}
\input{tables/direct_multirequirement}
\caption{Validation rates (\%) over 500 confirmatory repetitions with
$R=n=1{,}000$ and $\eta_0=0.8$. The boundary column measures false
validation; the other columns measure power.}
\label{tab:direct_multirequirement}
\end{table}

\Cref{tab:direct_multirequirement} shows that direct mode raises power from
4.2\% to 56.4\% at reliability 0.90 and from
5.6\% to 64.2\% at 0.95. No direct false validation occurs in 500 boundary
trials (one-sided 95\% Monte Carlo upper bound 0.60\%).

A boundary diagnostic shows why direct mode must subtract the term in
\cref{eq:direct_reliability_bound}: omitting it gives 8.05\% false
validation, while the corrected certificate always abstains
(\appref{app:false_pass_diagnostic}). In a separate one-requirement,
high-reliability stress test, smooth concentration has 86.45\% power at
$\eta=0.99$ versus zero for both Markov certificates, with 0/2{,}000 boundary
errors; all three remain unresolved at $\eta=0.95$
(\appref{app:smooth_target_confirmation}).

\begin{figure}[H]
\centering
\includegraphics[width=0.62\textwidth]{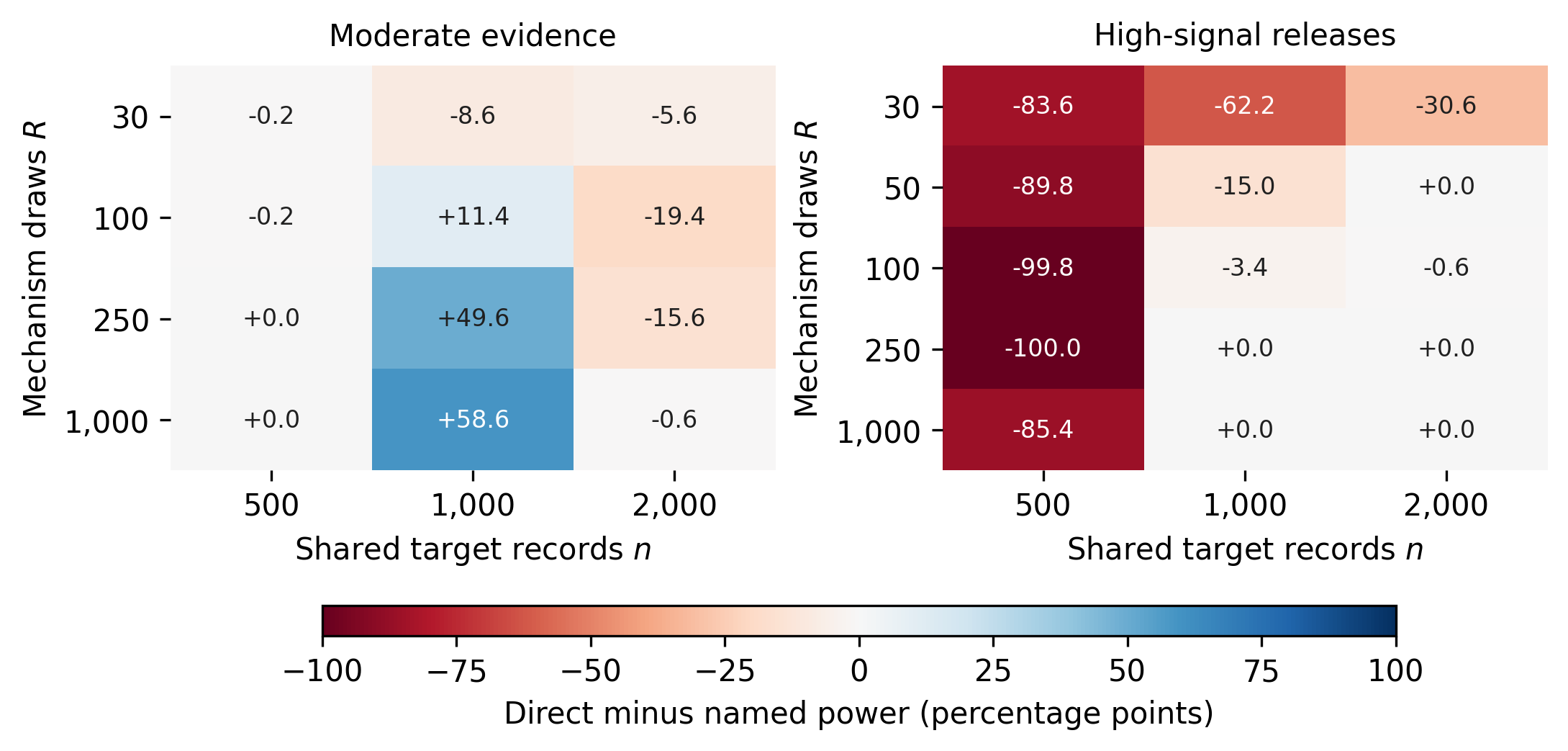}
\caption{Direct minus named power at reliability 0.95 after we planned both
methods on the same pilot. Positive cells favor direct shared-target mode;
negative cells favor named-release mode.}
\label{fig:direct_multirequirement_power}
\end{figure}

\Cref{fig:direct_multirequirement_power} shows a 49.6-point gain at
$R=250,n=1{,}000$, vacuity at $n=500$, and nine ties versus 36 losses in the
high-signal design. Planning details are in \appref{app:mechanism_experiments}.

\subsection{Sealed mechanism studies}
\label{sec:sealed_studies}

Each lightweight-neural draw refits the generator and proxy classifier. Fixed
balanced reserves target an equal class mixture under
Proposition~\ref{prop:stratified_target}, not natural-prevalence risk. The
corrected Covertype moderate bounds are 0.948 named and 0.885 direct; Online
Shoppers remains unresolved \citep{blackard1998covertype,sakar2018onlineshoppers}.
These corrections establish sampling validity, not direct-mode superiority.

\begin{table}[H]
\centering
\scriptsize
\setlength{\tabcolsep}{3.5pt}
\input{tables/neural_outcomes_compact}
\caption{All nine corrected lightweight-neural configurations. Parentheses
give named/direct lower bounds; $U$ is the violation-side upper bound.}
\label{tab:neural_outcomes_compact}
\end{table}

As summarized in \cref{tab:neural_outcomes_compact}, prospective audits cover
full-pipeline Rice--TVAE and a frozen 20 Newsgroups
unigram \citep{cinar2019rice,xu2019modeling,lang1995newsweeder}. Text
high and moderate validate under both modes (named/direct: 0.981/0.974 and
0.991/0.987); degraded is unresolved (0.157/0.070). The mechanism is not a
modern language model, and no real audit yields a direct-only decision
(Appendices~\ref{app:magic_sealed}--\ref{app:text_direct}).

%% file: tables/direct_multirequirement.tex
\begin{tabular}{lrrr}
\toprule
Method &
$\eta=0.80$ &
$\eta=0.90$ &
$\eta=0.95$ \\
\midrule
Named-release Holm & 0.2 & 4.2 & 5.6 \\
Direct shared-target & 0.0 & 56.4 & 64.2 \\
Preregistered hybrid & 0.0 & 2.4 & 57.2 \\
\bottomrule
\end{tabular}

%% file: tables/neural_outcomes_compact.tex
\begin{tabularx}{\textwidth}{@{}lXXX@{}}
\toprule
Dataset & High fidelity & Moderate evidence & Degraded \\
\midrule
CDC & both validate (.938/.917) & both validate (.948/.931) & violation ($U=.280$) \\
Covertype & both validate (.938/.910) & both validate (.948/.885) & violation ($U=.062$) \\
Online Shoppers & unresolved (.449/.363) & unresolved (.002/.017) & violation ($U=.703$) \\
\bottomrule
\end{tabularx}

%% file: sections/limitations.tex
\section{Limits of the claim}
\label{sec:limitations}

\method{} guarantees one registered specification on an independent target;
it does not cover unregistered subgroups, nor does relative transfer imply
absolute adequacy. Balanced claims concern equal class mixtures absent
prevalence-restoring weights. Decisions depend on registered limits, slacks,
reliability target, and error split; we do not test every policy.

The mean-only subtraction can be vacuous but is uniformly sharp
(Proposition~\ref{prop:markov_sharpness}); direct mode also trades target
labels for mechanism draws (\appref{app:additional_procedures}). Smooth
concentration is not uniformly tighter: its positive confirmation is
one-requirement and high-reliability, whereas its multi-requirement allowance
is vacuous in the text audit. Ramp selection and sharper sensitivity bounds
remain open. The text mechanism is a simple unigram, three neural analyses are
corrective, and no real audit yields a direct-only decision.

%% file: sections/conclusion.tex
\section{Conclusion}
\label{sec:conclusion}

\method{} separates release and mechanism certification. Named mode preserves
identities; direct mode subtracts a high-probability upper bound on invalid
releases' favorable-score mass. Under the registered losses, target, and frozen
mechanism law, it lower-bounds a future independent release's validity
probability.

%% file: sections/appendix.tex
\section{Extended related work}
\label{app:related_work}

The appendix follows the audit from theory to evidence.
Appendices~\ref{app:related_work}--\ref{app:proofs} develop related work,
alternative certificates, and proofs; Appendices~\ref{app:sample_size}--
\ref{app:mechanism_experiments} cover sample size, planning, calibration, and
mechanism simulations. Appendices~\ref{app:claim_status}--
\ref{app:standin_audits} report claim status and the real-data audits, while
Appendices~\ref{app:learning_procedure}--\ref{app:reproducibility} document
learning procedures, generators, attacks, transformation screening, and
reproducibility. Readers can follow this order or jump to a cited appendix.
Principal alternatives appear in \appref{app:partial_conjunction},
\appref{app:block_witness}, and \appref{app:component_tests}.

\paragraph{Evaluation and risk control.}
Synthetic tables can preserve common prediction metrics while distorting
minority regions or statistical inference
\citep{vanbreugel2023realerrors,decruyenaere2024inferential}.
Exchangeability tests ask a broader distributional question
\citep{lofstrom2024exchangeability}; our claims concern only the registered
losses. Learn-then-Test and its adaptive variants provide the closest
risk-control foundation \citep{angelopoulos2021ltt,zecchin2025adaptive}.
Empirical Bernstein bounds and Holm's procedure supply our inner tests
\citep{maurer2009empirical,holm1979simple}.

\paragraph{Partial conjunction and reliability.}
A partial-conjunction test asks whether at least $k$ hypotheses in a family
are false \citep{benjamini2008screening}. This matches a mechanism audit that
needs a valid-release count but not release identities. Tail-Simes and
tail-Fisher can be conservative in moderate-signal settings; conditional
tests trade stronger assumptions for power \citep{liang2025conditional}.
Binomial reliability demonstration and acceptance sampling ask whether
repeated units meet a reliability target
\citep{kapur1987optimal,markowski2002misclassification}. These models
typically treat inspection error as a fixed property of a unit-level test.
Hierarchical generalized linear mixed models instead introduce a specified
random-effects distribution for model-based inference
\citep{breslow1993approximate}; they do not supply the simultaneous
finite-sample shared-target certificate under our bounded-loss and
registered-sampling assumptions.
Here, release validity is latent and every inspection reuses one random target
sample. Target reuse couples false-pass errors across releases, and their
conditional rate is unknown. The direct certificate conditions on the shared
target to recover independence over mechanism draws, then controls the random
false-pass contribution on a separate target-side event. This two-axis
construction is the part not supplied by classical reliability
demonstration.

\paragraph{Generators and attacks.}
AIM is a workload-adaptive differentially private mechanism
\citep{mckenna2022aim}; TabDDPM and TabSyn are learned tabular generators
\citep{kotelnikov2023tabddpm,zhang2024tabsyn}. Differential privacy is a
mechanism guarantee \citep{dwork2006calibrating}, whereas DOMIAS, Gen-LRA, and
MIA-EPT measure specific empirical attacks
\citep{vanbreugel2023domias,ward2025genlra,german2025miaept}. A bound for a
fixed attack suite remains conditional on that suite.

\section{Independent-batch partial conjunction}
\label{app:partial_conjunction}

Partial conjunction is a secondary mechanism-only mode when release-level
$p$-values are independent or satisfy a justified PRDS condition. It does not
justify dependence created by reusing one target sample. Sort the release
values as
$p_{\theta,(1)}\leq\cdots\leq p_{\theta,(R_\theta)}$ and define
\begin{equation}
\label{eq:pc_p}
\widehat p^{\mathrm{PC}}_{\theta,k}
=
\min\!\left(
 \min_{j=k}^{R_\theta}
 \frac{R_\theta-k+1}{j-k+1}p_{\theta,(j)},1
\right),
\qquad
\widetilde p^{\mathrm{PC}}_{\theta,k}
=
\max_{\ell\leq k}\widehat p^{\mathrm{PC}}_{\theta,\ell}.
\end{equation}
The tail-Simes value tests
$H_{\theta,k}:K_\theta<k$, where
$K_\theta=\sum_r A(Q_{\theta,r})$. The simultaneous lower count is
\begin{equation}
\label{eq:collective_count}
\widehat K^{\mathrm{LB}}_\theta
=
\max\left\{
k:\widetilde p^{\mathrm{PC}}_{\theta,k}
\leq\alpha_{\mathrm R,\theta}
\right\}.
\end{equation}
Replacing $V_\theta^{\mathrm{Holm}}$ by this count in
\cref{eq:mechanism_p} gives $p_\theta^{\mathrm{mech,coll}}$.

\begin{theorem}[Independent-batch collective validation]
\label{thm:mechanism_collective}
Under the assumptions of \cref{thm:mechanism}, condition on the sampled
releases and fixed development choices. Suppose the true-null release
$p$-values are conditionally super-uniform and independent or PRDS within
each mechanism. If the local levels sum to at most
$\alpha_{\mathrm R}$, then Holm across mechanisms at
$\alpha_{\mathrm M}$ gives
\[
\Pr\!\left(
\exists\text{ validated }\theta:\eta_\theta\leq\eta_0
\right)
\leq
\alpha_{\mathrm R}+\alpha_{\mathrm M}.
\]
\end{theorem}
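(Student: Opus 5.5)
The proof follows the same two-layer structure as \cref{thm:mechanism}. The only new ingredient is that $V_\theta^{\mathrm{Holm}}$ is replaced by the tail-Simes count $\widehat K^{\mathrm{LB}}_\theta$.

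Throughout, we condition on the target sample, the sampled releases, and the fixed development choices. Given these, the latent count $K_\theta=\sum_r A(Q_{\theta,r})$ is fixed, and only the release $p$-values remain random. For each mechanism $\theta$, define the inner error event
\[
E_\theta=\{\widehat K^{\mathrm{LB}}_\theta>K_\theta\}.
\]

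\textbf{Step 1: bound each inner error event.} By the definition in \cref{eq:collective_count}, $E_\theta$ occurs only if $\widetilde p^{\mathrm{PC}}_{\theta,k}\leq\alpha_{\mathrm R,\theta}$ for $k=K_\theta+1$. The monotonization $\widetilde p^{\mathrm{PC}}_{\theta,k}=\max_{\ell\leq k}\widehat p^{\mathrm{PC}}_{\theta,\ell}$ is what guarantees this: if the maximal $k$ with $\widetilde p^{\mathrm{PC}}_{\theta,k}\le\alpha$ exceeds $K_\theta$, then index $K_\theta+1$ also passes.

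Under $H_{\theta,K_\theta+1}$ there are at least $R_\theta-K_\theta$ true nulls. These true-null $p$-values are super-uniform, and the invalid releases are exactly those with $\mu_{Q,j}\ge\tau_j$ for some $j$. The max in \cref{eq:release_pvalue} is super-uniform whenever the release is invalid, because it dominates the offending component $p$-value.

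The Benjamini--Heller tail-Simes statistic for testing $K<k$ equals the Simes combination of the $R_\theta-k+1$ largest $p$-values. For $k=K_\theta+1$, this set of largest values is stochastically no smaller than the Simes statistic over the true-null subset. Under independence or PRDS, Simes is valid at level $\alpha_{\mathrm R,\theta}$ \citep{benjamini2008screening}. Hence $\Pr(E_\theta)\le\alpha_{\mathrm R,\theta}$ conditionally, and so also unconditionally.

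A union bound over mechanisms then gives
\[
\Pr\Big(\bigcup_\theta E_\theta\Big)\le\sum_\theta\alpha_{\mathrm R,\theta}\le\alpha_{\mathrm R}.
\]

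\textbf{Step 2: the outer binomial layer.} On $\bigcap_\theta E_\theta^c$ we have $\widehat K^{\mathrm{LB}}_\theta\le K_\theta$. Since the binomial tail $v\mapsto\Pr\{B_\theta\ge v\}$ is nonincreasing in $v$, this gives
\[
p_\theta^{\mathrm{mech,coll}}\ge\bar p_\theta:=\Pr\{B_\theta\ge K_\theta\}\quad\text{(evaluated at the realized }K_\theta).
\]
Unconditionally, the releases are i.i.d.\ draws from $G_\theta$, and $A(Q)$ does not depend on the target. So $K_\theta\sim\operatorname{Binomial}(R_\theta,\eta_\theta)$, with independence across mechanisms not needed.

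When $\eta_\theta\le\eta_0$, $K_\theta$ is stochastically dominated by $B_\theta$. The standard discrete argument then shows that $\bar p_\theta$ is super-uniform. Holm applied to the oracle values $\bar p_\theta$ controls the FWER at $\alpha_{\mathrm M}$ under arbitrary dependence. Holm is also monotone: replacing p-values by larger ones cannot create new rejections. Therefore, on $\bigcap_\theta E_\theta^c$, any falsely validated mechanism would also be a false Holm rejection for the oracle values. That event has probability at most $\alpha_{\mathrm M}$.

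\textbf{Step 3: combine.} A final union bound gives the total error bound $\alpha_{\mathrm R}+\alpha_{\mathrm M}$.

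The main obstacle is Step 1, specifically justifying tail-Simes validity for the \emph{composite} null $K_\theta<k$ when only the true-null subset is assumed PRDS or independent. We handle it by bounding the tail statistic by the Simes statistic over true nulls. This requires the Simes weights $(R-k+1)/(j-k+1)$ to match a Simes test on a set of size $R-k+1$ that contains the true-null set; when there are more than $R-k+1$ nulls, we take the subset of the largest true-null $p$-values. Every other step reuses the argument already given for \cref{thm:mechanism}.
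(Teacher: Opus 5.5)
Your proposal is correct and follows the paper's proof. You bound the inner event $\{\widehat K^{\mathrm{LB}}_\theta>K_\theta\}$ through the tail-Simes test at $k=K_\theta+1$, union-bound over mechanisms, and reuse the monotone oracle-Holm binomial argument of \cref{thm:mechanism}; your comparison of the $R_\theta-K_\theta$ largest $p$-values with the true-null order statistics just unpacks the Benjamini--Heller validity that the paper cites. One slip to fix: do not condition on the target sample, because the audit-batch randomness is what makes the invalid-release $p$-values super-uniform, so condition only on the sampled releases and development choices, as the paper does.
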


\begin{proposition}[Collective-count dominance]
\label{prop:collective_dominance}
At the same local level,
$\widehat K_\theta^{\mathrm{LB}}\geq
V_\theta^{\mathrm{Holm}}$ for every mechanism in every run.
\end{proposition}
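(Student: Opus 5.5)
The plan is a direct, deterministic comparison of the two counts. Nothing probabilistic is needed. I fix a mechanism $\theta$, write $R=R_\theta$ and $\alpha=\alpha_{\mathrm R,\theta}$, and work with the sorted release values $p_{\theta,(1)}\le\cdots\le p_{\theta,(R)}$ from \cref{eq:release_pvalue}. If $V_\theta^{\mathrm{Holm}}=0$, the claim only needs $\widehat K^{\mathrm{LB}}_\theta\ge 0$. I take that as the convention that the maximum in \cref{eq:collective_count} over an empty set equals $0$, which is the natural reading of a lower count.

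First I recall the step-down structure of Holm. With $V=V_\theta^{\mathrm{Holm}}\ge1$, Holm rejects exactly the releases carrying the $V$ smallest values. Moreover, for every $i\le V$ the stopping rule guarantees
\[
p_{\theta,(i)}\le \frac{\alpha}{R-i+1},
\]
because the procedure reaches rank $V$ only if each earlier threshold comparison succeeded. Ties in the ordering do not matter: Holm's decision depends only on the sorted values, and the tail-Simes quantities in \cref{eq:pc_p} are also functions of the sorted values alone.

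Next I bound $\widehat p^{\mathrm{PC}}_{\theta,k}$ for each $k\le V$. In the inner minimum of \cref{eq:pc_p} I choose the single term $j=k$, which gives
\[
\widehat p^{\mathrm{PC}}_{\theta,k}\le \frac{R-k+1}{1}\,p_{\theta,(k)}\le \alpha,
\]
by the Holm inequality at $i=k$. Since this holds for every $\ell\le k\le V$, the monotonized value satisfies $\widetilde p^{\mathrm{PC}}_{\theta,k}=\max_{\ell\le k}\widehat p^{\mathrm{PC}}_{\theta,\ell}\le\alpha$. In particular $\widetilde p^{\mathrm{PC}}_{\theta,V}\le\alpha$, so the definition \cref{eq:collective_count} gives $\widehat K^{\mathrm{LB}}_\theta\ge V=V_\theta^{\mathrm{Holm}}$.

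The argument uses only the realized $p$-values, so the inequality holds for every mechanism in every run, as the proposition states. The only real obstacle is bookkeeping. I must make sure the Holm thresholds index the same ordered values as \cref{eq:pc_p}, with denominators $R-i+1$, which matches the multiplier $R-k+1$. I must also make sure both procedures use the same local level $\alpha_{\mathrm R,\theta}$, and I need the empty-set convention for the case $V=0$.
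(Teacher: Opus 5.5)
Your proposal is correct and follows the paper's proof essentially step for step. Holm's step-down rule gives $p_{\theta,(k)}\le\alpha_{\mathrm R,\theta}/(R_\theta-k+1)$ for every $k\le V_\theta^{\mathrm{Holm}}$; the $j=k$ term of the tail-Simes minimum then bounds each $\widehat p^{\mathrm{PC}}_{\theta,k}$ by $\alpha_{\mathrm R,\theta}$; and monotonization gives $\widetilde p^{\mathrm{PC}}_{\theta,V}\le\alpha_{\mathrm R,\theta}$. You also state the empty-set convention for the case $V_\theta^{\mathrm{Holm}}=0$, which the paper leaves implicit.
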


\section{Stratified target designs}
\label{app:stratified_targets}

For fixed stratum sizes $n_g$ and registered target weights $w_g$, use
$\overline X_{\theta,r,j}=\sum_gw_g\overline X_{\theta,r,j,g}$. The
invalid-release ceiling becomes
\begin{equation}
\label{eq:stratified_ceiling}
\kappa_{\theta}^{\mathrm{strat}}
=
\max_j\exp\!\left\{
-\frac{2s_j^2}{(b_j-a_j)^2\sum_g w_g^2/n_g}
\right\}.
\end{equation}
When $w_g/n_g$ is constant across strata, the sharper Bernoulli--KL ceiling
in \cref{eq:invalid_score_ceiling} remains valid with $n=\sum_gn_g$.

\begin{proposition}[Stratified shared-target extension]
\label{prop:stratified_target}
Under a registered fixed-size stratified design, the target sample remains
independent of all development and release choices. Each stratum supplies
either i.i.d.\ records or a simple random sample without replacement. Weighted
Hoeffding inversion gives valid named-release component $p$-values.
Replacing the empirical means and $\kappa_\theta$ by their weighted forms in
\cref{eq:stratified_ceiling} preserves the simultaneous conclusion of
\cref{thm:conditional_shared_target}.
\end{proposition}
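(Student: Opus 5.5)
I would prove Proposition~\ref{prop:stratified_target} by rerunning the proofs of \cref{thm:fwer} and \cref{thm:conditional_shared_target} with the weighted empirical mean in place of the plain one. The proof reduces to two concentration facts for $\overline X_{\theta,r,j}=\sum_g w_g\overline X_{\theta,r,j,g}$. The first is a weighted Hoeffding tail for a fixed release. The second is conditional independence of release scores given the shared stratified target. Everything downstream of these facts (inversion, IUT, Holm, Clopper--Pearson, Markov, union bound) uses only super-uniformity, conditional independence, and a mean ceiling, so it carries over verbatim.

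\textbf{Step 1 (weighted Hoeffding).} For a fixed release $Q$ and requirement $j$, the estimator is unbiased for the registered mixture mean: $\mathbb E\,\overline X_{Q,j}=\sum_g w_g\mu_{Q,j,g}=\mu_{Q,j}$. Within stratum $g$ with i.i.d.\ records, the term $w_g\overline X_{Q,j,g}$ is an average of $n_g$ independent variables, each with range $w_g(b_j-a_j)/n_g$. Strata are independent, so Hoeffding's lemma applied record by record gives
\[
\log \mathbb E\, e^{\lambda(\overline X-\mu)}\le \frac{\lambda^2(b_j-a_j)^2}{8}\sum_g \frac{w_g^2}{n_g}.
\]
For sampling without replacement, Hoeffding's reduction shows that the mean of a simple random sample is dominated in convex order by the i.i.d.\ mean. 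The same moment-generating-function bound therefore holds stratum by stratum, and the product over independent strata is unchanged. This is the step I expect to need the most care: one must check that the strata are drawn independently and that the convex-order comparison applies to the exponential moment inside each stratum.

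Chernoff optimization then gives $\Pr\{\overline X\le\mu-t\}\le\exp\{-2t^2/[(b_j-a_j)^2\sum_g w_g^2/n_g]\}$ and the matching upper tail. For the special case $w_g/n_g$ constant we have $w_g=n_g/n$, so $\overline X$ is the pooled mean. Hoeffding's Bernoulli--KL bound, which also survives sampling without replacement by the same convex-order argument, then recovers \cref{eq:invalid_score_ceiling}.

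\textbf{Step 2 (named mode).} Inverting the upper-tail bound gives a nonincreasing $U(\delta)$ with $\Pr_\mu\{U(\delta)<\mu\}\le\delta$. The confidence-bound inversion argument then yields super-uniform $p_{Q,j}$. The target is independent of development by registration, so \cref{thm:fwer} and \cref{thm:mechanism} apply unchanged.

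\textbf{Step 3 (direct mode).} For an invalid release, some $j$ has $\mu_{Q,j}\ge\tau_j$. The score requires $\overline X_{Q,j}\le\tau_j-s_j\le\mu_{Q,j}-s_j$, so Step 1 with $t=s_j$ bounds $\mathbb E_Z S_\theta(Q,Z)\le\kappa^{\mathrm{strat}}_\theta$ uniformly over invalid $Q$. Fubini then gives $\mathbb E_Z\beta_\theta(Z)\le\kappa_\theta^{\mathrm{strat}}$. Conditional on the stratified target $Z$, the draws $Q_{\theta,r}$ remain i.i.d.\ and independent of $Z$, so the scores are i.i.d.\ Bernoulli$(\pi_\theta(Z))$. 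The Clopper--Pearson bound then holds conditionally at level $\alpha_{\mathrm Q,\theta}$, and hence unconditionally. Markov's inequality at $\alpha_{\mathrm Z,\theta}$, the decomposition \cref{eq:score_decomposition}, and a union bound over $\theta$ and the two budgets reproduce the proof of \cref{thm:conditional_shared_target} with $\kappa_\theta$ replaced by $\kappa_\theta^{\mathrm{strat}}$.
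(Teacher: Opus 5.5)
Your proposal is correct and follows essentially the same route as the paper's proof. Both arguments use the weighted Hoeffding tail with variance factor $\sum_g w_g^2/n_g$, Hoeffding's with- versus without-replacement comparison within each stratum, inversion to obtain super-uniform named component $p$-values, and the deviation $\overline X_{Q,j}-\mu_{Q,j}\le -s_j$ to get the stratified ceiling. They then apply the Bernoulli--KL bound to non-identically distributed records in the proportional case and rerun the conditioning, Markov, and union-bound argument of \cref{thm:conditional_shared_target}. Your version additionally writes out the moment-generating-function step and the independence-across-strata condition that the paper leaves implicit in its citation.
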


\section{Continuous shared-target scores}
\label{app:continuous_scores}

The audit may replace the hard factor in \cref{eq:conditional_score} with a
registered score in $[0,1]$. Let $c_j=\tau_j-s_j$ and fix
$0<h_j\leq c_j-a_j$. Define
\[
\phi_{j,h}(x)
=
\begin{cases}
1, & x\leq c_j-h_j,\\
(c_j-x)/h_j, & c_j-h_j<x<c_j,\\
0, & x\geq c_j,
\end{cases}
\qquad
S_h(Q,Z)=\prod_j\phi_{j,h}(\overline X_{Q,j}).
\]
Conditional release scores are bounded rather than Bernoulli, so a
Bernoulli--KL inversion lower-bounds their conditional mean. The ramp also
supplies target-side structure that the hard score lacks.

Write $B_j=b_j-a_j$, $u_j(t)=(t-a_j)/B_j$, and
$v_j=(\tau_j-a_j)/B_j$. Set
\begin{align}
\kappa_h
&=
\max_j\frac1{h_j}\int_{c_j-h_j}^{c_j}
\exp\{-n d_{\mathrm B}(u_j(t)\Vert v_j)\}\,dt,
\label{eq:smooth_kappa}\\
\rho_h
&=
\min\!\left\{1,\frac1n\sum_j\frac{B_j}{h_j}\right\},
\qquad
r_h(\alpha_{\mathrm Z})
=
\rho_h\sqrt{\frac n2\log\frac1{\alpha_{\mathrm Z}}}.
\label{eq:smooth_radius}
\end{align}

\begin{theorem}[Smooth shared-target concentration]
\label{thm:smooth_target_concentration}
Suppose the target records are iid and jointly independent of the iid
mechanism draws. Let $\underline\pi_h$ be a one-sided
$(1-\alpha_{\mathrm Q})$ lower confidence bound for the conditional mean of
$S_h$. Then
\[
L^{\mathrm{smooth}}
=
\left[
\underline\pi_h-
\min\{1,\kappa_h+r_h(\alpha_{\mathrm Z})\}
\right]_+
\]
satisfies $\Pr\{\eta\geq L^{\mathrm{smooth}}\}
\geq1-\alpha_{\mathrm Q}-\alpha_{\mathrm Z}$. The statement holds
simultaneously across mechanisms when their local error levels sum to the
declared family budgets.
\end{theorem}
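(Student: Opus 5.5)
The plan is to follow the proof of \cref{thm:conditional_shared_target}, replacing the Markov step by a bounded-differences concentration step. Fix one mechanism and write $\pi_h(Z)=\mathbb E_Q[S_h(Q,Z)\mid Z]$. Because $A(Q)\in\{0,1\}$ and $S_h\in[0,1]$, the same split as in \cref{eq:score_decomposition} gives
\[
\pi_h(Z)=\mathbb E_Q[A(Q)S_h(Q,Z)\mid Z]+\beta_h(Z)\leq \eta+\beta_h(Z),
\qquad
\beta_h(Z)=\mathbb E_Q[(1-A(Q))S_h(Q,Z)\mid Z].
\]
The proof then reduces to two events. The first is $\{\pi_h(Z)\geq\underline\pi_h\}$, which fails with probability at most $\alpha_{\mathrm Q}$. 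The second is $\{\beta_h(Z)\leq \min\{1,\kappa_h+r_h(\alpha_{\mathrm Z})\}\}$, which should fail with probability at most $\alpha_{\mathrm Z}$. On the intersection of the two events, $\eta\geq L^{\mathrm{smooth}}$; the positive part is harmless because $\eta\geq0$.

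\textbf{Step 1 (draw side).} By joint independence, conditional on $Z$ the scores $S_h(Q_r,Z)$ are i.i.d.\ in $[0,1]$ with mean $\pi_h(Z)$. The hypothesis on $\underline\pi_h$ therefore gives conditional coverage at least $1-\alpha_{\mathrm Q}$ for every realization of $Z$. Integrating over $Z$ gives unconditional coverage.

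\textbf{Step 2 (mean of $\beta_h$).} Fix an invalid $Q$, so some $j$ has $\mu_{Q,j}\geq\tau_j$. Since every ramp factor lies in $[0,1]$, the product satisfies $S_h\leq\phi_{j,h}(\overline X_{Q,j})$. The key identity is the layer-cake form
\[
\phi_{j,h}(x)=\frac1{h_j}\int_{c_j-h_j}^{c_j}\mathbf 1\{x\leq t\}\,dt,
\]
which can be checked directly on the three pieces of the ramp. Taking $\mathbb E_Z$ and applying Tonelli, each $\Pr_Z\{\overline X_{Q,j}\leq t\}$ with $t<\tau_j\leq\mu_{Q,j}$ is bounded by the Hoeffding--Chernoff bound $\exp\{-n d_{\mathrm B}(u_j(t)\Vert (\mu_{Q,j}-a_j)/B_j)\}$. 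Because $d_{\mathrm B}(u\Vert\cdot)$ is increasing on arguments above $u$, this is at most $\exp\{-nd_{\mathrm B}(u_j(t)\Vert v_j)\}$. Hence $\mathbb E_Z S_h(Q,Z)\leq\kappa_h$ uniformly over invalid $Q$. A further Fubini step over $Q\sim G_\theta$, justified by the joint independence of $Q$ and $Z$, gives $\mathbb E_Z\beta_h(Z)\leq\kappa_h$.

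\textbf{Step 3 (concentration of $\beta_h$).} Replacing one target record changes each $\overline X_{Q,j}$ by at most $B_j/n$. Each $\phi_{j,h}$ is $1/h_j$-Lipschitz, and a product of $[0,1]$-valued factors changes by at most the sum of the factor changes. So $S_h(Q,\cdot)$, and hence its $Q$-average $\beta_h$, has per-record sensitivity at most $\sum_jB_j/(nh_j)$. It also has sensitivity at most $1$ trivially, so the sensitivity is at most $\rho_h$. McDiarmid's inequality then gives
\[
\Pr\{\beta_h-\mathbb E\beta_h\geq t\}\leq\exp\{-2t^2/(n\rho_h^2)\},
\]
and choosing $t=r_h(\alpha_{\mathrm Z})$ makes the right side equal to $\alpha_{\mathrm Z}$. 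Truncating the resulting bound at $1$ costs nothing, since $\beta_h\leq1$.

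\textbf{Combining and the main obstacle.} A union bound over the two events yields $1-\alpha_{\mathrm Q}-\alpha_{\mathrm Z}$. For simultaneity, apply the argument to each mechanism at its local levels and take a union bound over all $2H$ events, exactly as in \cref{thm:conditional_shared_target}. I expect Step 2 to be the main obstacle. It needs the layer-cake representation of the ramp, a careful monotonicity argument that moves the Chernoff exponent from the true mean $\mu_{Q,j}$ to the limit $\tau_j$, and justification of the Tonelli/Fubini exchanges, which rely on joint measurability of $S_h$ and on the joint independence of releases and target.
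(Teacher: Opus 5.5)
Your proposal is correct and follows essentially the same route as the paper's proof. The layer-cake ramp identity and the bounded-loss Chernoff bound give $\mathbb E_Z\beta_h(Z)\leq\kappa_h$; the uniform per-record sensitivity $\rho_h$ with McDiarmid's inequality gives the additive radius $r_h(\alpha_{\mathrm Z})$; and a union bound with the conditional lower confidence bound finishes the argument, where you also spell out steps the paper leaves implicit (monotonicity of $d_{\mathrm B}$ in its second argument, the telescoping bound for products of $[0,1]$-valued factors, and truncation at one).
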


\begin{proof}
For any $x$, the ramp identity
$\phi_{j,h}(x)=h_j^{-1}\int_{c_j-h_j}^{c_j}\mathbf 1\{x\leq t\}\,dt$
and the bounded-loss Chernoff inequality give \cref{eq:smooth_kappa} for an
invalid release. Replacing one target record changes $S_h$ by at most
$\rho_h$, uniformly over releases. The same bound applies after averaging
over invalid mechanism draws. McDiarmid's inequality
\citep{mcdiarmid1989bounded} therefore bounds their target-conditional score
mass by $\kappa_h+r_h$. Combining that event with the conditional bounded-mean
lower bound proves the theorem. The expectation-only Markov allowance for the
same ramp is $\min\{1,\kappa_h/\alpha_{\mathrm Z}\}$; smooth concentration is
strictly tighter whenever
$r_h<\kappa_h(1/\alpha_{\mathrm Z}-1)$.
\end{proof}
The artifact evaluates \cref{eq:smooth_kappa} with a conservative right
Riemann sum.

\paragraph{Ramp-width selection.}
The widths $h_j$ are development-time design parameters. An auditor may
compare a finite grid using only development or pilot estimates of the
conditional score rate, $\kappa_h$, and $\rho_h$, then freeze one configuration
before target access. Wider ramps reduce worst-case target sensitivity but may
lower favorable-score rates; narrower ramps approach the hard score but can
produce a large concentration allowance. We do not claim an optimal
ramp-selection rule.

\Cref{tab:error_budgets} summarizes the distinct directional error events.

\begin{table}[H]
\centering
\scriptsize
\setlength{\tabcolsep}{4pt}
\begin{tabularx}{\textwidth}{@{}llX@{}}
\toprule
Mode or direction & Budget & Error event controlled \\
\midrule
Named lower bound & $\alpha_{\mathrm R}$ & false recognition among named releases \\
Named lower bound & $\alpha_{\mathrm M}$ & false validation among mechanisms \\
Direct lower bound & $\alpha_{\mathrm Q}$ & conditional score-probability coverage \\
Direct lower bound & $\alpha_{\mathrm Z}$ & shared-target false-pass contamination \\
Violation upper bound & $\beta_{\mathrm R}$ & false release-level violation \\
Violation upper bound & $\beta_{\mathrm M}$ & false mechanism-level violation \\
Registered hybrid & $\alpha_{\mathrm N},\alpha_{\mathrm D}$ & named and direct component coverage \\
\bottomrule
\end{tabularx}
\caption{Error budgets and the events they control.}
\label{tab:error_budgets}
\end{table}

\section{A two-axis block certificate}
\label{app:block_witness}

The expectation-only lower bound in Proposition~\ref{prop:markov_sharpness} is sharp.
One way to obtain more structure is to freeze a random partition of the target
into $B=\lfloor n/m\rfloor$ disjoint blocks of size $m$. Let $W_{rb}\in[0,1]$
be the registered hard or ramped score of release $r$ on block $b$, and let
$\overline W=(RB)^{-1}\sum_{r,b}W_{rb}$. Define $\kappa_m$ as in
\cref{eq:invalid_score_ceiling}, with $m$ in place of $n$, and
\[
r_{R,B}(\alpha)
=
\sqrt{\frac12\left(\frac1R+\frac1B\right)\log\frac1\alpha}.
\]

\begin{theorem}[Block-witness reliability certificate]
\label{thm:block_witness}
Suppose the target records are iid and jointly independent of the iid
mechanism draws. For one mechanism,
\[
L^{\mathrm{blk}}
=
\left[
\frac{\overline W-r_{R,B}(\alpha)-\kappa_m}{1-\kappa_m}
\right]_+
\]
satisfies $\Pr\{\eta\geq L^{\mathrm{blk}}\}\geq1-\alpha$. For several
mechanisms, the same statement holds simultaneously when their local error
levels sum to at most $\alpha$.
\end{theorem}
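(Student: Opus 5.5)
The plan is to bound the joint mean of the block scores by a linear function of $\eta$, and then concentrate $\overline W$ around that mean with McDiarmid's inequality over the $R+B$ independent inputs.

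\textbf{Setup.} Write $W_{rb}=g(Q_r,Z^{(b)})$ for a fixed measurable $g$ into $[0,1]$. Here $Z^{(b)}$ is the $b$-th block of $m$ target records. The partition is frozen before access and is independent of the records, and the blocks are disjoint. So $Z^{(1)},\ldots,Z^{(B)}$ are i.i.d.\ copies of an $m$-record i.i.d.\ sample. By assumption they are jointly independent of $Q_1,\ldots,Q_R\overset{\mathrm{iid}}{\sim}G_\theta$.

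\textbf{Mean bound.} Let $\pi=\mathbb E[g(Q,Z^{(1)})]$, taken over an independent pair; by the setup, $\mathbb E\overline W=\pi$. Split $\pi$ as in \cref{eq:score_decomposition}:
\[
\pi=\mathbb E[A(Q)\,g(Q,Z)]+\mathbb E[(1-A(Q))\,g(Q,Z)].
\]
The first term is at most $\eta$ because $g\le1$. For the second term, condition on $Q$ and use independence of $Q$ and $Z$. For an invalid $Q$, $\mathbb E_Z g(Q,Z)\le\kappa_m$.
\begin{itemize}
\item For the hard score this is exactly \cref{eq:invalid_score_ceiling} with $m$ records.
\item For the ramp, $\phi_{j,h}(x)\le\mathbf 1\{x\le c_j\}$, so the ramped score is dominated by the hard score and the same ceiling applies.
\end{itemize}
Hence $\pi\le\eta+(1-\eta)\kappa_m$, which rearranges to $\eta\ge(\pi-\kappa_m)/(1-\kappa_m)$. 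We need $\kappa_m<1$. This holds because $a_j<\tau_j-s_j<\tau_j<b_j$, so each Bernoulli KL term is strictly positive. The map $x\mapsto(x-\kappa_m)/(1-\kappa_m)$ is therefore increasing.

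\textbf{Concentration.} $\overline W$ is a function of the $R+B$ independent arguments $(Q_1,\ldots,Q_R,Z^{(1)},\ldots,Z^{(B)})$. Changing one $Q_r$ alters only the row $\{W_{rb}\}_b$, so $\overline W$ moves by at most $1/R$. Changing one block alters only a column, so $\overline W$ moves by at most $1/B$. The sum of squared bounded differences is $R\cdot R^{-2}+B\cdot B^{-2}=1/R+1/B$. McDiarmid's one-sided inequality then gives
\[
\Pr\{\overline W-\pi\ge t\}\le\exp\{-2t^2/(1/R+1/B)\},
\]
and setting the right side to $\alpha$ yields $t=r_{R,B}(\alpha)$. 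On the complementary event, $\pi\ge\overline W-r_{R,B}(\alpha)$. Monotonicity and the mean bound give $\eta\ge(\overline W-r_{R,B}-\kappa_m)/(1-\kappa_m)$, and $\eta\ge0$ allows the positive part.

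\textbf{Several mechanisms.} Apply the same argument with local levels $\alpha_\theta$ and take a union bound over mechanisms.

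\textbf{Main obstacle.} The step needing the most care is the independence structure behind McDiarmid. The scores $W_{rb}$ are heavily dependent, since releases share blocks and blocks share releases. I would stress that McDiarmid only needs independence of the underlying arguments, not of the $W_{rb}$. This is where disjointness of the blocks and the frozen, target-independent partition are essential. Without them the column-wise bounded difference and the identity $\mathbb E\overline W=\pi$ could fail.
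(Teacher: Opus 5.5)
Your proposal is correct and follows the paper's own argument: McDiarmid's inequality over the $R$ releases and $B$ disjoint target blocks with bounded differences $1/R$ and $1/B$, then the bound $\mathbb E\overline W\leq\kappa_m+(1-\kappa_m)\eta$ obtained by splitting valid and invalid releases, then rearrangement. Your extra checks fill in details the paper leaves implicit: the ramp is dominated by the hard indicator, so the hard ceiling $\kappa_m$ applies; $\kappa_m<1$, so the rearrangement is monotone; and a union bound covers several mechanisms.
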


To see this, view $\overline W$ as a function of the $R$ releases and $B$
target blocks. Replacing one release changes it by at most $1/R$; replacing
one block changes it by at most $1/B$. McDiarmid's inequality gives
$\mathbb E\overline W\geq\overline W-r_{R,B}(\alpha)$ outside an event of
probability $\alpha$. An invalid release has expected witness at most
$\kappa_m$, while a valid release contributes at most one, so
$\mathbb E\overline W\leq\kappa_m+(1-\kappa_m)\eta$. Rearrangement proves
the result. The block size trades a smaller $\kappa_m$ against fewer
independent blocks, so researchers must choose it during development. We
implement this certificate in the artifact but do not select it after target
access. We treat it as a structured theoretical extension. The current
experiments do not establish when it improves on the expectation-only
certificate.

\section{Proofs}
\label{app:proofs}

\paragraph{Proof of \cref{thm:fwer}.}
If $Q$ is invalid, some component null $H_{0,Q,j^\star}$ is true. For any
$t\in[0,1]$,
\[
\Pr(p_Q\leq t)
=
\Pr\!\left(\max_j p_{Q,j}\leq t\right)
\leq
\Pr(p_{Q,j^\star}\leq t)
\leq t.
\]
Thus $p_Q$ is valid for the proxy-level null. Holm controls familywise error
over the proxy-level nulls under arbitrary dependence
\citep{holm1979simple}.

\paragraph{Proof of \cref{thm:mechanism}.}
Let $K_\theta=\sum_{r=1}^{R_\theta}A(Q_{\theta,r})$ be the unobserved number
of truly valid releases. Holm controls false release identification within
mechanism $\theta$ at $\alpha_{\mathrm R,\theta}$. A union bound over the
registered mechanisms therefore gives
\[
\Pr\!\left(
V_{\theta}^{\mathrm{Holm}}\leq K_\theta
\text{ for every }\theta
\right)
\geq
1-\sum_\theta\alpha_{\mathrm R,\theta}
\geq 1-\alpha_{\mathrm R}.
\]
Under
$H^{\mathrm{mech}}_{0,\theta}$, the binomial upper-tail value computed from
$K_\theta$ is super-uniform. The tail decreases with its observed count, so
the value computed from $V_{\theta}^{\mathrm{Holm}}$ is no smaller than that
from $K_\theta$. Holm is monotone in its input $p$-values. Conditional on no
inner error, if the test rejects using $V_{\theta}^{\mathrm{Holm}}$, it also
rejects when using $K_\theta$. Therefore, a false mechanism rejection is
contained in the rejection event of an oracle Holm test based on $K_\theta$, whose probability
is at most $\alpha_{\mathrm M}$. A union bound with the inner error event
completes the proof.

\paragraph{Proof of Proposition~\ref{prop:stratified_target}.}
For requirement $j$, write
$\widehat\mu_j=\sum_gw_g\overline X_{j,g}$. Each observation in stratum
$g$ has coefficient $w_g/n_g$, so weighted Hoeffding gives, for $t>0$,
\[
\Pr\{\widehat\mu_j-\mu_j\leq-t\}
\leq
\exp\!\left\{
-\frac{2t^2}{(b_j-a_j)^2\sum_gw_g^2/n_g}
\right\}.
\]
The same inequality holds for simple random sampling without replacement
within a finite stratum because the without-replacement sum is no less
concentrated than its with-replacement counterpart
\citep{hoeffding1963probability}. Inverting the corresponding upper-tail
bound gives a super-uniform named-release component $p$-value.

If $A(Q)=0$, some requirement has $\mu_j\geq\tau_j$. A favorable score then
requires $\widehat\mu_j-\mu_j\leq-s_j$, and the preceding display yields
\cref{eq:stratified_ceiling}. When $w_g/n_g=1/n$ for every stratum, the
weighted sum is the ordinary average of $n=\sum_gn_g$ bounded observations.
The bounded Bernoulli--KL Chernoff argument applies to independent,
not-necessarily-identically-distributed observations with that average mean;
the without-replacement comparison gives the same conclusion for finite
strata. The remainder of the proof is identical to
\cref{thm:conditional_shared_target}: condition on the shared stratified
sample for the release-score bound, control its invalid false-pass mass on a
target-side event, and union-bound the registered mechanisms.

\paragraph{Proof of \cref{thm:conditional_shared_target}.}
Write $A_\theta(Q)$ for the indicator that release $Q$ satisfies every
registered requirement. If $A_\theta(Q)=0$, some requirement $j^\star$ has
mean at least $\tau_{j^\star}$. Since $S_\theta$ is no larger than the
indicator for the $j^\star$ sample-mean event, the bounded-loss Chernoff
inequality gives
\[
\sup_{Q:A_\theta(Q)=0}\mathbb E_Z[S_\theta(Q,Z)]
\leq \kappa_\theta.
\]
Define the target-conditional contribution of invalid releases by
\[
\beta_\theta(Z)
=
\mathbb E_Q[
(1-A_\theta(Q))S_\theta(Q,Z)\mid Z
].
\]
Fubini's theorem and the preceding bound yield
$\mathbb E_Z[\beta_\theta(Z)]\leq\kappa_\theta$. Markov's inequality
therefore gives
\[
\Pr_Z\!\left\{
\beta_\theta(Z)>
\kappa_\theta/\alpha_{\mathrm Z,\theta}
\right\}
\leq\alpha_{\mathrm Z,\theta}.
\]

Conditional on $Z$, the registered scores from independent mechanism draws
are iid Bernoulli with mean
$\pi_\theta(Z)=\mathbb E_Q[S_\theta(Q,Z)\mid Z]$. The exact one-sided
binomial bound satisfies
$\underline\pi^{\mathrm{CP}}_\theta\leq\pi_\theta(Z)$ except on an event of conditional
probability at most $\alpha_{\mathrm Q,\theta}$. On the intersection of
these two events,
\[
\pi_\theta(Z)
=
\mathbb E_Q[A_\theta(Q)S_\theta(Q,Z)\mid Z]
+\beta_\theta(Z)
\leq
\eta_\theta+\frac{\kappa_\theta}{\alpha_{\mathrm Z,\theta}}.
\]
Rearranging gives \cref{eq:direct_reliability_bound}. Union bounds over the
registered mechanisms and the two error families give the stated coverage.
The argument does not use the dependence among release-level $p$-values.

\paragraph{Proof of Proposition~\ref{prop:markov_sharpness}.}
Markov's inequality gives
$\Pr(B\geq c)\leq\min\{1,\kappa/c\}$. If $c\leq\kappa$, the constant
variable $B=c$ attains one. If $c>\kappa$, the variable that equals $c$ with
probability $\kappa/c$ and zero otherwise attains $\kappa/c$. Hence the bound
is exact. If a proposed uniform ceiling is smaller than
$\kappa/\alpha$, choose a threshold just above that ceiling and the same
two-point construction makes its exceedance probability larger than
$\alpha$. Clipping at one handles $\kappa/\alpha>1$.

\paragraph{Proof of Proposition~\ref{prop:direct_planning}.}
The invalid-release ceiling is
$\kappa_\theta=\exp(-nd_\theta)$. Even if every release scores one, the
direct lower bound cannot exceed
$1-\kappa_\theta/\alpha_{\mathrm Z,\theta}$. Exceeding $\eta_0$ therefore
requires
$\exp(-nd_\theta)<\alpha_{\mathrm Z,\theta}(1-\eta_0)$, which gives the
target-size condition.

At fixed $n$, suppose this condition holds and write
$c_\theta=\kappa_\theta/\alpha_{\mathrm Z,\theta}$. With all
$R_\theta$ scores equal to one, the exact lower binomial bound is
$\alpha_{\mathrm Q,\theta}^{1/R_\theta}$. No other score outcome can give a
larger lower bound. Direct validation therefore requires
$\alpha_{\mathrm Q,\theta}^{1/R_\theta}>
\eta_0+c_\theta$. Taking logarithms gives the release-count condition.

\paragraph{Proof of Proposition~\ref{prop:hybrid}.}
On the intersection of the simultaneous coverage events for the two
component procedures, both lower bounds are no larger than every
$\eta_\theta$; their maximum is therefore also no larger. A union bound
shows that the intersection fails with probability at most
$\alpha_{\mathrm N}+\alpha_{\mathrm D}\leq\alpha$.

\paragraph{Proof of \cref{thm:mechanism_collective}.}
Condition on the sampled releases, the fixed development choices, and the
resulting configuration of true and false release nulls. Under the
conditional super-uniformity and independence or PRDS assumptions, the Simes
partial-conjunction value
$\widehat p^{\mathrm{PC}}_{\theta,k}$ is super-uniform under
$H_{\theta,k}:K_\theta<k$
\citep{benjamini2008screening}. Monotonization can only increase this value.
If $\widehat K^{\mathrm{LB}}_\theta>K_\theta$, then the test for
$k=K_\theta+1$ rejected, so
\[
\Pr\!\left(
\widehat K^{\mathrm{LB}}_\theta>K_\theta
\mid Q_{\theta,1:R_\theta}
\right)
\leq
\alpha_{\mathrm R,\theta}.
\]
Integrating over the release draws preserves the bound, and the union bound
over mechanisms is at most $\alpha_{\mathrm R}$. As in
\cref{thm:mechanism}, if
$\widehat K^{\mathrm{LB}}_\theta\leq K_\theta$ for all $\theta$, then
replacing $V_{\theta}^{\mathrm{Holm}}$ by
$\widehat K^{\mathrm{LB}}_\theta$ in the binomial outer step cannot increase
false rejections. Holm at level $\alpha_{\mathrm M}$ across mechanisms then
contributes an additional $\alpha_{\mathrm M}$.

\paragraph{Proof of Proposition~\ref{prop:collective_dominance}.}
If $V_{\theta}^{\mathrm{Holm}}=v$, Holm testing on $\{p_{\theta,r}\}$ implies
that for every $k\leq v$,
\[
p_{\theta,(k)}
\leq
\frac{\alpha_{\mathrm R,\theta}}{R_\theta-k+1}.
\]
Taking the $j=k$ term in \cref{eq:pc_p} gives
\[
\widehat p^{\mathrm{PC}}_{\theta,k}
\leq
\left(R_\theta-k+1\right)p_{\theta,(k)}
\leq
\alpha_{\mathrm R,\theta}.
\]
This holds for every $k\leq v$, hence
$\widetilde p^{\mathrm{PC}}_{\theta,v}\leq
\alpha_{\mathrm R,\theta}$. Therefore
$\widehat K^{\mathrm{LB}}_\theta\geq V_{\theta}^{\mathrm{Holm}}$.

\paragraph{Proof of \cref{thm:mechanism_violation}.}
Let $K_\theta$ again be the number of truly valid releases. With probability
at least $1-\beta_{\mathrm R}$, the inner audit makes no false violation
declaration. On this event, $W_\theta\leq R_\theta-K_\theta$ and hence
$R_\theta-W_\theta\geq K_\theta$. Under
$H_{0,\theta}:\eta_\theta\geq\eta_0$, the lower-tail binomial value computed
from $K_\theta$ is super-uniform. The binomial cdf increases with its
observed count, so the value computed from $R_\theta-W_\theta$ is no smaller.
Holm therefore makes a false mechanism-violation declaration with
probability at most $\beta_{\mathrm M}$ when the inner event holds. A union
bound gives the stated result.

\paragraph{Release reporting versus mechanism power.}
The default procedure supports simultaneous statements about individual
releases and their mechanism, so it uses inner Holm correction. If only the
mechanism claim matters, \cref{thm:conditional_shared_target} reuses one target
through a registered release score and a contamination correction. More
adaptive scores, confidence-set inversion, or misclassification-aware
reliability models may improve power, but they need separate calibration. We
do not use uncorrected release labels as a shortcut.

\paragraph{Proof of Proposition~\ref{prop:adaptive}.}
Condition on the history before each fresh audit. For an invalid candidate,
$\Pr(p_t\leq\alpha_t\mid\mathcal F_{t-1})\leq\alpha_t$. Summing these
conditional probabilities over rounds gives the result.

\section{A sufficient audit size}
\label{app:sample_size}

\begin{proposition}[Best-case direct planning limits]
\label{prop:direct_planning}
Define
\[
d_\theta
=
\min_j d_{\mathrm B}\!\left(
\frac{\tau_j-s_j-a_j}{b_j-a_j}
\middle\Vert
\frac{\tau_j-a_j}{b_j-a_j}
\right).
\]
For one mechanism, direct validation at target $\eta_0$ is impossible unless
\[
n>
\frac{\log\{1/[\alpha_{\mathrm Z,\theta}(1-\eta_0)]\}}
{d_\theta}.
\]
At fixed $n$, let
$c_\theta=\exp(-nd_\theta)/\alpha_{\mathrm Z,\theta}$. If
$\eta_0+c_\theta<1$, validation is impossible unless
\[
R_\theta>
\frac{\log(\alpha_{\mathrm Q,\theta})}
{\log(\eta_0+c_\theta)}.
\]
\end{proposition}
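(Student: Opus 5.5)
The plan is to obtain both conditions by evaluating the direct certificate \cref{eq:direct_reliability_bound} at its best possible data outcome. Direct validation at target $\eta_0$ means $L_\theta^{\mathrm{dir}}>\eta_0$, and $L_\theta^{\mathrm{dir}}$ is nondecreasing in $\underline\pi_\theta^{\mathrm{CP}}$. So the argument reduces to two questions: how large the exact one-sided Clopper--Pearson lower bound can ever be, and whether that maximal value survives the subtraction of $\kappa_\theta/\alpha_{\mathrm Z,\theta}$.

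\textbf{Step 1: rewrite the ceiling.} Since $x\mapsto e^{-nx}$ is decreasing, the maximum over $j$ in \cref{eq:invalid_score_ceiling} equals $\exp(-n\,d_\theta)$, where $d_\theta$ is the minimal Bernoulli KL divergence defined in the statement. Hence $\kappa_\theta=\exp(-nd_\theta)$ and the penalty is exactly $c_\theta=\exp(-nd_\theta)/\alpha_{\mathrm Z,\theta}$.

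\textbf{Step 2: target-size condition.} Because $\underline\pi_\theta^{\mathrm{CP}}\le 1$, we always have $L_\theta^{\mathrm{dir}}\le[1-c_\theta]_+$. Suppose $c_\theta\ge 1$. Then $L_\theta^{\mathrm{dir}}=0$, and since $\eta_0=1-\rho\ge0$, the certificate cannot exceed $\eta_0$. Otherwise validation requires $1-c_\theta>\eta_0$, which is the same as $\exp(-nd_\theta)<\alpha_{\mathrm Z,\theta}(1-\eta_0)$. Taking logarithms and dividing by $d_\theta>0$ gives the stated bound on $n$. Here $d_\theta>0$ holds because $\tau_j-s_j<\tau_j$ makes the two Bernoulli parameters distinct.

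\textbf{Step 3: release-count condition.} Fix $n$ with $\eta_0+c_\theta<1$. The exact one-sided lower bound at level $\alpha=\alpha_{\mathrm Q,\theta}$ after observing $k$ successes in $R$ trials is
\[
\inf\{p:\Pr_p(\mathrm{Bin}(R,p)\ge k)>\alpha\}.
\]
This quantity is nondecreasing in $k$, so its largest value occurs at $k=R$. At $k=R$ it solves $p^R=\alpha$, giving $\alpha_{\mathrm Q,\theta}^{1/R_\theta}$. Validation therefore requires $\alpha_{\mathrm Q,\theta}^{1/R_\theta}-c_\theta>\eta_0$, that is, $\alpha_{\mathrm Q,\theta}^{1/R_\theta}>\eta_0+c_\theta$. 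Take logarithms: $\log(\alpha_{\mathrm Q,\theta})/R_\theta>\log(\eta_0+c_\theta)$. The right-hand side is negative because $\eta_0+c_\theta<1$. Multiplying by $R_\theta/\log(\eta_0+c_\theta)<0$ flips the inequality and gives $R_\theta>\log(\alpha_{\mathrm Q,\theta})/\log(\eta_0+c_\theta)$. If $\eta_0+c_\theta\le0$ the condition is vacuous, but this case cannot occur when $\eta_0>0$.

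\textbf{Main obstacle.} The only step that needs care is the monotonicity of the Clopper--Pearson bound in $k$, together with its closed form at $k=R$. Monotonicity follows because $\Pr_p(\mathrm{Bin}\ge k)$ is decreasing in $k$ for fixed $p$, so the set of admissible $p$ shrinks as $k$ grows and its infimum rises. Beyond that, the remaining work is keeping the inequality directions correct when taking logarithms of quantities below one.
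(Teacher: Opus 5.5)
Your proposal is correct and takes the paper's route: identify $\kappa_\theta=\exp(-nd_\theta)$, evaluate the certificate at the best case where all $R_\theta$ scores equal one (giving $1-c_\theta$, then $\alpha_{\mathrm Q,\theta}^{1/R_\theta}-c_\theta$), and take logarithms with the inequality directions handled correctly. You also justify two points the paper only asserts: that the exact lower bound is monotone in the success count, and that the positive-part clipping cannot help.
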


These necessary limits assume every observed release scores one. They expose
the two costs separately: target records reduce contamination, whereas
mechanism draws tighten the binomial bound.

\paragraph{Planning takeaway.}
Target records and mechanism draws solve different problems. Under a common
margin, direct mode can reuse one target without paying the $\log R$ cost of
identifying every valid release, but it still needs enough releases to resolve
the reliability gap.

\begin{theorem}[Common-margin mode separation]
\label{thm:mode_separation}
Consider one mechanism with $J$ requirements of width at most $B$. Suppose
every valid release satisfies
$\mu_{Q,j}\leq\tau_j-\gamma$ for all $j$, and choose $0<s<\gamma$.
Let $R$ independent releases share $n$ i.i.d.\ target records. For any
$\delta_{\mathrm Z},\delta_{\mathrm Q}\in(0,1)$, the direct certificate obeys,
with probability at least $1-\delta_{\mathrm Z}-\delta_{\mathrm Q}$,
\begin{align}
L^{\mathrm{dir}}
\geq{}& \eta
-\frac{J\exp\{-2n(\gamma-s)^2/B^2\}}{\delta_{\mathrm Z}}
-\sqrt{\frac{\log(1/\delta_{\mathrm Q})}{2R}}
\nonumber\\
&-\sqrt{\frac{\log(1/\alpha_{\mathrm Q})}{2R}}
-\frac{\exp\{-2ns^2/B^2\}}{\alpha_{\mathrm Z}}.
\label{eq:direct_mode_separation}
\end{align}
By contrast, Hoeffding component tests followed by Holm recognize every valid
release with probability at least $1-\delta$ under the sufficient condition
\begin{equation}
n\geq
\frac{2B^2}{\gamma^2}
\max\!\left\{
\log\frac{R}{\alpha_{\mathrm R}},
\log\frac{RJ}{\delta}
\right\}.
\label{eq:named_mode_separation}
\end{equation}
Thus the target-side terms in the direct sufficient condition do not grow
with $R$, whereas simultaneous identification of the valid releases incurs a
$\log R$ target-sample term. Direct mode still needs enough releases to
resolve the reliability gap in \cref{eq:direct_mode_separation}.
\end{theorem}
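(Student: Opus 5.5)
The plan is to prove the two displays separately, using only Hoeffding-type concentration. The direct bound \eqref{eq:direct_mode_separation} is a chain of four inequalities, two of them high-probability. The named condition \eqref{eq:named_mode_separation} reduces to a uniform event on which every valid release has a small Hoeffding $p$-value.

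\emph{Direct part.} First I bound the contamination allowance. Pinsker's inequality for Bernoulli KL gives $d_{\mathrm B}(u\Vert v)\geq 2(u-v)^2$, and here $u-v=-s_j/(b_j-a_j)$ with $b_j-a_j\leq B$. Hence $\kappa\leq\exp\{-2ns^2/B^2\}$, which accounts for the last term.

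Second, I lower-bound the Clopper--Pearson bound by a Hoeffding bound. The exact one-sided binomial lower bound at level $\alpha_{\mathrm Q}$ is at least $\widehat\pi-\sqrt{\log(1/\alpha_{\mathrm Q})/(2R)}$, where $\widehat\pi$ is the observed score fraction. This holds because the binomial tail is dominated by its Chernoff--KL bound, and Pinsker again lower-bounds that KL term. This deterministic comparison is the step I expect to need the most care: the inversion direction and the boundary case $\widehat\pi=0$ must be handled cleanly.

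Third, conditional on $Z$, the scores are iid Bernoulli with mean $\pi(Z)$. Hoeffding's inequality then gives $\widehat\pi\geq\pi(Z)-\sqrt{\log(1/\delta_{\mathrm Q})/(2R)}$ outside an event of probability at most $\delta_{\mathrm Q}$.

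Fourth, I lower-bound $\pi(Z)$ using \eqref{eq:score_decomposition}:
\[
\pi(Z)\geq\mathbb E_Q[A(Q)S(Q,Z)\mid Z]=\eta-\gamma(Z),
\qquad
\gamma(Z):=\mathbb E_Q[A(Q)(1-S(Q,Z))\mid Z].
\]
For a valid release, $S=0$ requires some $j$ with $\overline X_j>\tau_j-s\geq\mu_j+(\gamma-s)$. A union bound over the $J$ requirements combined with Hoeffding gives $\mathbb E_Z[1-S]\leq J\exp\{-2n(\gamma-s)^2/B^2\}$. By Fubini, the same bound holds for $\mathbb E_Z\gamma(Z)$. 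Markov's inequality at level $\delta_{\mathrm Z}$ then yields the second term outside an event of probability at most $\delta_{\mathrm Z}$.

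Chaining the four inequalities, dropping the positive part, and union-bounding the two events gives \eqref{eq:direct_mode_separation}.

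\emph{Named part.} For a release with mean $\mu_j$, the Hoeffding component $p$-value with $\overline X_j<\tau_j$ is $\exp\{-2n(\tau_j-\overline X_j)^2/B^2\}$ (or smaller). Consider the event $E$ that every valid release satisfies $\overline X_{Q,j}\leq\mu_{Q,j}+\gamma/2$ for all $j$.
\begin{itemize}
\item \emph{Probability of $E$.} By a union bound over at most $RJ$ pairs, $E$ fails with probability at most $RJ\exp\{-n\gamma^2/(2B^2)\}$. This is at most $\delta$ when $n\geq(2B^2/\gamma^2)\log(RJ/\delta)$.
\item \emph{$p$-values on $E$.} Every valid release has $\tau_j-\overline X_j\geq\gamma/2$, so $p_Q\leq\exp\{-n\gamma^2/(2B^2)\}$. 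This is at most $\alpha_{\mathrm R}/R$ when $n\geq(2B^2/\gamma^2)\log(R/\alpha_{\mathrm R})$.
\item \emph{Holm rejects them all.} Every valid release's $p$-value lies below the smallest Holm threshold $\alpha_{\mathrm R}/R$. In the sorted order, every $p$-value up to and including the last valid one is at most $\alpha_{\mathrm R}/R$, which is at most the corresponding threshold $\alpha_{\mathrm R}/(R-k+1)$. So the step-down procedure does not stop before rejecting every valid release.
\end{itemize}

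The closing remark follows by comparing the two results. The target-side terms in \eqref{eq:direct_mode_separation} involve only $J$, $n$, $\delta_{\mathrm Z}$ and $\alpha_{\mathrm Z}$, with no dependence on $R$. In contrast, \eqref{eq:named_mode_separation} contains $\log R$.
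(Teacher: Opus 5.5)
Your proposal is correct and follows the paper's proof essentially step for step. For the direct part, both use Hoeffding with a union bound, Fubini, and Markov at $\delta_{\mathrm Z}$ for the valid-release miss mass; conditional Hoeffding over the $R$ scores at $\delta_{\mathrm Q}$; and the exact binomial bound dominating its Hoeffding counterpart at $\alpha_{\mathrm Q}$. For named mode, both use the $\gamma/2$ empirical-margin event, the $p$-value bound $\exp\{-n\gamma^2/(2B^2)\}\leq\alpha_{\mathrm R}/R$, and Holm's smallest threshold. You also spell out two steps the paper leaves implicit, namely the Pinsker bound $\kappa\leq\exp\{-2ns^2/B^2\}$ behind the last term and the Clopper--Pearson-versus-Hoeffding comparison. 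You should rename your $\gamma(Z)$, since it clashes with the margin $\gamma$.
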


\begin{proof}
For a valid release, a score failure requires at least one empirical mean to
exceed its true mean by $\gamma-s$. Hoeffding and a union bound give failure
probability at most
$J\exp\{-2n(\gamma-s)^2/B^2\}$. Markov's inequality bounds the
target-conditional valid-release miss mass by this quantity divided by
$\delta_{\mathrm Z}$. Conditional on that target, Hoeffding over the $R$
release scores loses the first square-root term. The exact binomial lower
bound is no smaller than the displayed Hoeffding lower bound, which loses the
second square-root term. Subtracting the registered false-pass allowance
gives \cref{eq:direct_mode_separation}.

For named mode, with probability at least $1-\delta$, every requirement of
every valid release has empirical margin at least $\gamma/2$ under the second
term in \cref{eq:named_mode_separation}. Its Hoeffding $p$-value is then at
most $\exp\{-n\gamma^2/(2B^2)\}\leq\alpha_{\mathrm R}/R$ by the first term.
All such release IUT values pass Holm's smallest threshold, so Holm recognizes
every valid release.
\end{proof}

For development planning we use $C(n,R)=c_Zn+c_QR$ and enumerate registered
$(n,R)$, slack, and error-allocation grids. We score each mode using pilot
estimates of its favorable-score or named-recognition probability. The planner
assigns the full audit budget to the selected mode before target access.
\Cref{tab:cost_planning} fixes budget 10,000 and compares a moderate regime
(direct score probability 0.95, named recognition 0.80) with a high-signal
regime (0.95 and 0.98). The planner selects direct mode throughout the former
and named mode throughout the latter, while increasing release cost reduces
the affordable $R$. These projected bounds are planning criteria, not
confidence statements about pilot estimates.

\begin{table}[H]
\centering
\small
\input{tables/cost_planning}
\caption{Cost-normalized development plans. One target record costs one unit;
$c_Q/c_Z$ varies by row.}
\label{tab:cost_planning}
\end{table}

We also froze a pilot-misspecification check. Holding the true planning rates
at 0.95 for direct scores and 0.80 for named recognition, we perturbed each
pilot input independently by
$\{-0.10,-0.05,-0.02,0,0.02,0.05,0.10\}$. The chosen mode changes in
6.1--8.2\% of the 49 perturbation pairs, while mean regret relative to the
oracle plan is 0.010--0.013; the worst adversarial pair loses about 0.165
(\cref{tab:cost_planning_sensitivity}). A pilot can therefore select the
wrong side of a close planning comparison, and the projected bound should be
stress-tested rather than reported as assured power.

\begin{table}[H]
\centering
\small
\input{tables/cost_planning_sensitivity}
\caption{Sensitivity of the development-only planner to misspecified pilot
probabilities. Regret is the oracle projected lower bound minus the achieved
projection under the fixed true rates.}
\label{tab:cost_planning_sensitivity}
\end{table}

\begin{proposition}[Preregistered hybrid]
\label{prop:hybrid}
Suppose named-release bounds are simultaneously valid outside an event of
probability $\alpha_{\mathrm N}$ and direct bounds outside an event of
probability $\alpha_{\mathrm D}$. If
$\alpha_{\mathrm N}+\alpha_{\mathrm D}\leq\alpha$, then
\[
L_\theta^{\mathrm{hyb}}
=
\max\{L_\theta^{\mathrm{named}},L_\theta^{\mathrm{dir}}\}
\]
satisfies
$\Pr\{\eta_\theta\geq L_\theta^{\mathrm{hyb}}\ \forall\theta\}\geq1-\alpha$.
\end{proposition}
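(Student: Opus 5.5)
The approach is to work on the intersection of the two simultaneous coverage events and reduce the claim to a union bound. Let $E_{\mathrm N}$ be the event on which $L_\theta^{\mathrm{named}}\le\eta_\theta$ for every registered $\theta$. By hypothesis, $\Pr(E_{\mathrm N}^c)\le\alpha_{\mathrm N}$. This is supplied, for example, by \cref{thm:mechanism} with $\alpha_{\mathrm R}+\alpha_{\mathrm M}=\alpha_{\mathrm N}$, once the binomial test is read as a lower confidence bound. Let $E_{\mathrm D}$ be the event on which $L_\theta^{\mathrm{dir}}\le\eta_\theta$ for every $\theta$. Then $\Pr(E_{\mathrm D}^c)\le\alpha_{\mathrm D}$, which \cref{thm:conditional_shared_target} gives with $\alpha_{\mathrm Q}+\alpha_{\mathrm Z}=\alpha_{\mathrm D}$. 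No independence between the two procedures is required, and indeed they may share the same target records and release draws, because only a union bound is used.

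The key steps, in order, are the following.

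\begin{enumerate}
\item On $E_{\mathrm N}\cap E_{\mathrm D}$, for each fixed $\theta$ both lower bounds are at most $\eta_\theta$, so their maximum $L_\theta^{\mathrm{hyb}}$ is also at most $\eta_\theta$. This holds simultaneously over all $\theta$ because each component event is already simultaneous.
\item Hence $\{\exists\theta:\eta_\theta<L_\theta^{\mathrm{hyb}}\}\subseteq E_{\mathrm N}^c\cup E_{\mathrm D}^c$. The union bound then gives failure probability at most $\alpha_{\mathrm N}+\alpha_{\mathrm D}\le\alpha$.
\end{enumerate}

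The only point needing care, and the one I expect to be the main obstacle, is not algebraic. It is that the hybrid must be preregistered. The budgets $\alpha_{\mathrm N},\alpha_{\mathrm D}$, the scores, slacks, and local allocations of both components must all be fixed before target access. Only then are $E_{\mathrm N}$ and $E_{\mathrm D}$ defined by fixed procedures whose coverage guarantees hold as stated.

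If instead the mode or budget split were chosen after seeing outcomes, the maximum would range over a data-dependent family. The component guarantees would then not apply as stated. So in the write-up I would state explicitly that both procedures are fully specified in the registry and that each component's coverage event is the one certified by its own theorem.
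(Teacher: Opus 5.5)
Your argument is correct and matches the paper's proof: on the intersection of the two simultaneous coverage events, both lower bounds lie below every $\eta_\theta$, so their maximum does too, and a union bound gives failure probability at most $\alpha_{\mathrm N}+\alpha_{\mathrm D}\leq\alpha$. Your remarks on preregistration and on how the component theorems supply the hypotheses are context rather than proof steps, since the proposition already takes both coverage events as given.
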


\begin{proposition}[Sufficient size for named-release recognition]
\label{prop:named_sample_size}
Let $B=\max_j(b_j-a_j)$ and suppose every candidate has margin
$\mu_{Q,j}\leq\tau_j-\gamma$ for all $j$, with $\gamma>0$. Using
Hoeffding component tests and $n$ audit records per candidate, Holm validates
every candidate with probability at least $1-\beta$ when
\[
n
\geq
\frac{2B^2}{\gamma^2}
\max\!\left\{
\log\frac{M}{\alpha},
\log\frac{M|\mathcal J|}{\beta}
\right\}.
\]
\end{proposition}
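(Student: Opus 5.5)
The plan is to reduce the statement to two events and then show that, on their intersection, Holm rejects every candidate. Fix the registered family of $M$ candidates and $|\mathcal J|$ requirements. I use the one-sided Hoeffding component $p$-value, obtained by inverting the Hoeffding analogue of \cref{eq:eb_bound}:
\[
p_{Q,j}=\exp\{-2n(\tau_j-\bar X_{Q,j})_+^2/(b_j-a_j)^2\}
\]
when $\bar X_{Q,j}<\tau_j$, and $1$ otherwise. This value is nonincreasing in the empirical margin $\tau_j-\bar X_{Q,j}$ and nondecreasing in the width. Hence bounding every width by $B$ only enlarges the $p$-value bound. Since Holm is monotone in its inputs, it suffices to show that the stated bound is small enough for Holm to reject.

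\textbf{Step 1: good event.} Define $E$ as the event that, for every candidate $Q$ and requirement $j$, $\bar X_{Q,j}\leq \mu_{Q,j}+\gamma/2$. For a single pair, Hoeffding's inequality on $n$ i.i.d.\ records with range at most $B$ gives failure probability at most $\exp\{-2n(\gamma/2)^2/B^2\}=\exp\{-n\gamma^2/(2B^2)\}$. A union bound over the $M|\mathcal J|$ pairs needs no independence across candidates, so records may be shared or not. It bounds $\Pr(E^c)$ by $M|\mathcal J|\exp\{-n\gamma^2/(2B^2)\}$. The second term of the maximum, $n\geq (2B^2/\gamma^2)\log(M|\mathcal J|/\beta)$, makes this at most $\beta$.

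\textbf{Step 2: small $p$-values on $E$.} On $E$, the margin assumption gives $\tau_j-\bar X_{Q,j}\geq \gamma-\gamma/2=\gamma/2$ for every pair. Each component value therefore satisfies $p_{Q,j}\leq\exp\{-n\gamma^2/(2B^2)\}$. The first term of the maximum, $n\geq(2B^2/\gamma^2)\log(M/\alpha)$, turns this into $p_{Q,j}\leq \alpha/M$. Taking the intersection--union maximum \cref{eq:iut_p} preserves the bound, so $p_Q\leq\alpha/M$ for every $Q$.

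\textbf{Step 3: Holm rejects everything.} Holm's first threshold is $\alpha/M$, and its later thresholds $\alpha/(M-k+1)$ are no smaller. If every ordered $p$-value is at most $\alpha/M$, each step rejects and the procedure never stops early. Thus Holm validates all $M$ candidates on $E$, an event of probability at least $1-\beta$. This mirrors the named half of \cref{thm:mode_separation} with $R$ replaced by $M$.

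The main obstacle is bookkeeping rather than depth. First, the Hoeffding $p$-value must be of exactly the inverted form above, so that a margin of $\gamma/2$ gives the $\exp\{-n\gamma^2/(2B^2)\}$ rate. If the paper's Hoeffding version instead uses a two-sided $\log(2/\delta)$, the same argument goes through with a harmless constant. Second, one must check that the $\gamma/2$ split between the deviation event and the margin is what matches the constant $2B^2/\gamma^2$. It does: $\exp\{-2n(\gamma/2)^2/B^2\}\leq c$ if and only if $n\geq (2B^2/\gamma^2)\log(1/c)$, applied with $c=\beta/(M|\mathcal J|)$ and $c=\alpha/M$.
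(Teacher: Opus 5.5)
Your proof is correct and follows the paper's argument essentially step for step. A Hoeffding union bound over the $M|\mathcal J|$ pairs puts every empirical mean within $\gamma/2$ of its true mean; the first term of the maximum then makes each one-sided Hoeffding component $p$-value, and hence each IUT value, at most $\alpha/M$; and rejection at the Bonferroni level implies that Holm rejects every candidate.

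One small correction to your closing hedge: with a two-sided $\log(2/\delta)$ radius, the stated bound would not suffice as written, because you would need $\log(2M/\alpha)$ in place of $\log(M/\alpha)$. The paper's proof, however, uses the one-sided $\log(1/\delta)$ form, which is the one you actually used.
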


\begin{proof}
Hoeffding's inequality and a union bound show that, with probability at least
$1-\beta$, every empirical mean is at most
$\mu_{Q,j}+\gamma/2$ under the stated sample size. The same condition
makes the component-test radius at level $\alpha/M$ no larger than
$\gamma/2$. Thus every component $p$-value, and hence every candidate IUT
$p$-value, is at most $\alpha/M$. Bonferroni rejects all candidate nulls, so
Holm does as well.
\end{proof}

\section{Losses and component tests}
\label{app:component_tests}

\subsection{Exact test in the Bernoulli simulation}

For loss count $K$ from $n$ audit records and tolerance $\epsilon$, the
controlled study tests
\[
H_0:\mu\geq\epsilon
\quad\text{against}\quad
H_1:\mu<\epsilon.
\]
The least favorable distribution under the null has parameter $\epsilon$,
which gives
\[
p=\Pr_{\operatorname{Binomial}(n,\epsilon)}(X\leq K).
\]
The candidate $p$-value is the maximum of its three component values.

\subsection{Target reuse stress test}

Every candidate in this registered simulation is invalid by the construction
used in Figure~\ref{fig:calibration}. The selection rule chooses the
smallest intersection--union $p$-value among 20 candidates. It then either
tests that candidate on the same target, tests it once on a fresh target, or
keeps the original target but corrects the full family.
\Cref{tab:target_reuse} reports the three resulting error rates.

\begin{table}[h]
\centering
\small
\setlength{\tabcolsep}{5pt}
\input{tables/target_reuse}
\caption{False validation after adaptive proxy selection. Every candidate is
invalid. We froze the registry and seed before the 5{,}000-trial run.}
\label{tab:target_reuse}
\end{table}

\section{Planning, adaptive search, and exposure claims}
\label{app:additional_procedures}

\begin{proposition}[Release planning]
\label{prop:release_planning}
Even when Holm recognizes every sampled release as valid, its first threshold
cannot validate one of $H$ mechanisms unless
\[
R_\theta
\geq
\frac{\log(\alpha_{\mathrm M}/H)}{\log(\eta_0)}.
\]
\end{proposition}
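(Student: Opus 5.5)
The proof is a best-case argument. The approach mirrors Proposition~\ref{prop:direct_planning}: assume every sampled release is recognized as valid, and show that even then the outer binomial test cannot reject at the smallest Holm threshold unless $R_\theta$ is large enough.

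First, fix a mechanism $\theta$ and recall that the mechanism $p$-value in \cref{eq:mechanism_p} is $\Pr\{B_\theta\ge V^{\mathrm{Holm}}_\theta\}$ with $B_\theta\sim\operatorname{Binomial}(R_\theta,\eta_0)$. This tail probability is nonincreasing in the observed count. Since $V^{\mathrm{Holm}}_\theta\le R_\theta$, the smallest value it can take is attained at $V^{\mathrm{Holm}}_\theta=R_\theta$, which gives
\[
p^{\mathrm{mech}}_\theta\ \ge\ \Pr\{B_\theta=R_\theta\}=\eta_0^{R_\theta}.
\]
So $\eta_0^{R_\theta}$ lower-bounds the mechanism $p$-value in every run.

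Second, Holm across the $H$ mechanisms at level $\alpha_{\mathrm M}$ compares the smallest $p$-value with the first threshold $\alpha_{\mathrm M}/H$. Reading ``its first threshold'' this way, validating $\theta$ at that threshold requires $p^{\mathrm{mech}}_\theta\le\alpha_{\mathrm M}/H$. By the first step this in turn requires $\eta_0^{R_\theta}\le\alpha_{\mathrm M}/H$.

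Third, take logarithms. Since $\eta_0\in(0,1)$, $\log\eta_0<0$, so dividing by it reverses the inequality and gives $R_\theta\ge\log(\alpha_{\mathrm M}/H)/\log\eta_0$. The only point needing care is this sign flip; I would state the convention $\eta_0\in(0,1)$ explicitly, since the cases $\eta_0=0$ or $\eta_0=1$ are degenerate. Nothing here uses the target or the inner tests, because the bound holds even under perfect inner recognition.
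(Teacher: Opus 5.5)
Your proposal is correct and follows the paper's argument. The paper simply observes that $p^{\mathrm{mech}}_\theta=\eta_0^{R_\theta}$ when $V_\theta=R_\theta$, requires this to be at most Holm's first threshold $\alpha_{\mathrm M}/H$, and takes logarithms; your extra monotonicity step showing $p^{\mathrm{mech}}_\theta\ge\eta_0^{R_\theta}$ in every run is a harmless strengthening.
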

This follows from
$p^{\mathrm{mech}}_\theta=\eta_0^{R_\theta}$ when
$V_\theta=R_\theta$. Writing
$\alpha_{\mathrm R}=\lambda\alpha$ exposes a planning trade-off:
larger $\lambda$ helps recognize releases but leaves a stricter outer test.
We use $\lambda=1/2$ as a registered default; any power-based choice must
precede target access.

\paragraph{Candidates proposed over time.}
At round $t$, let the next candidate depend on earlier rounds and evaluate it
on a fresh target batch. If its $p$-value is conditionally super-uniform,
set
\[
\alpha_t=\frac{6\alpha}{\pi^2t^2},
\qquad
\sum_{t=1}^{\infty}\alpha_t=\alpha.
\]
\begin{proposition}[Adaptive candidate stream]
\label{prop:adaptive}
Validating candidate $t$ only when $p_t\leq\alpha_t$ gives
\[
\Pr\!\left(\text{an invalid candidate is ever validated}\right)\leq\alpha.
\]
\end{proposition}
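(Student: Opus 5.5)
The plan is to bound the false-validation probability by a sum over rounds. An invalid candidate is ever validated exactly when there is some round $t$ at which the candidate $Q_t$ is invalid and $p_t\le\alpha_t$. A union bound over $t\ge1$ then gives
\[
\Pr(\text{an invalid candidate is ever validated})\le\sum_{t=1}^\infty\Pr\bigl(A(Q_t)=0,\ p_t\le\alpha_t\bigr).
\]
Because the union is countable, countable subadditivity applies directly and no stopping-time argument is needed. If the stream stops early, set the later events to be empty.

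Next we bound each term by $\alpha_t$. Let $\mathcal F_{t-1}$ be the $\sigma$-field generated by all earlier candidates, their target batches, and their outcomes. The candidate $Q_t$ and its requirements are $\mathcal F_{t-1}$-measurable, and so is the indicator $\mathbf 1\{A(Q_t)=0\}$. This holds because validity is a deterministic function of the registered means once $Q_t$ is fixed. The round-$t$ batch is fresh, and we model this as independent of $\mathcal F_{t-1}$, in line with Assumption~\ref{ass:audit} applied per batch. Hence, on $\{A(Q_t)=0\}$, the conditional law of the batch makes $p_t$ super-uniform, as supplied by the hypothesis and, for IUT values, by the argument in the proof of \cref{thm:fwer}. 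We then use the tower property:
\[
\Pr\bigl(A(Q_t)=0,\ p_t\le\alpha_t\bigr)=\mathbb E\bigl[\mathbf 1\{A(Q_t)=0\}\Pr(p_t\le\alpha_t\mid\mathcal F_{t-1})\bigr]\le\alpha_t .
\]

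Finally, we sum the bounds: $\sum_t 6\alpha/(\pi^2t^2)=\alpha$ by $\sum_t t^{-2}=\pi^2/6$.

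The main obstacle is the second step. The conditional super-uniformity must hold with $\alpha_t$ and the null status both fixed before the fresh batch is observed, so that adaptive proposal of $Q_t$ cannot correlate with the batch. I would state the filtration explicitly and note that $\alpha_t$ is deterministic. The argument needs no other dependence assumptions, since earlier batches influence only $\mathcal F_{t-1}$.
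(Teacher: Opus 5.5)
Your proposal is correct and follows the paper's own argument: condition on the history $\mathcal F_{t-1}$ before each fresh batch, use conditional super-uniformity to bound each round's false-validation probability by $\alpha_t$, and sum the bounds to get $\sum_t\alpha_t=\alpha$. Your explicit countable union bound, the $\mathcal F_{t-1}$-measurability of the null indicator, and the tower-property step fill in details that the paper's two-line proof leaves implicit.
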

The proof is in \appref{app:proofs}. Alpha spending does not make a target
reusable after its outcomes have shaped the next proxy.

\paragraph{Conditional exposure.}
An attack needs its own bounded estimand. We use finite-pool advantage,
$\mathrm{TPR}-\mathrm{FPR}$, at a registered threshold and apply the same
IUT and Holm construction to an upper ceiling. The result is conditional on
the release, record pools, representation, attack suite, and attacker seeds;
details are in \appref{app:attack}.

\section{Additional release-level calibration}
\label{app:release_results}

\begin{figure}[h]
\centering
\includegraphics[width=\textwidth]{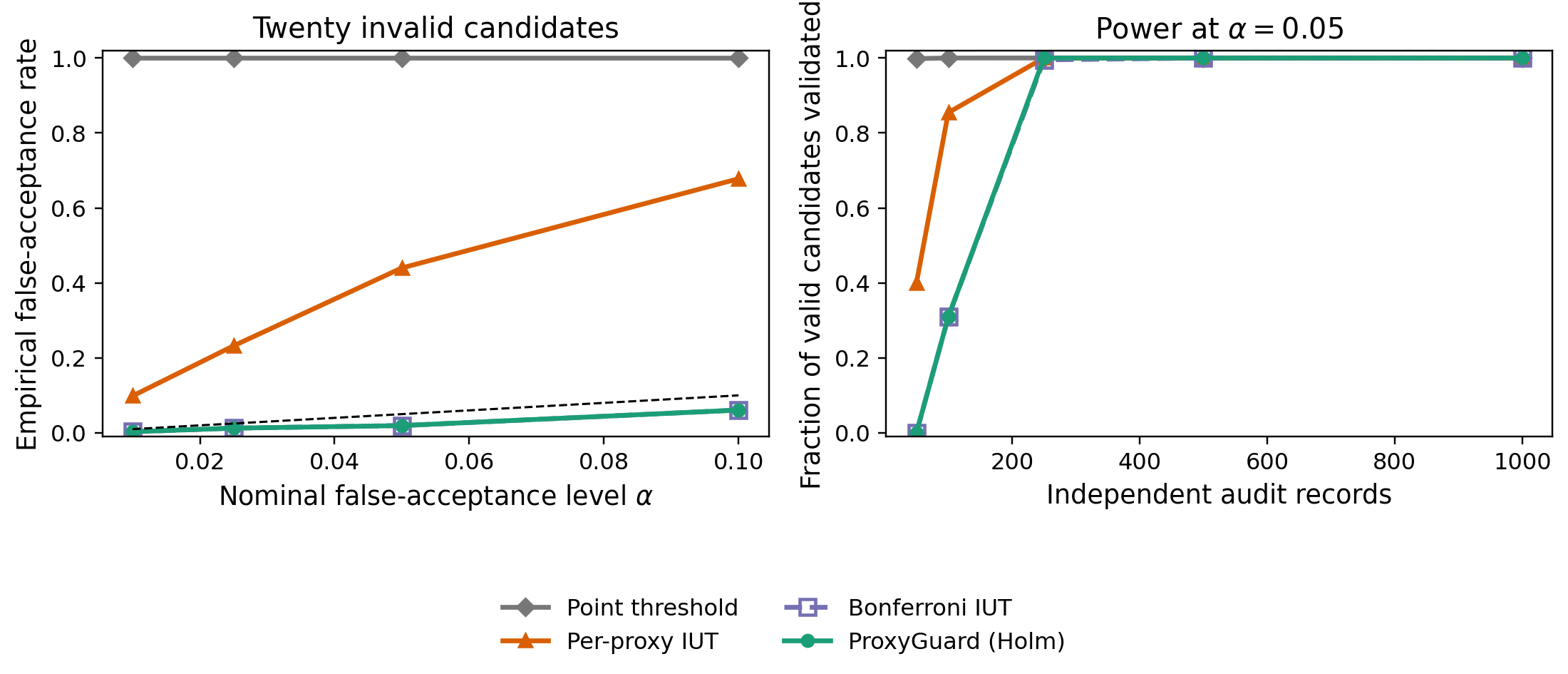}
\caption{Controlled release study. Left: the chance of validating at least
one of 20 invalid candidates. The dashed line is the requested level. Right:
the fraction of valid candidates validated as the audit sample grows.}
\label{fig:calibration}
\end{figure}

\Cref{fig:calibration} shows the Bernoulli calibration and power curves.
The held-out distribution families in \cref{tab:out_of_mechanism} test the
same decision rule under continuous, rare-subgroup, and correlated regrets.
\begin{table}[h]
\centering
\small
\setlength{\tabcolsep}{4pt}
\input{tables/out_of_mechanism}
\caption{Held-out regret distributions at $n=1{,}000$. FWER is the chance of
validating at least one boundary-invalid candidate. Power is the fraction of
valid candidates validated.}
\label{tab:out_of_mechanism}
\end{table}

\subsection{Prediction losses}

For binary label $y$ and predicted probability $p$, Brier loss is
$(p-y)^2$. Clipped log loss uses
$p_c=\min(1-\eta,\max(\eta,p))$ with $\eta=10^{-6}$. We divide it by
$-\log\eta$ to place it in $[0,1]$.

For false-negative cost $c$ and false-positive cost 1, normalized decision
loss is
\[
\ell_c(y,\widehat y)
=
\frac{
c\,\mathbf{1}\{y=1,\widehat y=0\}
+
\mathbf{1}\{y=0,\widehat y=1\}
}{\max(c,1)}.
\]
Each paired proxy-minus-source regret therefore lies in $[-1,1]$.

\section{Held-out regret distributions}
\label{app:heldout_regrets}

The continuous family draws regrets by shifting a beta distribution on
$[-1,1]$. The rare-subgroup family is a mixture: 8\% of records come from a
high-regret component and 92\% from a low-regret component. The correlated
family adds one record-level noise term shared by all candidates and a
smaller candidate--requirement term. In the invalid condition, one
requirement for every candidate has mean exactly at the tolerance. In the
power condition, every requirement is below tolerance by at least 0.04
before clipping.

We chose these families before their full 5{,}000-repetition runs. They
are not fitted to the Bernoulli simulation.

\subsection{Stratified rare-group audit}

The registered follow-up uses one subgroup-specific Bernoulli-risk
requirement with tolerance 0.10. The subgroup prevalence is 0.02, its valid
risk is 0.04, and the boundary risk is 0.10. We test six candidates with
Holm correction over 5{,}000 repetitions. Simple random sampling yields a
random subgroup count. The stratified design fixes the subgroup count in
advance and uses the remaining labeled records for the rest of the audit.
We froze registry SHA-256 \texttt{391a87f9...575f22} before the run.

\begin{table}[h]
\centering
\small
\setlength{\tabcolsep}{1.8pt}
\input{tables/stratified_subgroup}
\caption{Registered subgroup-specific audit. FWER is familywise over six
boundary candidates; we average power over six valid candidates. Monte
Carlo standard errors are at most 0.31 percentage points. Stratified
sampling fixes the rare-group labeled count but does not include recruitment
or screening cost.}
\label{tab:stratified_subgroup}
\end{table}

\Cref{tab:stratified_subgroup} does not show that stratification is free. At prevalence 0.02,
obtaining 500 eligible rare-group records may require screening roughly
25{,}000 population records. It shows instead that a global random audit can
be uninformative for a registered rare-group claim even when the total labeled
sample appears large.

\section{Mechanism-level experiments}
\label{app:mechanism_experiments}

\Cref{tab:appendix_experiment_map} distinguishes the primary direct comparison
from calibration, planning, and historical studies.
The AIM and bootstrap mechanism details are collected in
\cref{app:mechanism_details}.

\begin{table}[h]
\centering
\small
\setlength{\tabcolsep}{5pt}
\input{tables/appendix_experiment_roadmap}
\caption{Roadmap for the mechanism experiments in this appendix.}
\label{tab:appendix_experiment_map}
\end{table}

\subsection{Registered simulation}

The mechanism simulation has five registered mechanisms and three Bernoulli
requirements per release. A claim-valid release has risk 0.02 on every requirement.
A bad release has one risk 0.12 and two risks 0.02. The loss tolerance is
0.10. Boundary mechanisms produce a claim-valid release with probability 0.80;
the power condition uses probability 0.98. We evaluate
$R\in\{10,25,50,75\}$ releases, audit sizes
$n\in\{100,250,500\}$, and 5{,}000 repetitions. We split the 0.05 total error
budget equally between release and mechanism testing, then divide the release
budget equally across the five mechanism-specific families. Each
release has its own simulated audit batch, so the collective mode satisfies
the independence condition in \cref{thm:mechanism_collective}.

We fixed the registry, random seeds, reliability targets, and full design
before the simulation. After inspecting the first registered run, we added
the direct plug-in release-fraction baseline because it was missing from the
original comparison. A dated amendment records this addition. It does not
change the data-generating process or any \method{} result.
\Cref{tab:mechanism_calibration_full,fig:mechanism_control} report the
registered comparison at 50 releases and audit size 500.

\begin{table}[h]
\centering
\small
\setlength{\tabcolsep}{5pt}
\input{tables/mechanism_calibration}
\caption{Mechanism simulation with 50 releases and audit size 500. False
mechanism validation is familywise over five mechanisms with
$\eta=\eta_0=0.8$; power uses $\eta=0.98$.}
\label{tab:mechanism_calibration_full}
\end{table}

\begin{figure}[h]
\centering
\includegraphics[width=.92\textwidth]{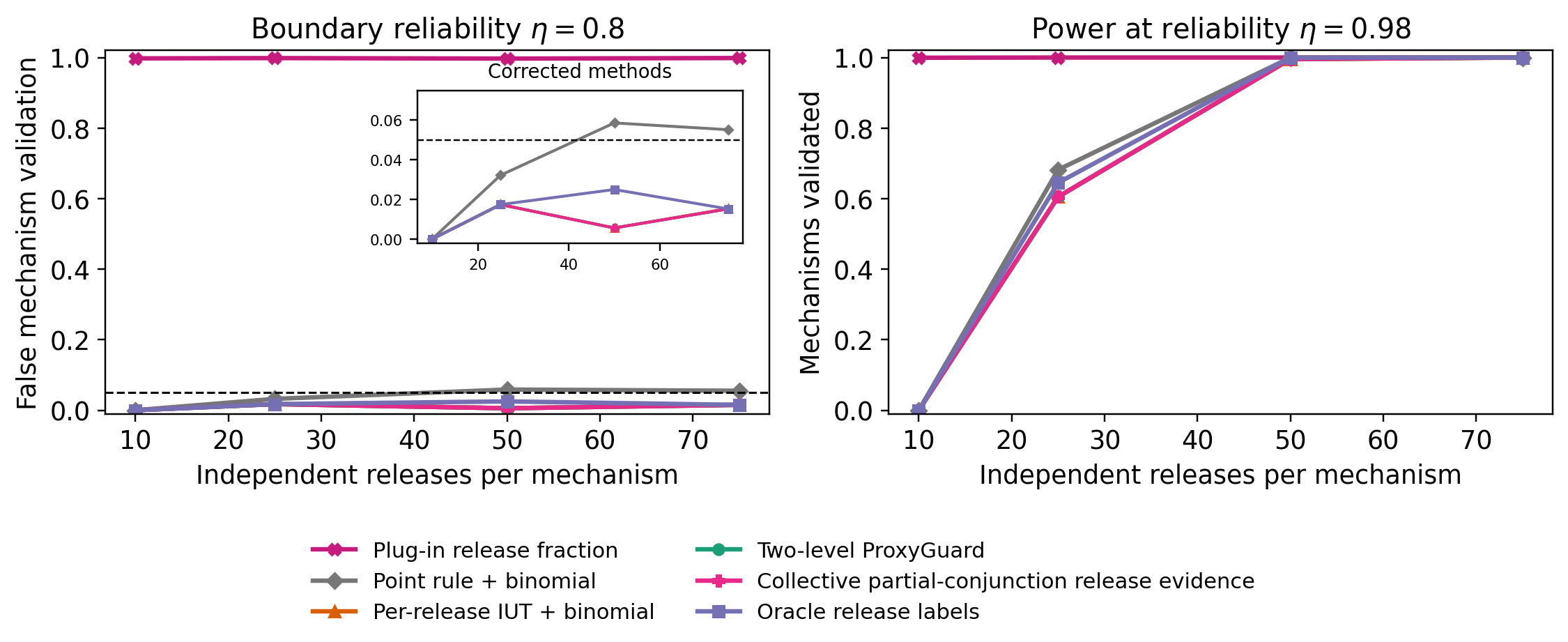}
\caption{Mechanism calibration at audit size 500. The left panel uses
boundary mechanisms; the right uses reliability 0.98. A plug-in passing
fraction carries no uncertainty correction.}
\label{fig:mechanism_control}
\end{figure}

\subsection{Direct shared-target confirmation}

This experiment checks calibration when shared target records make release
scores strongly dependent before conditioning.

The exploratory shared-target grid varied
$R\in\{250,1000\}$, $n\in\{500,1000,2000\}$, and
$\eta\in\{0.80,0.90,0.95\}$. We used it only to select a
multiplicity-limited confirmation cell. We then froze the confirmatory registry
under SHA-256
\texttt{5496f46f...39b087} with a new seed and 1{,}000 repetitions.

The confirmation uses one Bernoulli requirement with limit 0.20. Conditional
on validity, release risk is uniform on $[0.135,0.145]$; otherwise it is
uniform on $[0.205,0.220]$. All 1{,}000 releases use the same 1{,}000 target
uniforms, which induces strong dependence among their empirical losses. The
direct score is one when empirical risk is at most 0.158. Its bounded-loss
ceiling is
\[
\kappa
=
\exp\{-1000\,d_{\mathrm B}(0.158\Vert0.2)\}
=
0.00291.
\]
The target-side allocation is 0.045, so the contamination allowance is
$\kappa/0.045=0.0647$; the remaining 0.005 covers conditional binomial
inference across releases. Named-release Holm uses 0.025 for release
identification and 0.025 for its outer reliability bound.
\Cref{tab:conditional_shared_target} reports the full result. This reversal
between $\eta=0.90$ and $\eta=0.95$ emerged in the frozen confirmatory run and
did not guide cell selection.

\begin{table}[h]
\centering
\small
\setlength{\tabcolsep}{6pt}
\input{tables/conditional_shared_target}
\caption{Validation rates (\%) over 1{,}000 repetitions in the initial
one-requirement shared-target confirmation. The boundary row measures false
validation; the remaining rows measure power. The direct boundary count is
shown explicitly; its exact one-sided 95\% upper rate is 0.30\%.}
\label{tab:conditional_shared_target}
\end{table}

\subsection{Primary confirmatory direct-mode comparison}

This comparison asks which mode has higher power when individual releases
provide moderate evidence and when they provide high-signal evidence.

We ran two further confirmations with the same three requirements: relative
score in $[-1,1]$ with limit 0.10, absolute risk in $[0,1]$ with limit 0.35,
and decision cost in $[0,2]$ with limit 0.80. Valid-release margins and
invalid-release excesses are continuous. All releases share the same target
fluctuation, and the requirements have additional correlated noise. The two
methods tune only their internal error allocation and the direct slack on the
same pilot. We use new seeds for confirmation.

The moderate-evidence design uses a smoothed Bernoulli loss: 98\% of each
normalized loss is a thresholded shared target uniform and 2\% is continuous
requirement-specific noise. Thus every loss is continuous and bounded, while
its variance remains large enough that individual release tests are
moderately informative. Registry SHA-256
\texttt{dd9a608d...02aec0} fixes the pilot-selected allocations before 500
confirmatory repetitions. The high-signal design has lower within-release
variance and registry SHA-256 \texttt{33858a52...a6893e}. It supplies a
counterexample to universal direct-mode gains.

\Cref{fig:direct_multirequirement_power} reports direct minus named power at
reliability 0.95 over the full $(R,n)$ grids. In the moderate-evidence design,
direct inference gains 58.6 percentage points at $R=n=1{,}000$ and 49.6
points at $R=250,n=1{,}000$. With only 500 target records, its contamination
allowance is vacuous; with 2{,}000, named release evidence is often already
decisive. In the high-signal design, direct inference ties in nine of 45
$(R,n,\eta)$ cells and is weaker in the remaining 36. Across both studies,
the largest boundary false-validation rate among the proposed methods is
5.0\%. When no error occurs in 500 trials, the exact one-sided 95\% Monte
Carlo upper bound is 0.60\%.

\subsection{Why the contamination correction is necessary}
\label{app:false_pass_diagnostic}

The direct theorem does not treat a favorable empirical score as proof that a
release is valid. We tested the consequence of dropping that distinction in a
separate boundary experiment. Each mechanism has true reliability
$\eta=\eta_0=0.8$. Valid-release Bernoulli risks are uniform on
$[0.30,0.40]$; invalid risks are uniform on $[0.5005,0.54]$ against limit
0.50. All releases use the same target uniforms. An exploratory grid fixed
$R=250$, $n=1{,}000$, and slack 0.01 before a 2{,}000-trial confirmation with
a new seed. Registry SHA-256 \texttt{9858f894...ed22b6e} records the design.

\begin{table}[h]
\centering
\small
\setlength{\tabcolsep}{6pt}
\input{tables/false_pass_diagnostic}
\caption{Boundary diagnostic for the direct contamination correction. The
score-only row uses the same conditional binomial lower bound as direct mode
but omits $\kappa/\alpha_{\mathrm Z}$. It is an intentionally invalid
diagnostic, not a proposed method.}
\label{tab:false_pass_diagnostic}
\end{table}

\Cref{tab:false_pass_diagnostic} shows that the uncorrected score-only bound
validates 161 of 2{,}000 boundary
mechanisms, or 8.05\%; its exact one-sided 95\% Monte Carlo upper bound is
9.12\%. Invalid releases contribute an average 1.21 percentage points to the
observed score frequency. The registered Chernoff ceiling is 0.819, so the
correct contamination allowance is vacuous in this small-slack regime and
the direct method abstains in every trial; the exact one-sided 95\% upper
bound on its false-validation rate is 0.15\%. The oracle-label reference
validates 4.00\%. This stress test shows why
$\underline\pi_\theta^{\mathrm{CP}}$ alone is not a reliability bound.

\subsection{Smooth target-concentration confirmation}
\label{app:smooth_target_confirmation}

Theorem~\ref{thm:smooth_target_concentration} uses a ramp score to replace
the expectation-only conversion by bounded target sensitivity. We tested a
one-requirement shared-target mechanism with limit 0.8, slack 0.05, valid risk
0.1, and boundary-invalid risk 0.8. A first pilot sought a separation at
reliability 0.95 but found no eligible cell, so we ran no confirmation from
that registry. A second exploratory registry asked the narrower question at
reliability 0.99 and fixed $R=500$, $n=200$, and ramp width 0.6 before a
2{,}000-trial confirmation with a new seed. Registry SHA-256
\texttt{b539abae...70fc57} records the design.

\begin{table}[h]
\centering
\small
\setlength{\tabcolsep}{6pt}
\input{tables/smooth_target_concentration}
\caption{Validation rates (\%) in the smooth-score confirmation. The first
column is the reliability boundary. Both expectation-only rows use the same
error split as the smooth concentration certificate.}
\label{tab:smooth_target_concentration}
\end{table}

The smooth invalid-release ceiling is 0.00544. Markov turns it into 0.2174;
the additive target radius is 0.1601, giving a total allowance of 0.1655.
As \cref{tab:smooth_target_concentration} reports, the new certificate
validates 1{,}729 of 2{,}000 mechanisms at reliability
0.99 while both expectation-only certificates validate none. It has no false
validation in 2{,}000 boundary trials (exact one-sided 95\% upper rate
0.15\%). At reliability 0.95 all three feasible-data certificates remain
unresolved. This is a high-reliability, large-margin stress test, not evidence
that smooth concentration dominates in the moderate-evidence regime.

\subsection{Collective-evidence grid}

This grid asks when independent-batch partial conjunction improves on
identifying releases one at a time.

We froze the collective extension under registry SHA-256
\texttt{526dbac6...b14201} before running the full grid. It uses five
mechanisms, three Bernoulli requirements, tolerance 0.10, and a bad-release
risk of 0.12. The registered reliability target is $\eta_0=0.8$. For each
cell, a boundary draw with $\eta=0.8$ estimates familywise false mechanism
validation. An independent draw with
$\eta\in\{0.90,0.95,0.98\}$ estimates power. We vary valid-release risk over
$\{0.04,0.06,0.08\}$, target audit size over
$\{100,250,500,1000\}$, release count over $\{25,50,100\}$, and true
mechanism reliability in the power draw over
$\{0.90,0.95,0.98\}$. Every cell has 2{,}000 repetitions. Independent
Bernoulli audit batches accompany every release, so a cell uses $nR$
audit observations per mechanism. This experiment does not test the
collective procedure under shared target records.

\Cref{fig:collective_power} reports the complete power-gain surface at
reliability 0.95 rather than selecting one favorable cell. The strongest gain
appears at valid-release risk 0.04 and audit size 500. At audit size 1{,}000,
individual release evidence is already decisive when risk is 0.04, so the
collective procedure adds no power. At risk 0.08, both procedures remain
underpowered throughout this grid. The largest observed familywise false
mechanism-validation rate over the boundary draws and both corrected methods
is 0.018.

\begin{figure}[h]
\centering
\includegraphics[width=.92\textwidth]{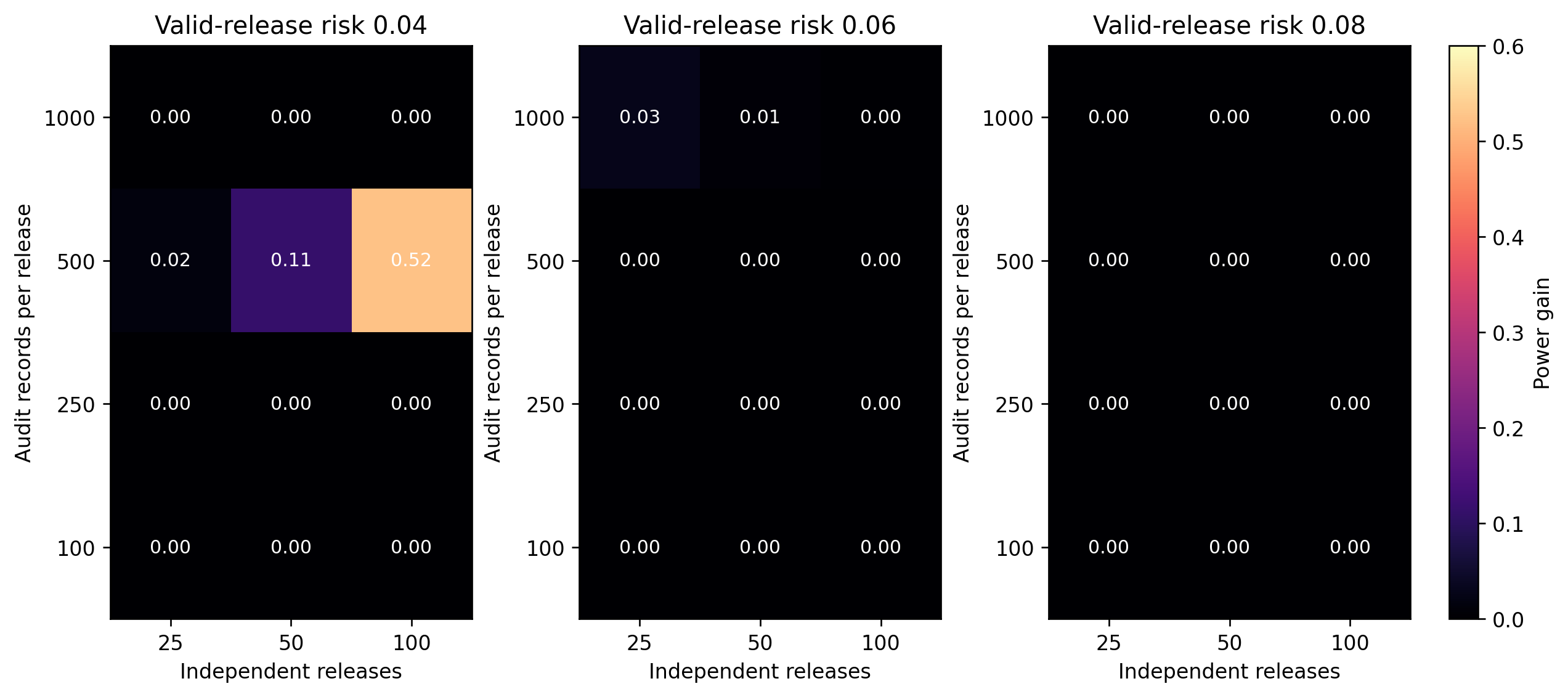}
\caption{Power gained by tail-Simes over named-release Holm with independent
target batches. Releases have reliability 0.95. Each cell reports the power
difference; $nR$ is the target-observation cost per mechanism.}
\label{fig:collective_power}
\end{figure}

After inspecting the registered tail-Simes results, we added a tail-Fisher
partial-conjunction benchmark without changing the data-generating process,
seed, or grid. Amendment SHA-256
\texttt{d9e48b9b...8a04a} records its post hoc status.
\Cref{tab:collective_benchmarks} gives the selected moderate-evidence cell.
Parentheses contain Monte Carlo standard errors computed across the 2{,}000
repetitions. Fisher is more powerful here, but the comparison does not support
a confirmatory ranking of combining rules.

\begin{table}[h]
\centering
\small
\setlength{\tabcolsep}{4pt}
\input{tables/collective_benchmarks}
\caption{Collective-method comparison at valid-release risk 0.04,
$n=500$, $R=100$, and power reliability 0.95. Boundary FWER uses
$\eta=\eta_0=0.8$. We registered tail-Simes; tail-Fisher is a post hoc
benchmark.}
\label{tab:collective_benchmarks}
\end{table}

\subsection{Near-boundary inner correction}

This follow-up tests whether release-level multiplicity correction still
matters when invalid releases lie close to the risk limit.

This registered follow-up uses five mechanisms, 200 releases per mechanism,
three requirements, and 10{,}000 repetitions at audit size 500. The true
mechanism reliability is on the 0.8 boundary. A claim-valid release has risk 0.02
throughout; a bad release has one risk in
$\{0.100,0.101,0.102,0.120\}$ against tolerance 0.10. ``Separate release
IUTs'' tests every release at $\alpha_{\mathrm R}=0.025$ without correcting
across releases. The inner-Holm rule corrects the same family. Both feed
their recognized-release counts to the same outer binomial tail at
$\alpha_{\mathrm M}=0.025$.

\begin{table}[h]
\centering
\small
\setlength{\tabcolsep}{4.5pt}
\input{tables/near_boundary_inner_correction}
\caption{Near-boundary release recognition. ``Any false release label'' is
familywise over all releases from five boundary mechanisms. The last column
is familywise over the five outer mechanism decisions.}
\label{tab:near_boundary_inner}
\end{table}

\Cref{tab:near_boundary_inner} shows that the uncorrected rule makes at least
one false release recognition in most
near-boundary trials. Inner Holm nearly removes this error. The outer
binomial tail is less sensitive because a small number of false release
labels seldom moves the count past its rejection threshold. This experiment
therefore supports the need for inner correction when reporting release-level
decisions, but does not show a large gain in mechanism-level calibration.
Monte Carlo standard errors for the displayed rates are at most 0.25
percentage points.

\subsection{Allocation of the error budget}

We varied $\lambda$ in
$\alpha_{\mathrm R}=\lambda\alpha$ over a registered grid. The setting uses
five mechanisms, 25 releases, audit size 500, bad-release risk 0.102, and
10{,}000 repetitions. We did not select a split after seeing the outcomes.

\begin{table}[h]
\centering
\small
\setlength{\tabcolsep}{4.5pt}
\input{tables/alpha_allocation}
\caption{Sensitivity to the release-layer share of the total 0.05 error
budget. False validation is familywise over five boundary mechanisms; power
uses reliability 0.98.}
\label{tab:alpha_allocation}
\end{table}

\Cref{tab:alpha_allocation} shows that the outer test becomes the bottleneck
when $\lambda$ is large. At 25
releases, $\lambda\geq0.75$ leaves too little outer budget for any mechanism
to cross the first Holm threshold. The observed powers at $\lambda=0.10$ and
$0.25$ are 64.51\% and 64.34\%; their Monte Carlo standard errors are about
0.48 percentage points. This is a planning result for one design, not a
universal allocation rule.

\subsection{Release-count planning}

Proposition~\ref{prop:release_planning} gives a necessary release count before target
sample size enters the calculation. The values below use
$\alpha_{\mathrm M}=0.025$ and the first Holm threshold
$\alpha_{\mathrm M}/H$.

\begin{table}[h]
\centering
\small
\setlength{\tabcolsep}{5pt}
\input{tables/mechanism_planning}
\caption{Minimum number of recognized-valid releases needed when Holm
recognizes every sampled release as valid.}
\label{tab:mechanism_planning}
\end{table}

\Cref{tab:mechanism_planning} makes the release-count bottleneck explicit:
even perfect inner recognition cannot overcome too few mechanism draws.

\subsection{Adaptive candidate stream}

At round $t$, the boundary candidate has one Bernoulli risk equal to 0.10
and two equal to 0.02. A valid candidate has all three risks equal to 0.02.
Every round receives 250 new target records. We compare testing each round
at 0.05 with the schedule
$\alpha_t=6(0.05)/(\pi^2t^2)$ over 5{,}000 repetitions. The main text
reports cumulative false validation and the probability of validating a
valid candidate that first appears in a given round.

\begin{table}[h]
\centering
\small
\setlength{\tabcolsep}{5pt}
\input{tables/adaptive_search}
\caption{Adaptive search with fresh target batches. Power is for a valid
candidate arriving in the named round under quadratic spending.}
\label{tab:adaptive_search}
\end{table}

\Cref{tab:adaptive_search} reports the cost of long candidate streams:
fixed-level testing accumulates false validations, whereas spending preserves
the registered error budget but reduces power in later rounds.

\subsection{AIM and bootstrap mechanisms}
\label{app:mechanism_details}

The AIM mechanism analysis groups the 45 existing releases into nine
dataset--privacy configurations. We fixed $\eta_0=0.8$ after those release
audits existed, so this analysis is retrospective. Each configuration has
only five releases. The table reports separately valid 95\% simultaneous
one-sided lower and upper bounds in \cref{tab:aim_mechanisms}.

\begin{table}[h]
\centering
\small
\setlength{\tabcolsep}{3pt}
\input{tables/aim_mechanism_reliability}
\caption{Two-level retrospective sensitivity analysis of nine AIM
configurations at reliability target $\eta_0=0.8$. Release counts are
validated/unresolved/violation. The displayed bounds include inner release
uncertainty and outer multiplicity.}
\label{tab:aim_mechanisms}
\end{table}

\paragraph{Informed AIM same-table replication.}
The revision registry presampled 25 AIM seeds for Taiwan at $\epsilon=1$.
The first seed also fixed the source, validation, and target split. Before
inspecting any release losses or audit decisions, we recorded a hashed
amendment excluding that release from the binomial mechanism sample.
Conditional on the frozen split, releases 2--25 are independent draws from
the registered seed law; release 1 remains descriptive. The effective outer
sample therefore has 24 releases. We chose this configuration because it
failed in the earlier retrospective analysis, so the study is an informed
replication rather than a new generator search. The new split did not create
an untouched audit reserve, as documented below.

\begin{table}[h]
\centering
\small
\setlength{\tabcolsep}{4pt}
\input{tables/prospective_aim_mechanism}
\caption{Informed AIM same-table replication after the pre-outcome amendment.
Release-count columns list validated/unresolved/violation. The upper endpoint is a
nominal 95\% simultaneous one-sided bound; record overlap prevents a
theorem-backed prospective interpretation.}
\label{tab:prospective_aim}
\end{table}

As \cref{tab:prospective_aim} shows, all 24 eligible releases have a
corrected violation. Their mean AUC change is
$-0.203$ and mean normalized Cost5x regret is $+0.035$. The mechanism-level
violation $p$-value is $1.68\times10^{-17}$, and the simultaneous reliability
upper bound is 0.142 at the separately registered 0.05 violation-error
budget. These calculations describe the registered same-table rerun but do
not inherit the guarantee in \cref{thm:mechanism_violation}.

We registered the first bootstrap control before release generation. It
draws 30 training tables by sampling Taiwan Default source rows with
replacement at the original sample size. Each release chooses its learning
pipeline on development data, and all releases use the sealed target audit.
Mean Brier, clipped-log-loss, and Cost5x regrets are 0.0036, 0.0008, and
0.0059. Their largest simultaneous release upper bounds are 0.0164, 0.0095,
and 0.0281. At the registered 0.01 limits, the procedure neither validates nor
invalidates any of the 30 releases. Both one-sided mechanism bounds are therefore
vacuous.

After that result, we registered an informed positive-control replication.
It uses a new split and 30 new bootstrap seeds. The relative-regret limits
are 0.04. We also register absolute proxy-risk limits of 0.18 for Brier,
0.07 for normalized clipped log loss, and 0.16 for normalized Cost5x.
\Cref{tab:positive_bootstrap} reports the release-level evidence.

\begin{table}[h]
\centering
\small
\setlength{\tabcolsep}{4pt}
\input{tables/positive_bootstrap_requirements}
\caption{Informed bootstrap same-table control. ``Largest upper bound'' is
the largest simultaneous release-level upper bound among 30 releases. All
six limits came from the pilot, and we fixed them before evaluating the new split.}
\label{tab:positive_bootstrap}
\end{table}

All 30 releases validate all six requirements. The outer validation
$p$-value is 0.00124 and the 95\% simultaneous one-sided reliability lower
bound is 0.884, above the registered 0.8 target. We fixed the wider limits
from the earlier pilot before evaluating outcomes for the new split. The
algorithm returns a positive decision on this nonprivate high-fidelity
control, but record reuse prevents a prospective reliability claim. It is
not evidence about synthetic-data privacy.

\begin{table}[h]
\centering
\small
\setlength{\tabcolsep}{5pt}
\input{tables/positive_bootstrap_baselines}
\caption{Target risks used to interpret the absolute ceilings in the
informed positive control. Constant policies choose their decision threshold
on validation Cost5x before researchers open the target.}
\label{tab:positive_bootstrap_baselines}
\end{table}

The baselines in \cref{tab:positive_bootstrap_baselines} show that the
absolute ceilings are safeguards, not proposed deployment standards. Taken
alone, they are loose enough for the training-prevalence policy.
The registered claim is the conjunction of absolute and relative limits:
the constant-0.5 policy exceeds the 0.04 relative limits for Brier and
Cost5x, while the training-prevalence policy exceeds the Cost5x limit.
The source target risks and the simple policies make the strength of the
positive-control claim explicit.

\begin{table}[h]
\centering
\small
\setlength{\tabcolsep}{4pt}
\input{tables/audit_lineage}
\caption{Record lineage for the informed Taiwan reruns. The three overlap
columns partition each replication audit set: every audit record had already
appeared in the pilot split. The pilot seed was 3609; the AIM and bootstrap
rerun seeds were 1155954205 and 4921. Neither rerun has a record-disjoint
target reserve.}
\label{tab:audit_lineage}
\end{table}

We reconstructed \cref{tab:audit_lineage} from the registered split seeds
and stable dataset row positions. We selected the AIM configuration from the
pilot outcome, and the pilot informed the bootstrap limits and
positive-control design. Both reruns were therefore pilot-informed. A fresh
split seed changes the role assigned to a record, but it cannot make a
previously analyzed finite table independent of those choices. The repository
includes the reproducible record-level mapping with the lineage audit.

\section{Claim status across experiments}
\label{app:claim_status}

\begin{table}[!htbp]
\centering
\scriptsize
\input{tables/claim_status}
\caption{Provenance and formal status of every reported experiment.
An untouched target is independent, across the project, from choices
informed by earlier analyses; a different split seed is not sufficient.
The guarantee column is ``Yes'' only when the design supports the conditions
of Assumption~\ref{ass:audit} or the registered stratified extension in
Proposition~\ref{prop:stratified_target}.}
\label{tab:claim_status}
\end{table}

\Cref{tab:claim_status} separates theorem-backed evidence from descriptive and
same-table analyses. We describe relative-only real-data decisions as meeting the registered
relative-transfer requirements. They do not establish absolute target
adequacy. The same distinction applies to rows labeled ``validated'' in
historical output files.
The prospective protocols are in
\cref{app:magic_sealed,app:spambase_sealed,app:neural_audits,app:rice_tvae,%
app:mushroom_direct,app:text_direct}.

\section{Prospective MAGIC sealed audit}
\label{app:magic_sealed}

MAGIC Gamma Telescope contains 19{,}020 simulated event records with ten
real-valued features \citep{bock2004magicuci}. Archive SHA-256
\texttt{252e0a78...617191d} identifies the official UCI download. Reserve
seed 27072701 selected 5{,}706 row positions uniformly without replacement
before we parsed the data; this selection did not use class labels. The
remaining 13{,}314 records formed the development table. Its labels were
used to select the source pipeline and to set the six numerical limits.

The frozen registry has SHA-256
\texttt{d76572a1...25e266db}. It records development seed 27072702, release
seeds 127001--127030, relative limits of 0.04, reliability target 0.8, and
the separate 0.05 release/mechanism error budget. The three absolute
ceilings equal source validation risk plus 0.05, rounded upward to three
decimals: 0.141 for Brier, 0.072 for normalized clipped log loss, and 0.108
for normalized Cost5x. The audit script verifies hashes for the registry,
development table, sealed target, and both position files before reading
target outcomes.

Hadron background is the positive class, so the registered 5:1
false-negative cost penalizes accepting a background event as gamma signal,
the asymmetric error described in the dataset documentation. All 30
bootstrap releases validate all six requirements. The mechanism lower bound
is 0.884 and its Holm-adjusted validation $p$-value is 0.00124. This is a
prospective fidelity result for the frozen nonprivate bootstrap mechanism;
it is not a privacy claim or evidence about a different generator.
\Cref{tab:magic_sealed} reports the six component results.

\begin{table}[!htbp]
\centering
\small
\setlength{\tabcolsep}{3.5pt}
\input{tables/magic_sealed_requirements}
\caption{Prospective MAGIC bootstrap audit. Means are across 30 releases;
``max UCB'' is the largest simultaneous release-level upper bound.}
\label{tab:magic_sealed}
\end{table}

\section{Prospective Spambase--AIM sealed audit}
\label{app:spambase_sealed}

Spambase contains 4{,}601 email records with 57 continuous attributes
\citep{hopkins1999spambase}. The official UCI archive has SHA-256
\texttt{813ac1df...cd1636}. Reserve seed 940728 selected 1{,}840 row
positions before we parsed any outcome; the remaining 2{,}761 records formed
the development table. We reverse the UCI indicator so the positive class is
legitimate email. The registered 5:1 false-negative cost therefore penalizes
a legitimate message classified as spam, following the asymmetry described
in the dataset documentation.

The partition registry has SHA-256
\texttt{4447bbdb...4dc8d}. It fixes 30 release seeds, AIM privacy parameters
$(\epsilon,\delta)=(5,10^{-6})$, a relative-risk tolerance of 0.06, the rule
for setting absolute ceilings, reliability target 0.8, and separate 0.05
budgets for false validation and false violation. Each mechanism draw
retrained AIM and sampled a fresh table. After a full 57-feature fit failed
to finish within 20 minutes, we stopped the run before the sealed target was
read. The 20-minute limit was not preregistered, and the incomplete run
produced no finished proxy release or proxy-utility result. We had observed
only development-side source quantities, which the design permits before the
target audit. To avoid an outcome-guided feature search, the amendment used
the first 12 attributes in the official schema as a deterministic prefix.
A computational amendment with SHA-256
\texttt{e7b07689...667519f} restricted the mechanism to the first 12
features, six bins, maximum model size 20, and 20 marginals. It left the
sealed reserve, release seeds, privacy parameters, claim-setting rule,
reliability target, and error allocation unchanged. The amended audit
registry, frozen after development and before target access, has SHA-256
\texttt{2d835cf1...d2520b0}.

The selected source procedure is a random forest with decision threshold
0.33. Its target AUC is 0.899 and normalized Cost5x risk is 0.041. Across the
30 AIM releases, mean proxy AUC is 0.535 and mean Cost5x risk is 0.100. Every
release has corrected violations of both Brier requirements. The
mechanism-level violation $p$-value is $1.07\times10^{-21}$ and the 95\%
simultaneous one-sided reliability upper bound is 0.116. This is a
prospective negative result for the registered Spambase--AIM configuration.
It does not transfer to another AIM workload, dataset, schema, privacy
setting, target population, or seed law.
The registered component results appear in \cref{tab:spambase_aim}.

All 30 DP fits remained internal audit computations on a public benchmark;
the disclosed result is the audit decision and summary statistics, not 30
synthetic tables. On a sensitive source, every released DP artifact would
require composition accounting. Thirty independent
$(5,10^{-6})$ training runs are not equivalent to one such release. Sampling
30 tables from one fitted private generator would avoid repeated
training-data access by post-processing, but would estimate reliability
conditional on that fitted generator rather than reliability of the complete
retraining mechanism.

\begin{table}[!htbp]
\centering
\small
\setlength{\tabcolsep}{3.5pt}
\input{tables/spambase_aim_requirements}
\caption{Prospective Spambase--AIM component audit. Means are across 30
independent full-pipeline releases; ``Viol.'' counts releases with a
corrected requirement-level violation.}
\label{tab:spambase_aim}
\end{table}

\section{Prospective full-pipeline neural audits}
\label{app:neural_audits}

We ran the same frozen full-pipeline design on three project-new UCI tables:
CDC Diabetes Health Indicators, Covertype, and Online Shoppers
\citep{uci2017cdcdiabetes,blackard1998covertype,sakar2018onlineshoppers}.
We partitioned each table into a source table, a development-only planning
set, and a record-disjoint target reserve before computing any target result.
The reserve fixes equal class counts, so its estimand is the balanced
mixture
$T_{\mathrm{bal}}=\tfrac12T(\cdot\mid Y=0)+
\tfrac12T(\cdot\mid Y=1)$, not natural-prevalence risk. After a review exposed
that distinction, we discarded the original i.i.d.\ analysis and applied the
stratified procedure in Proposition~\ref{prop:stratified_target} with weights
$(1/2,1/2)$. No target outcome, limit, slack, mechanism, or seed changed.
Development selected high-fidelity, moderate, and degraded settings from a
fixed nine-configuration grid. The target remained sealed until every
lightweight random-feature autoencoder and downstream proxy model had been
fitted and hashed. Thus each release redraws generator fitting, sampling, and
proxy learning rather than conditioning on one fitted generator.

The primary requirements are classification error, Brier loss, and normalized
5-to-1 decision cost, each in $[0,1]$. A primary limit is the source
procedure's planning risk plus a pre-target degradation budget, capped by a
fixed absolute ceiling. The dataset-specific budgets were 0.08 for CDC, 0.06
for Covertype, and 0.09 for Online Shoppers. They are audit policy choices,
not estimates of an externally mandated clinical or operational tolerance.
Direct scoring used the separately
registered cutoff $\tau_j-s_j$; the slack never changed $\tau_j$. The
reliability targets were 0.8 for CDC and Covertype and 0.85 for Online
Shoppers. We allocated error globally over the three configurations in each
study.

\begin{table}[H]
\centering
\small
\setlength{\tabcolsep}{4pt}
\input{tables/neural_full_pipeline_audits}
\caption{Corrective stratified analysis of the prospectively collected
lightweight-neural audits. $V/W$ gives named validations and detected
violations; ``Scores'' counts favorable direct scores.}
\label{tab:neural_full_pipeline}
\end{table}

\Cref{tab:neural_full_pipeline} reports every corrected configuration. The
sampling-valid named analysis recognizes 295 of 300 moderate Covertype
releases and gives $L^{\mathrm{named}}=0.948$, above the direct bound 0.885.
The former 0.827 named bound came from treating the fixed balanced reserve as
i.i.d.\ and is not retained. CDC high and moderate also validate under both
modes. All degraded configurations have mechanism-level violation evidence;
Online Shoppers high and moderate remain unresolved. No target result revised
a configuration, limit, slack, or reliability target.

\begin{table}[H]
\centering
\scriptsize
\setlength{\tabcolsep}{3.5pt}
\begin{tabularx}{\textwidth}{@{}lX@{}}
\toprule
Component & Frozen implementation \\
\midrule
Input map & source-min--max scaling; registered integer/categorical encodings \\
Encoder & one random $\tanh$ layer; 4--48 hidden features \\
Training & Gaussian corruption (0.01--0.35); ridge reconstruction solve ($10^{-3}$--0.25) \\
Sampling & balanced labels; class-matched source anchors; Gaussian perturbation; decoder clipping and rounding \\
Configuration grid & nine settings jointly vary width, corruption, sampling noise, label fidelity, and ridge penalty \\
Redrawn per release & hidden weights, corruption, anchors, synthetic table, and proxy fit seed \\
Downstream procedure & standardized logistic regression; fixed 0.5 and $1/6$ decision thresholds \\
\bottomrule
\end{tabularx}
\caption{Architecture of the lightweight neural generator. This is a
random-feature denoising autoencoder, not an adversarial or diffusion
synthesizer.}
\label{tab:neural_architecture}
\end{table}

\Cref{tab:neural_architecture} records the frozen architecture. Release
generation took 14 seconds for 500 CDC draws, 162 seconds for 500
Covertype draws, and 4.9 seconds for 700 Online Shoppers draws on the audit
machine. Target evaluation took 6.3, 6.9, and 9.0 seconds, respectively.
These timings exclude the one-time public-data download and expose the
mechanism-cost axis used by the planner in \cref{tab:cost_planning}.

\section{Prospective Rice--TVAE full-pipeline audit}
\label{app:rice_tvae}

The project-new Rice table contains 3{,}810 grains, seven measured shape
features, and Cammeo/Osmancik labels \citep{cinar2019rice}. Before development,
we fixed 500 source rows, 1{,}000 planning rows, and a record-disjoint
2{,}310-row empirical target population. The audit contains 2{,}000 i.i.d.\
draws with replacement from that sealed population, so its target statement
does not rely on the stratified correction.

Development compared standard generators from the official \texttt{ctgan}
package \citep{xu2019modeling}. Early CTGAN fits and short TVAE fits did not
meet the fixed policy limits. Because the target was still unopened, hashed
amendments expanded optimization and synthetic sample size, then froze a
replicated TVAE selection grid. Each mechanism draw refits TVAE, samples a
new table, and refits standardized logistic regression. The high and moderate
configurations use a 64-dimensional latent representation, two 128-unit
encoder and decoder layers, batch size 500, 2{,}000 synthetic rows, and 200 or
80 epochs. The degraded control trains on 300 rows after 20\% label
corruption. We fixed all release seeds before target access.

The primary limits are 0.151 classification error, 0.1344 Brier loss, and
0.112 normalized 5:1 cost. Each is the source planning risk plus a common
0.08 degradation allowance, capped by a pre-target absolute ceiling. Thus the
error allowance has the direct interpretation of at most eight additional
errors per 100 records; the Brier and normalized-cost allowances use the same
bounded-risk scale. The separate score slack is 0.035, and
$\eta_0=0.75$. One degraded draw produced a single-class table. A hashed
pre-target compute amendment retained that seed and assigned unit loss to all
three requirements rather than silently replacing the failed release.

\begin{table}[H]
\centering
\small
\setlength{\tabcolsep}{3.5pt}
\input{tables/rice_tvae_audit}
\caption{Prospective Rice--TVAE audit. $V/W$ gives named validations and
detected violations. All bounds are simultaneous one-sided 95\% statements
over the three registered configurations.}
\label{tab:rice_tvae}
\end{table}

\Cref{tab:rice_tvae} shows that the high-fidelity mechanism validates under
both modes; direct increases the
lower bound from 0.852 to 0.864. The moderate mechanism remains unresolved,
although direct raises its bound from 0.405 to 0.428. The degraded mechanism
has a named violation-side upper bound of 0.375. Generation took 2{,}103
seconds and target evaluation 6.8 seconds. This is a prospective result for a
recognized standard generator with retraining randomness inside the
mechanism, but it still does not furnish a direct-only decision.

\section{Prospective Secondary Mushroom direct audit}
\label{app:mushroom_direct}

The official Secondary Mushroom archive contains simulated edible and
poisonous mushrooms with 20 measured attributes
\citep{wagner2021secondarymushroom}. We fixed a 10{,}000-record reserve by
uniform row sampling before parsing any outcome. The remaining 51{,}069 rows
formed the development partition. The study uses the 17 categorical features;
we excluded the three continuous features before creating the reserve.
We use one-hot logistic regressions for the source and proxy procedures and
select their Cost5x thresholds on development data.

Two amendments occurred before target access. First, a development-only parse
found an undocumented category. The hashed amendment maps any value outside the
official domain to the already registered unknown label rather than adding an
observed category. Second, the registered AIM pilot failed to complete within
600- and 1{,}800-second infrastructure allowances and produced no model or
pilot output. The reserve remained unopened. A second hashed amendment replaced
that mechanism with a class-conditional categorical generator. The partition,
release seeds, learners, claims, slacks, reliability target, and error budgets
did not change. The partition, mechanism-amendment, and final audit registry
hashes begin \texttt{03b5a2e6}, \texttt{a1c0e7a4}, and
\texttt{8c7a45f4}, respectively.

The private fit releases one class histogram and one feature-by-class
contingency table for each of 17 features. One record changes one count in each
of these 18 tables, so under add-or-remove-one adjacency the vector query has
$\ell_1$ sensitivity 18. Independent Laplace noise with scale $18/20=0.9$
gives pure $\epsilon=20$ differential privacy. Nonnegative clipping, unit
pseudocount smoothing, and the 100 sampled
tables are post-processing. Reliability is therefore conditional on this one
private fit; it does not include retraining randomness.

The development pilot contained 20 post-processing releases. The registered
rule set each score cutoff to the largest pilot mean plus 0.005, rounded
upward to 0.001, then added the fixed direct slack to obtain the validity
limit. The frozen component values and audit means are in
\cref{tab:mushroom_requirements}.

\begin{table}[H]
\centering
\small
\setlength{\tabcolsep}{4pt}
\input{tables/secondary_mushroom_requirements}
\caption{Prospective Secondary Mushroom direct audit. The range is over 100
post-processing releases evaluated on the same 10{,}000 target records.}
\label{tab:mushroom_requirements}
\end{table}

All 100 releases score below all four cutoffs. The one-sided conditional score
lower bound is 0.948396. The invalid-release score ceiling is 0.000323 and the
registered target-side allocation converts it to a 0.007180 contamination
allowance. Thus $L^{\mathrm{dir}}=0.941216>\eta_0=0.8$. Named-release mode also
recognizes all 100 releases and gives a 0.963783 lower bound. Source AUC is
0.938 and mean proxy AUC is 0.869. The private fit took 0.03 seconds; sampling
100 tables and fitting their downstream procedures took 24.9 seconds on the
recorded machine. These costs describe this categorical post-processing
mechanism, not repeated private or neural training.

\section{Descriptive stand-in audits}
\label{app:standin_audits}

These examples have no row correspondence. Bank Marketing uses 2008 as
source, 2009 as proxy, and 2010 as target \citep{moro2014bankdataset}. Heart
trains on Cleveland and Hungary and audits on Switzerland and VA Long Beach
\citep{janosi1989heart}. Adult compares full and reduced schemas on disjoint
cohorts \citep{becker1996adultdataset}.

\begin{table}[!htbp]
\centering
\small
\setlength{\tabcolsep}{4.5pt}
\input{tables/shift_audits}
\caption{Descriptive audits without row correspondence. AUC is diagnostic;
Brier and Cost5x entries are mean proxy-minus-source regrets.}
\label{tab:shifts}
\end{table}

\Cref{tab:shifts} shows that Bank meets the registered relative-transfer
requirements although AUC falls by 0.030; Brier and normalized Cost5x risks
fall by 0.135 and 0.317. Heart remains unresolved with 323 target records.
Adult's reduced schema produces a corrected Brier-regret violation. These
labels apply only to the registered relative requirements and do not inherit
the prospective guarantee because these public records had already informed
the project.

\section{Learning procedure}
\label{app:learning_procedure}

The source and proxy use the same three-pipeline library implemented with
scikit-learn \citep{pedregosa2011scikit}. Logistic regression uses median
imputation, standardization, $C=1$, and at most 2{,}000 solver iterations.
The random forest has 300 trees and a minimum leaf size of three. Histogram
boosting uses learning rate 0.05, 200 iterations, and at most 31 leaves.
For each fitted pipeline, we minimize validation Cost5x over 501 thresholds
between zero and one. We fix the lowest-cost pipeline and its threshold
before opening the target audit. Source and proxy make this choice
independently.
Generator and transformation protocols are in
\cref{app:generator_details,app:transformation_screen}.

\section{Generator details}
\label{app:generator_details}

\paragraph{AIM.}
We treat feature ranges and category domains as public benchmark schema.
They are not estimated as private quantities. The private training split is
60\% of each dataset, validation and target audit splits are 20\% each, and
releases have the same row count as the private training split. AIM uses
degree two, maximum model size 40, 40 marginals, at most 5{,}000 cells, ten
bins, $\delta=10^{-9}$, and five registered release seeds per privacy cell.
The Heart, German, and Taiwan target audits contain 54, 200, and 6{,}000
records \citep{janosi1989heart,hofmann1994germanuci,yeh2009defaultuci}.

\paragraph{TabDDPM.}
We use the official Yandex Research repository at commit
\texttt{b476257dd460b778ba09eb97f7a51d6490fa17f8}. The implementation models
all encoded benchmark features numerically using standard normalization. The MLP has
three hidden layers of width 256 and time-embedding dimension 128. Training
uses 5{,}000 steps, batch size 4{,}096, learning rate $10^{-3}$, and 100
diffusion steps. The run used Python 3.10, PyTorch 2.0.1 with CUDA 11.7, and
an NVIDIA GTX 1650. We fixed these settings before the full run.
\Cref{tab:tabddpm} reports the three resulting release audits.

\begin{table}[!htbp]
\centering
\small
\setlength{\tabcolsep}{4.5pt}
\input{tables/tabddpm_audit}
\caption{Realized-release TabDDPM audit. Brier and Cost5x are mean regrets;
AUC remains diagnostic.}
\label{tab:tabddpm}
\end{table}

\begingroup
\raggedbottom
\section{Attack construction}
\label{app:attack}

For attack $k$, we reserve nonmembers to set a score threshold $c_k$ at a
registered false-positive rate. On a uniform subsample drawn without
replacement from the remaining pools, define
\[
D_{i,Q,k}^{\mathrm{attack}}
=
\mathbf{1}\{s_k(X_i^{\mathrm{member}})\geq c_k\}
-
\mathbf{1}\{s_k(X_i^{\mathrm{nonmember}})\geq c_k\}.
\]
Its finite-pool mean is attack advantage and lies in $[-1,1]$. Hoeffding's
inequality remains valid for uniform sampling without replacement
\citep{serfling1974probability}. Our registered ceiling is 0.05, an
illustrative policy setting rather than a universal definition of safety.

The attack representation uses median imputation, standardization, and at
most ten principal components. We fit it on reference records only.
DOMIAS-KDE is the log density ratio between KDEs fitted to the synthetic and
reference tables. Gen-LRA uses a Gaussian KDE with Silverman's bandwidth and
the ten nearest synthetic neighbors. The density-only attack drops the
reference denominator. DCR is negative distance to the closest synthetic
record.

Each attacker seed draws at most 500 members, 500 nonmembers, 1{,}000
reference records, and 2{,}000 synthetic records without replacement. The
reported intervals use 500 stratified bootstrap repetitions within each
seed. These bootstrap intervals describe score sensitivity and do not enter
the Holm decision.

For the finite-pool advantage test, half of the nonmember sample calibrates
the score threshold. The remaining nonmembers and an equally sized member
sample form bounded detection differences. Seed 911 and an advantage ceiling
of 0.05 remain fixed in the registry.
\Cref{tab:privacy_attacks} summarizes the corrected claim decisions, and
\cref{fig:attack_heatmap} shows the descriptive AUCs across the registered
subsamples.

\begingroup
\setlength{\intextsep}{5pt}
\begin{table}[H]
\centering
\small
\setlength{\tabcolsep}{3.5pt}
\input{tables/privacy_attacks}
\caption{No-box membership results, aggregated by generator and dataset.
We select the highest AUC only for display. The final column counts
validated/unresolved/violation decisions for the registered attack claim.}
\label{tab:privacy_attacks}
\end{table}

\begin{figure}[H]
\centering
\includegraphics[width=0.94\textwidth]{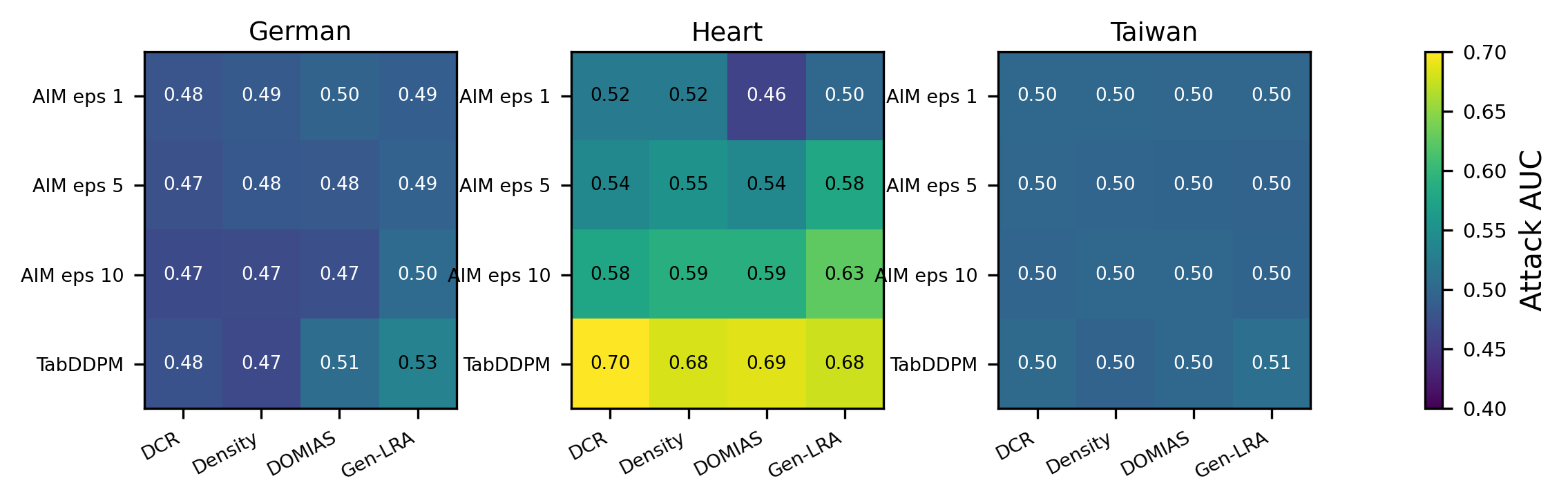}
\caption{Mean membership AUC over ten registered subsamples. Rows are
realized releases; Density denotes density-only KDE. Values near 0.5
indicate little discrimination for that attack, not a general privacy
guarantee.}
\label{fig:attack_heatmap}
\end{figure}
\endgroup

\section{Public transformation screen}
\label{app:transformation_screen}

The earlier transformation screen remains a useful sample-size check. It
uses one stratified 60/20/20 split on ten public binary-classification
datasets from UCI, Kaggle's Give Me Some Credit competition, and ProPublica's
COMPAS analysis \citep{uci2023repository,kaggle2011gmsc,larson2016compas}.
We fix logistic regression and temperature scaling without target access.
We compare each source with 10\% numeric noise, quartile coarsening, and
20\% feature masking, giving 30 candidates in one Holm family.
\Cref{tab:real_audit_full} reports the resulting audit states.

\begingroup
\setlength{\intextsep}{5pt}
\begin{table}[H]
\centering
\small
\renewcommand{\arraystretch}{1.05}
\begin{tabular}{lrrl}
\toprule
Dataset & Audit $n$ & Validated / 3 & Remaining state \\
\midrule
Adult Income & 6{,}513 & 1 & 2 unresolved \\
Australian Credit & 138 & 0 & 3 unresolved \\
Bank Marketing & 9{,}043 & 2 & 1 unresolved \\
Breast Cancer WDBC & 114 & 0 & 3 unresolved \\
COMPAS & 1{,}443 & 0 & 3 unresolved \\
German Credit & 200 & 0 & 3 unresolved \\
Give Me Some Credit & 30{,}000 & 3 & none \\
Heart Disease & 54 & 0 & 3 unresolved \\
Mammographic Mass & 193 & 0 & 3 unresolved \\
Taiwan Default & 6{,}000 & 1 & 2 unresolved \\
\bottomrule
\end{tabular}
\caption{Risk-controlled transformation results. The audit detects no
corrected violation.}
\label{tab:real_audit_full}
\end{table}
\endgroup

\endgroup

\section{Prospective 20 Newsgroups text audit}
\label{app:text_direct}

This project-new audit uses the hockey and space classes from 20 Newsgroups
\citep{lang1995newsweeder}. Development loaded only the official training
subset. A registered split assigned 895 records to source fitting and 298 to
planning; a 500-word source-only vocabulary and MultinomialNB procedure were
fixed through scikit-learn \citep{pedregosa2011scikit}. The frozen
class-conditional unigram model generates labeled count-vector releases. Each
mechanism draw samples a new synthetic corpus and refits the proxy classifier;
it does not refit the unigram generator.

The primary limits equal source planning risk plus registered degradation
budgets of 0.04 for error and normalized 3:1 decision cost and 0.03 for Brier
loss. Registered absolute ceilings cap them. The resulting limits are 0.1205,
0.0789, and 0.0736; the score cutoffs are 0.0905, 0.0589, and 0.0436. Strict
and permissive envelopes were also fixed, but the primary specification alone
determines the headline decision.

Development-only evidence selected high-signal, moderate-evidence, and degraded
configurations by a deterministic margin rule. The original 2{,}000-record
target plan was then found infeasible even under all favorable scores. Before
the test subset was loaded, a hashed amendment increased the i.i.d.\ target
draw to 4{,}000 and changed nothing else. The target population is the empirical
distribution over the 793 official test records, not an unrestricted stream of
future newsgroup posts. We sampled it with replacement using the registered
seed. All 1{,}000 release procedures were generated and hashed before this
target was opened.

\begin{table}[H]
\centering
\small
\setlength{\tabcolsep}{4pt}
\input{tables/text_direct_audit}
\caption{Prospective non-tabular text audit at $\eta_0=0.95$. $V$ is the
number of named recognitions. Bounds are simultaneous over the three registered
configurations.}
\label{tab:text_direct_audit}
\end{table}

\Cref{tab:text_direct_audit} shows that all high and moderate releases are
named, so this audit does not reproduce the
multiplicity-limited direct-only regime. The degraded configuration is
unresolved rather than declared invalid. The expectation-only contamination
allowance is 0.000414. The smooth certificate is vacuous here because its
three-factor target-sensitivity allowance reaches one, illustrating that added
smooth structure does not guarantee a tighter certificate. Release generation
took 35.0 seconds and sealed-target evaluation took 30.7 seconds on the audit
machine. This study broadens the application beyond tabular records but uses a
simple bag-of-words generator, not a modern language model.

\section{Reproducibility}
\label{app:reproducibility}

The repository includes the registry and amendment hashes, package versions,
official repository commits, and SHA-256 digests for every downloaded UCI
table. The sealed MAGIC, Spambase, Mushroom, CDC, Covertype, Online Shoppers,
Rice, and 20 Newsgroups registries also record digests for their development
data, audit reserves, and generated release bundles.
We record the post hoc lineage correction in
\texttt{proxyguard\_target\_lineage\_note\_20260727.json} with its own
SHA-256 file. The main commands are:
\begingroup
\scriptsize
\begin{verbatim}
python scripts/proxyguard/run_proxyguard_calibration_study.py \
  --repetitions 5000
python -m scripts.proxyguard.run_proxyguard_conditional_shared_target \
  --registry registries/proxyguard_conditional_shared_target_confirmatory.json \
  --output-root outputs/proxyguard_conditional_shared_target_confirmatory
python -m scripts.proxyguard.run_proxyguard_direct_multirequirement \
  --registry registries/proxyguard_direct_multirequirement_moderate_confirmatory.json \
  --output-root outputs/proxyguard_direct_multirequirement_moderate_confirmatory
python -m scripts.proxyguard.run_proxyguard_false_pass_diagnostic \
  --registry registries/proxyguard_false_pass_diagnostic_confirmatory.json \
  --output-root outputs/proxyguard_false_pass_diagnostic_confirmatory
python -m scripts.proxyguard.run_proxyguard_smooth_target_concentration \
  --registry registries/proxyguard_smooth_target_concentration_confirmatory.json \
  --output-root outputs/proxyguard_smooth_target_concentration_confirmatory
python -m scripts.proxyguard.run_proxyguard_neural_direct \
  audit --audit-registry registries/<dataset>_audit.json \
  --sampling-correction registries/<dataset>_sampling_correction.json
uv run --with ctgan \
  python -m scripts.proxyguard.run_proxyguard_rice_tvae \
  --audit-registry registries/proxyguard_rice_tvae_audit.json \
  --generation-root outputs/proxyguard_rice_tvae/releases_v3 \
  --audit-root outputs/proxyguard_rice_tvae/audit audit
python -m scripts.proxyguard.run_proxyguard_text_direct_audit pilot
python -m scripts.proxyguard.run_proxyguard_text_direct_audit freeze
python -m scripts.proxyguard.run_proxyguard_text_direct_audit generate
python -m scripts.proxyguard.run_proxyguard_text_direct_audit audit
\end{verbatim}
\endgroup

The repository README lists the complete command inventory, including
prepare, amendment, freeze, generation, and audit stages.

The current prediction-audit code writes original source-row positions,
labels, source and proxy probabilities, selected thresholds, component
$p$-values, and simultaneous bounds. The lineage script reconstructs the
historical pilot and rerun roles from registered seeds and writes both a
summary and a record-level mapping. The risk-control code reads per-record
losses rather than reconstructing evidence from aggregate metrics.

%% file: tables/cost_planning.tex
\begin{tabular}{lrrrrr}
\toprule
Regime & $c_Q/c_Z$ & Mode & $n$ & $R$ & Projected lower bound \\
\midrule
Moderate & 0.01 & direct & 5,000 & 1,000 & 0.935 \\
Moderate & 0.10 & direct & 5,000 & 1,000 & 0.935 \\
Moderate & 1 & direct & 5,000 & 1,000 & 0.935 \\
Moderate & 10 & direct & 5,000 & 500 & 0.927 \\
Moderate & 100 & direct & 5,000 & 30 & 0.779 \\
High signal & 0.01 & named & 250 & 1,000 & 0.969 \\
High signal & 0.10 & named & 250 & 1,000 & 0.969 \\
High signal & 1 & named & 250 & 1,000 & 0.969 \\
High signal & 10 & named & 250 & 500 & 0.964 \\
High signal & 100 & named & 250 & 30 & 0.828 \\
\bottomrule
\end{tabular}

%% file: tables/cost_planning_sensitivity.tex
\begin{tabular}{rrrr}
\toprule
$c_Q/c_Z$ & Mode switch (\%) & Mean regret & Maximum regret \\
\midrule
0.01 & 6.1 & 0.010 & 0.161 \\
0.10 & 6.1 & 0.010 & 0.161 \\
1 & 6.1 & 0.010 & 0.161 \\
10 & 6.1 & 0.010 & 0.165 \\
100 & 8.2 & 0.013 & 0.165 \\
\bottomrule
\end{tabular}

%% file: tables/target_reuse.tex
\begin{tabular}{lrrr}
\toprule
Audit design & False validations & Rate & 95\% Wilson interval \\
\midrule
Select, then reuse target & 2,173/5,000 & 0.435 & [0.421, 0.448] \\
Select, then seal target & 154/5,000 & 0.031 & [0.026, 0.036] \\
Test full family with Holm & 118/5,000 & 0.024 & [0.020, 0.028] \\
\bottomrule
\end{tabular}

%% file: tables/out_of_mechanism.tex
\begin{tabular}{lrrrr}
\toprule
Held-out regret family & Point FWER & Uncorrected FWER & \method{} FWER & \method{} power \\
\midrule
Continuous beta & 0.985 & 0.000 & 0.000 & 0.841 \\
Rare-subgroup mixture & 0.987 & 0.000 & 0.000 & 0.000 \\
Correlated candidates & 0.703 & 0.000 & 0.000 & 0.924 \\
\bottomrule
\end{tabular}

%% file: tables/stratified_subgroup.tex
\begin{tabular}{rrrrrrr}
\toprule
Audit $n$ & Random rare $n$ & Random FWER & Random power &
Strat.\ rare $n$ & Strat.\ FWER & Strat.\ power \\
\midrule
500  & 10.0 & 0.000 & 0.000 & 250  & 0.029 & 0.969 \\
1{,}000 & 20.0 & 0.000 & 0.000 & 500  & 0.049 & 1.000 \\
2{,}500 & 50.0 & 0.018 & 0.090 & 1{,}000 & 0.047 & 1.000 \\
\bottomrule
\end{tabular}

%% file: tables/appendix_experiment_roadmap.tex
\begin{tabularx}{\textwidth}{@{}lXcc@{}}
\toprule
Study block & Purpose & Confirmatory? & Shared target? \\
\midrule
Registered simulation & baseline mechanism calibration & yes & no \\
Initial direct study & direct-mode sanity check & yes & yes \\
Primary comparison & direct versus named-release inference & yes & yes \\
False-pass diagnostic & necessity of the contamination term & yes & yes \\
Smooth concentration & structured target-side correction & yes & yes \\
Collective grid & independent-batch baseline & yes/amended & no \\
Planning studies & multiplicity and audit design & yes & varies \\
Historical studies & mechanism sensitivities & mixed & yes \\
\bottomrule
\end{tabularx}

%% file: tables/mechanism_calibration.tex
\begin{tabular}{lrr}
\toprule
Method & False mechanism validation & Power \\
\midrule
Plug-in fraction & 0.997 & 1.000 \\
Point rule + binomial & 0.059 & 1.000 \\
Per-release IUT + binomial & 0.006 & 0.996 \\
Two-level \method{} & 0.006 & 0.996 \\
Collective partial-conjunction \method{} & 0.006 & 0.996 \\
Oracle release labels & 0.025 & 1.000 \\
\bottomrule
\end{tabular}

%% file: tables/conditional_shared_target.tex
\begin{tabular}{@{}lccc@{}}
\toprule
True reliability & Named Holm & Direct shared-target & Oracle labels \\
\midrule
$0.80$ & $2.2 \pm 0.5$ & $0/1{,}000$ & $4.6 \pm 0.7$ \\
$0.90$ & $79.9 \pm 1.3$ & $69.0 \pm 1.5$ & $1{,}000/1{,}000$ \\
$0.95$ & $80.1 \pm 1.3$ & $\mathbf{89.2 \pm 1.0}$ & $1{,}000/1{,}000$ \\
\bottomrule
\end{tabular}

%% file: tables/false_pass_diagnostic.tex
\begin{tabular}{@{}lrrr@{}}
\toprule
Method & False validation (\%) & Events & Mean lower bound \\
\midrule
Score-only, no correction & 8.05 & 161/2{,}000 & 0.741 \\
Direct shared-target & 0.00 & 0/2{,}000 & 0.000 \\
Oracle release labels & 4.00 & 80/2{,}000 & 0.754 \\
\bottomrule
\end{tabular}

%% file: tables/smooth_target_concentration.tex
\begin{tabular}{lrrr}
\toprule
Method & $\eta=0.80$ & $\eta=0.95$ & $\eta=0.99$ \\
\midrule
Hard score + Markov & 0.00 & 0.00 & 0.00 \\
Ramp score + Markov & 0.00 & 0.00 & 0.00 \\
Ramp + target concentration & 0.00 & 0.00 & 86.45 \\
Oracle release labels & 4.25 & 100.00 & 100.00 \\
\bottomrule
\end{tabular}

%% file: tables/collective_benchmarks.tex
\begin{tabular}{lrrrr}
\toprule
Mechanism input & Boundary FWER & Power & Mean count & Gain over Holm \\
\midrule
Named-release Holm & 0.0000 (0.0000) & 0.1266 (0.0039) & 83.29 & 0.00 \\
Tail-Simes & 0.0005 (0.0005) & 0.6501 (0.0055) & 90.69 & 7.40 \\
Tail-Fisher & 0.0005 (0.0005) & 0.9035 (0.0036) & 92.71 & 9.42 \\
\bottomrule
\end{tabular}

%% file: tables/near_boundary_inner_correction.tex
\begin{tabular}{llrr}
\toprule
Bad-release risk & Inner rule & Any false release label & False mechanism validation \\
\midrule
0.100 & Separate release IUTs & 0.9765 & 0.0237 \\
0.100 & Inner Holm & 0.0131 & 0.0170 \\
0.101 & Separate release IUTs & 0.9520 & 0.0233 \\
0.101 & Inner Holm & 0.0113 & 0.0171 \\
0.102 & Separate release IUTs & 0.9325 & 0.0237 \\
0.102 & Inner Holm & 0.0079 & 0.0188 \\
0.120 & Separate release IUTs & 0.0600 & 0.0169 \\
0.120 & Inner Holm & 0.0001 & 0.0169 \\
\bottomrule
\end{tabular}

%% file: tables/alpha_allocation.tex
\begin{tabular}{rrrrr}
\toprule
$\lambda$ & $\alpha_{\mathrm R}$ & $\alpha_{\mathrm M}$ & False mechanism validation & Power \\
\midrule
0.10 & 0.005 & 0.045 & 0.020 & 0.645 \\
0.25 & 0.013 & 0.037 & 0.019 & 0.643 \\
0.50 & 0.025 & 0.025 & 0.021 & 0.602 \\
0.75 & 0.037 & 0.012 & 0.000 & 0.000 \\
0.90 & 0.045 & 0.005 & 0.000 & 0.000 \\
\bottomrule
\end{tabular}

%% file: tables/mechanism_planning.tex
\begin{tabular}{rrr}
\toprule
Reliability target & Mechanisms screened & All-recognized releases needed \\
\midrule
80\% & 1 & 17 \\
80\% & 5 & 24 \\
80\% & 9 & 27 \\
90\% & 1 & 36 \\
90\% & 5 & 51 \\
90\% & 9 & 56 \\
\bottomrule
\end{tabular}

%% file: tables/adaptive_search.tex
\begin{tabular}{rrrr}
\toprule
Rounds & Fixed-$\alpha$ false validation & Spending false validation & Spending power \\
\midrule
1 & 0.034 & 0.020 & 1.000 \\
10 & 0.271 & 0.030 & 0.914 \\
25 & 0.549 & 0.031 & 0.813 \\
50 & 0.798 & 0.032 & 0.651 \\
\bottomrule
\end{tabular}

%% file: tables/aim_mechanism_reliability.tex
\begin{tabular}{lrrrrl}
\toprule
Dataset & $\epsilon$ & Releases & Release V/U/F & One-sided bounds & Mechanism decision \\
\midrule
German & 1 & 5 & 0/5/0 & [0.00, 1.00] & Unresolved \\
German & 5 & 5 & 0/5/0 & [0.00, 1.00] & Unresolved \\
German & 10 & 5 & 0/5/0 & [0.00, 1.00] & Unresolved \\
Heart & 1 & 5 & 0/5/0 & [0.00, 1.00] & Unresolved \\
Heart & 5 & 5 & 0/5/0 & [0.00, 1.00] & Unresolved \\
Heart & 10 & 5 & 0/5/0 & [0.00, 1.00] & Unresolved \\
Taiwan & 1 & 5 & 0/0/5 & [0.00, 0.69] & Violation \\
Taiwan & 5 & 5 & 0/0/5 & [0.00, 0.69] & Violation \\
Taiwan & 10 & 5 & 0/1/4 & [0.00, 0.84] & Unresolved \\
\bottomrule
\end{tabular}

%% file: tables/prospective_aim_mechanism.tex
\begin{tabular}{lrrrrl}
\toprule
Mechanism & Releases & Release V/U/F & One-sided bounds & Mean Cost5x regret & Decision \\
\midrule
AIM, $\epsilon=1$ & 24 & 0/0/24 & [0.00, 0.14] & +0.035 & Violation \\
\bottomrule
\end{tabular}

%% file: tables/positive_bootstrap_requirements.tex
\begin{tabular}{llrrr}
\toprule
Requirement & Form & Limit & Mean & Largest upper bound \\
\midrule
Proxy Brier risk & Absolute risk & 0.180 & 0.138 & 0.157 \\
Proxy Cost5x risk & Absolute risk & 0.160 & 0.113 & 0.136 \\
Proxy log-loss risk & Absolute risk & 0.070 & 0.032 & 0.039 \\
Brier transfer & Relative regret & 0.040 & 0.004 & 0.017 \\
Cost5x transfer & Relative regret & 0.040 & 0.004 & 0.027 \\
Log-loss transfer & Relative regret & 0.040 & 0.001 & 0.010 \\
\bottomrule
\end{tabular}

%% file: tables/positive_bootstrap_baselines.tex
\begin{tabular}{lrrr}
\toprule
Reference & Brier & Clipped log loss & Cost5x \\
\midrule
Source procedure & 0.135 & 0.031 & 0.109 \\
Constant 0.5 & 0.250 & 0.050 & 0.156 \\
Training prevalence & 0.172 & 0.038 & 0.156 \\
\midrule
Registered ceiling & 0.180 & 0.070 & 0.160 \\
\bottomrule
\end{tabular}

%% file: tables/audit_lineage.tex
\begin{tabular}{lrrrr}
\toprule
Study & Audit $n$ & In pilot train & In pilot validation & In pilot audit \\
\midrule
AIM informed rerun & 6{,}000 & 3{,}623 & 1{,}169 & 1{,}208 \\
Bootstrap informed rerun & 6{,}000 & 3{,}598 & 1{,}184 & 1{,}218 \\
\bottomrule
\end{tabular}

%% file: tables/claim_status.tex
\begin{tabularx}{\textwidth}{@{}>{\raggedright\arraybackslash}p{2.8cm}>{\raggedright\arraybackslash}p{2.5cm}>{\raggedright\arraybackslash}X>{\raggedright\arraybackslash}p{1.45cm}>{\raggedright\arraybackslash}p{1.35cm}@{}}
\toprule
Experiment & Evidence status & Target lineage & Registered & Guarantee \\
\midrule
Release calibration and target-reuse studies & Confirmatory simulation & Fresh simulated audit draws & Yes & Yes \\
Mechanism calibration and planning studies & Confirmatory simulation & Fresh simulated audit draws & Yes & Yes \\
Direct and smooth shared-target confirmations & Confirmatory simulations & Fresh simulated targets shared across releases & Yes & Yes \\
Adaptive candidate stream & Confirmatory simulation & Fresh simulated batch each round & Yes & Yes \\
Temporal site and schema proxies & Descriptive real-data audit & Public benchmarks used earlier in project & Full-run settings only & No \\
Repeated AIM releases & Retrospective sensitivity & Existing public benchmark splits and releases & No & No \\
Initial bootstrap control & Registered same-table run & Taiwan records used in earlier project analyses & Yes & No \\
Informed AIM and bootstrap reruns & Same-table sensitivity & Every audit record appeared in pilot split & Partly & No \\
TabDDPM releases & Registered descriptive audit & Public benchmark targets used earlier in project & Yes & No \\
Public transformation screen & Descriptive stress screen & Public benchmark targets used earlier in project & No & No \\
Membership attack suite & Registered descriptive audit & Developed releases and public record pools & Yes & No \\
MAGIC sealed bootstrap mechanism & Prospective real mechanism audit & Uniform, record-disjoint unlabeled reserve from project-new dataset & Yes & Yes \\
Spambase sealed AIM mechanism & Prospective private mechanism audit & Record-disjoint unlabeled reserve from project-new dataset & Yes; pre-audit amendment & Yes \\
Secondary Mushroom private conditional sampler & Prospective direct mechanism audit & Uniform 10{,}000-record reserve fixed before outcomes were parsed & Yes; two pre-audit amendments & Yes \\
CDC neural configurations & Corrective stratified analysis of a prospective audit & Fixed class strata; equal-mixture target & Protocol yes; sampling correction after review & Yes for corrected analysis \\
Covertype neural configurations & Corrective stratified analysis of a prospective audit & Fixed class strata; equal-mixture target & Protocol yes; sampling correction after review & Yes for corrected analysis \\
Online Shoppers neural configurations & Corrective stratified analysis of a prospective audit & Fixed class strata; equal-mixture target & Protocol yes; sampling correction after review & Yes for corrected analysis \\
Rice--TVAE configurations & Prospective full-pipeline standard-generator audit & I.i.d.\ target sample from a record-disjoint project-new population & Yes; pre-target compute amendments & Yes \\
20 Newsgroups text configurations & Prospective non-tabular mechanism audit & Official test subset first loaded after freeze; i.i.d.\ empirical-target draw & Yes; pre-target size amendment & Yes \\
\bottomrule
\end{tabularx}

%% file: tables/magic_sealed_requirements.tex
\begin{tabular}{lrrrrrr}
\toprule
Metric & Rel. mean & Rel. max UCB & Limit & Abs. mean & Abs. max UCB & Limit \\
\midrule
Brier & 0.0031 & 0.0158 & 0.040 & 0.0973 & 0.1143 & 0.141 \\
Clipped log loss & 0.0006 & 0.0100 & 0.040 & 0.0232 & 0.0301 & 0.072 \\
Cost5x & 0.0027 & 0.0216 & 0.040 & 0.0675 & 0.0871 & 0.108 \\
\bottomrule
\end{tabular}

%% file: tables/spambase_aim_requirements.tex
\begin{tabular}{lrrrrrr}
\toprule
Metric & Rel. limit & Rel. mean & Viol. & Abs. limit & Abs. mean & Viol. \\
\midrule
Brier & 0.060 & 0.1467 & 30/30 & 0.174 & 0.2558 & 30/30 \\
Clipped log loss & 0.060 & 0.0356 & 0/30 & 0.080 & 0.0621 & 0/30 \\
Cost5x & 0.060 & 0.0592 & 1/30 & 0.091 & 0.0998 & 1/30 \\
\bottomrule
\end{tabular}

%% file: tables/neural_full_pipeline_audits.tex
\begin{tabular}{llrrrrl}
\toprule
Dataset & Configuration & $R$ & Named $V/W$ & Scores & $L_{\rm named}$ & $L_{\rm dir}$ \\
\midrule
CDC & high & 100 & 100/0 & 99 & 0.938 & 0.917 \\
CDC & moderate & 300 & 295/0 & 292 & 0.948 & 0.931 \\
CDC & degraded & 100 & 0/85 & 0 & 0 & 0 \\
Covertype & high & 100 & 100/0 & 100 & 0.938 & 0.910 \\
Covertype & moderate & 300 & 295/0 & 288 & 0.948 & 0.885 \\
Covertype & degraded & 100 & 0/100 & 0 & 0 & 0 \\
Online Shoppers & high & 100 & 60/0 & 52 & 0.449 & 0.363 \\
Online Shoppers & moderate & 500 & 5/0 & 30 & 0.002 & 0.017 \\
Online Shoppers & degraded & 100 & 0/44 & 0 & 0 & 0 \\
\bottomrule
\end{tabular}

%% file: tables/rice_tvae_audit.tex
\begin{tabular}{lrrrrrl}
\toprule
Configuration & $R$ & Named $V/W$ & Scores & $L_{\rm named}$ & $L_{\rm dir}$ & Decision \\
\midrule
High fidelity (200 epochs) & 40 & 40/0 & 40 & 0.852 & 0.864 & both validate \\
Moderate (80 epochs) & 120 & 65/1 & 67 & 0.405 & 0.428 & unresolved \\
Degraded & 40 & 2/34 & 2 & 0.001 & 0 & violation ($U=0.375$) \\
\bottomrule
\end{tabular}

%% file: tables/secondary_mushroom_requirements.tex
\begin{tabular}{lrrrr}
\toprule
Requirement & Score cutoff & Validity limit & Mean & Range across releases \\
\midrule
Brier transfer & 0.060 & 0.100 & 0.0474 & [0.0433, 0.0519] \\
Absolute Brier & 0.162 & 0.182 & 0.1467 & [0.1427, 0.1513] \\
Cost5x transfer & 0.041 & 0.081 & 0.0225 & [0.0165, 0.0320] \\
Absolute Cost5x & 0.097 & 0.117 & 0.0794 & [0.0733, 0.0889] \\
\bottomrule
\end{tabular}

%% file: tables/shift_audits.tex
\begin{tabular}{lrrrrl}
\toprule
Proxy relation & $n$ & $\Delta$AUC & Brier & Cost5x & Decision \\
\midrule
Bank Marketing temporal & 2,620 & -0.030 & -0.135 & -0.317 & Rel. validated \\
Heart Disease site & 323 & -0.062 & +0.014 & +0.104 & Unresolved \\
Adult reduced schema & 8,141 & -0.044 & +0.030 & +0.012 & Violation \\
\bottomrule
\end{tabular}

%% file: tables/tabddpm_audit.tex
\begin{tabular}{lrrrrl}
\toprule
Dataset & $n$ & $\Delta$AUC & Brier & Cost5x & Decision \\
\midrule
Heart & 54 & +0.015 & -0.015 & -0.011 & Unresolved \\
German & 200 & -0.035 & +0.011 & +0.021 & Unresolved \\
Taiwan & 6,000 & -0.044 & +0.037 & +0.010 & Violation \\
\bottomrule
\end{tabular}

%% file: tables/privacy_attacks.tex
\begin{tabular}{llrrrl}
\toprule
Generator & Dataset & Releases & Highest AUC & TPR@1\% & Decision counts \\
\midrule
AIM & German & 3 & 0.505 (Gen-LRA) & 0.023 & 0/3/0 \\
AIM & Heart & 3 & 0.626 (Gen-LRA) & 0.093 & 0/3/0 \\
AIM & Taiwan & 3 & 0.502 (density-only KDE) & 0.008 & 0/3/0 \\
TabDDPM & German & 1 & 0.533 (Gen-LRA) & 0.016 & 0/1/0 \\
TabDDPM & Heart & 1 & 0.702 (DCR) & 0.346 & 0/1/0 \\
TabDDPM & Taiwan & 1 & 0.510 (Gen-LRA) & 0.016 & 0/1/0 \\
\bottomrule
\end{tabular}

%% file: tables/text_direct_audit.tex
\begin{tabular}{lrrrrl}
\toprule
Configuration & $R$ & Named $V$ & $L^{\mathrm{named}}$ & $L^{\mathrm{dir}}$ & Decision \\
\midrule
High signal & 250 & 250 & 0.9810 & 0.9743 & Both validate \\
Moderate evidence & 500 & 500 & 0.9905 & 0.9869 & Both validate \\
Degraded & 250 & 54 & 0.1569 & 0.0701 & Unresolved \\
\bottomrule
\end{tabular}